\newcommand{\ra}[1]{\renewcommand{\arraystretch}{#1}}
\DeclareMathOperator*{\argmin}{argmin}
\DeclareMathOperator*{\argmax}{argmax}
\newcommand{\pluseq}{\mathrel{+}=}
\newcommand{\minuseq}{\mathrel{-}=}
\renewcommand{\vec}[1]{\bm{#1}}
\newtheorem{claim}{Claim}
\title{Learning-Based Proxy Collision Detection for Robot Motion Planning Applications}
\author{Nikhil Das and Michael Yip\thanks{Nikhil Das and Michael Yip are with the Department of Electrical and Computer Engineering, University of California San Diego, La Jolla, CA 92093 USA. {\tt\small \{nrdas, yip\}@ucsd.edu}}}
\begin{document}
\maketitle

\begin{abstract}
This paper demonstrates that collision detection-intensive applications such as robotic motion planning may be accelerated by performing collision checks with a machine learning model. We propose Fastron, a learning-based algorithm to model a robot's configuration space to be used as a proxy collision detector in place of standard geometric collision checkers. We demonstrate that leveraging the proxy collision detector results in up to an \textit{order of magnitude} faster performance in robot simulation and planning than state-of-the-art collision detection libraries. Our results show that Fastron learns a model more than 100 times faster than a competing C-space modeling approach, while also providing theoretical guarantees of learning convergence. Using the OMPL motion planning libraries, we were able to generate initial motion plans across all experiments with varying robot and environment complexities. With Fastron, we can repeatedly perform planning from scratch at a 56 Hz rate, showing its application toward autonomous surgical assistance task in shared environments with human-controlled manipulators. All performance gains were achieved despite using only CPU-based calculations, suggesting further computational gains with a GPU approach that can parallelize tensor algebra. Code is available online\footnote{\href{https://github.com/ucsdarclab/fastron}{https://github.com/ucsdarclab/fastron}}.
\end{abstract}

\section{Introduction}
Motion planning, the task of determing a path for a robot from a start to a goal position while avoiding obstacles, is a requirement for almost all applications in which a robot must move in its environment. Motion planning for robots is often performed in its configuration space (C-space), a space in which each element represents a unique configuration of the robot \cite{Choset2005}. The joint space of a robot manipulator is an example C-space, as a set of joint positions may be sufficient to fully define the pose of a robot manipulator in the workspace. The dimensionality of the C-space matches the number of controllable degrees of freedom (DOF) of the robot.

The workspace may include restrictions on the set of feasible robot configurations. Examples include workspace obstacles (such as the three cubes in Fig. \ref{fig:exampleWorkspace} restricting the 3 DOF arm), self-collisions, or joint limits. For a given workspace, each configuration in the C-space can belong to one of two subspaces: $\mathcal{C}_{free}$ or $\mathcal{C}_{obs}$. A configuration is in $\mathcal{C}_{free}$ if the robot is not in contact with any workspace obstacle when in that corresponding configuration; otherwise, the configuration is in $\mathcal{C}_{obs}$ \cite{Choset2005, Lozano1983}. The robot and obstacles in Fig. \ref{fig:exampleWorkspace} are represented as the point and amorphous bodies in the C-space representation in Fig. \ref{fig:exampleCspace}, respectively. The advantage of planning in C-space is it is easier to plan how to move a point than it is to move bodies with volume \cite{Choset2005}.

On the other hand, a disadvantage of the concept of C-space is no simple closed-form expression usually exists to perfectly separate $\mathcal{C}_{obs}$ from $\mathcal{C}_{free}$ \cite{Lozano1983,Burns2005,Pan2015}. Determining in which subspace a single configuration belongs typically requires discrete collision detection between the robot and the obstacles \cite{Pan2015}. For actual robotic systems, in addition to performing tests for intersection of robot links with workspace obstacles, the collision detection cycle also includes performing forward kinematics to determine where robot geometry would be located and using sensor readings to obtain obstacle information. Repeated queries to a collision checker is computationally expensive, taking up to 90\% of computation time for sampling-based motion planning \cite{Elbanhawi2014}. Apart from motion planning \cite{Choset2005, Elbanhawi2014}, applications that may require numerous collision checks include reinforcement learning \cite{Duguleana2012}, robot self-collision analysis \cite{Kuffner2002}, and robot simulations \cite{Rohmer2013}.

\begin{figure}[t!]
	\centering
	\begin{subfigure}[b]{0.48\linewidth}
      {\includegraphics[width=\linewidth,trim={2cm 1.7cm 1cm 2cm},clip]{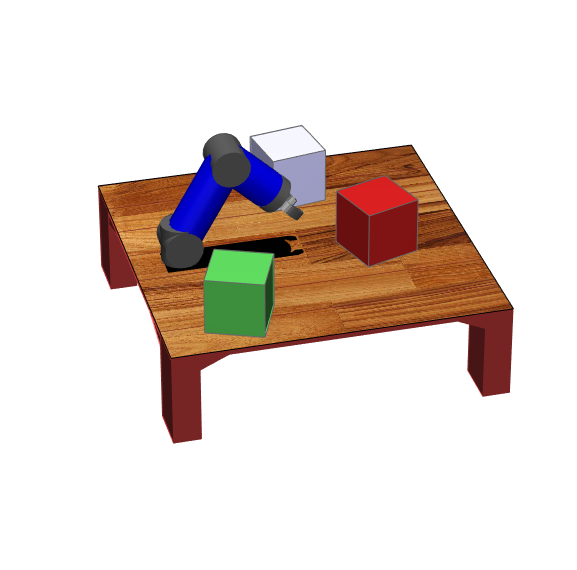}}
      \caption{Workspace}
      \label{fig:exampleWorkspace}
    \end{subfigure}
	\begin{subfigure}[b]{0.48\linewidth}
	  {\includegraphics[width=\linewidth,trim={0cm 0cm 0cm 0cm},clip]{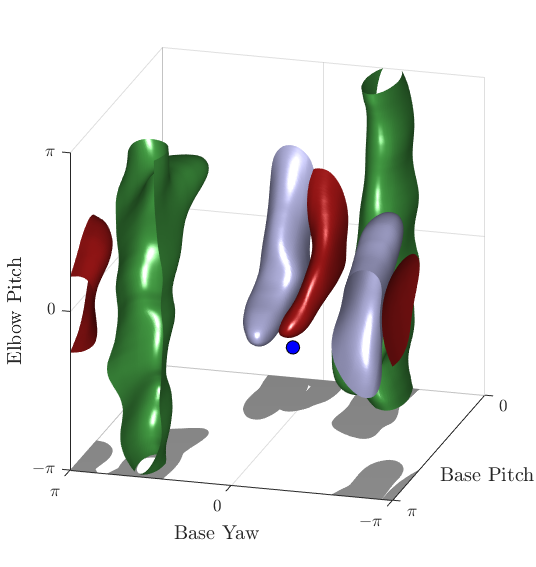}}
      \caption{Configuration space}
      \label{fig:exampleCspace}
    \end{subfigure}
    \caption{Example workspace with a 3 DOF robot and multiple cube obstacles (left) and its corresponding C-space model generated using the Fastron algorithm (right). Each workspace obstacle on the left and its corresponding C-space obstacle on the right match in color. C-space obstacles due to the table are excluded to improve clarity. The robot's current configuration is represented as the blue point on the right.}
    \label{fig:exampleWorkspaceCspace}
\end{figure}

\subsection{Contributions}
In this paper, we present a novel approach to collision detection using machine learning techniques and employ this approach to motion planning applications. Building and querying a computationally cheaper model to bypass the collision detection cycle will relieve significant computational burden due to collision checking, allowing faster sampling-based motion planning.

The subspaces of C-space may be modeled using machine learning techniques, allowing the model to serve as a proxy to the collision checking cycle. Moving the obstacles in the workspace will invalidate any C-space model trained on a specific static environment. Our previous work introduced a fast training algorithm, Fastron, that efficiently checks for changes in the C-space due to moving workspace obstacles \cite{Das2017}. The training procedure of Fastron was inspired by the kernel perceptron algorithm \cite{Freund1999} due its simplicity in implementation and its ability to model training points in an online manner; modifications were made to make training fast and allow efficient adaptation to a dataset with labels that change over time.

The original Fastron algorithm worked with a fixed dataset of robot configurations $\mathcal{X}$ and relabeled the collision statuses of the configurations when workspace obstacles moved to avoid having to generate new configurations and having to recompute the distance matrix needed for model updates. A disadvantage of retaining a fixed dataset is that a large amount of memory is required to store $\mathcal{X}$ and its associated distance matrix. The amount of memory required was often much larger than necessary because typically only a small subset of $\mathcal{X}$ is needed during configuration classification. Furthermore, using a fixed dataset did not allow augmenting $\mathcal{X}$ with new data, causing the model to be dependent on the original distribution of random samples.

While Fastron originates from our prior work, this paper revamps the algorithm and provides significant theoretical and empirical validation of Fastron for proxy collision detection. In this paper, we address the disadvantages of a fixed dataset by using a dynamically changing dataset to allow resampling and a reduced memory footprint while addressing the scalability of the system to complex environments. We also introduce lazy Gram matrix evaluation and utilize a cheaper kernel to further improve training and classification time. We analytically show our training algorithm will always create a model yielding positive margin for all training points. Finally, we demonstrate the capabilities of Fastron empirically by showing that Fastron allows faster motion planning on a variety of robots and sampling-based motion planners. Results shown in this paper for Fastron show positive results with only CPU-based calculations. Components of this algorithm such as query classification and dataset labeling are easily parallelizable to potentially yield even further improvements.

\subsection{Related Work}
Previous works have utilized machine learning to model the subspaces of C-space or to bypass or accelerate collision checking in motion planning.

Pan et al. \cite{Pan2015} use incremental support vector machines to represent an accurate collision boundary in C-space for a pair of objects and an active learning strategy to iteratively improve the boundary. This method is suitable for changing environments because moving one body relative to another body's frame is simply represented as translating a point in configuration space. However, a new model is required for each pair of objects, and each model must be trained offline.

\begin{figure}[t]
  \centering
  {\includegraphics[width=\linewidth]{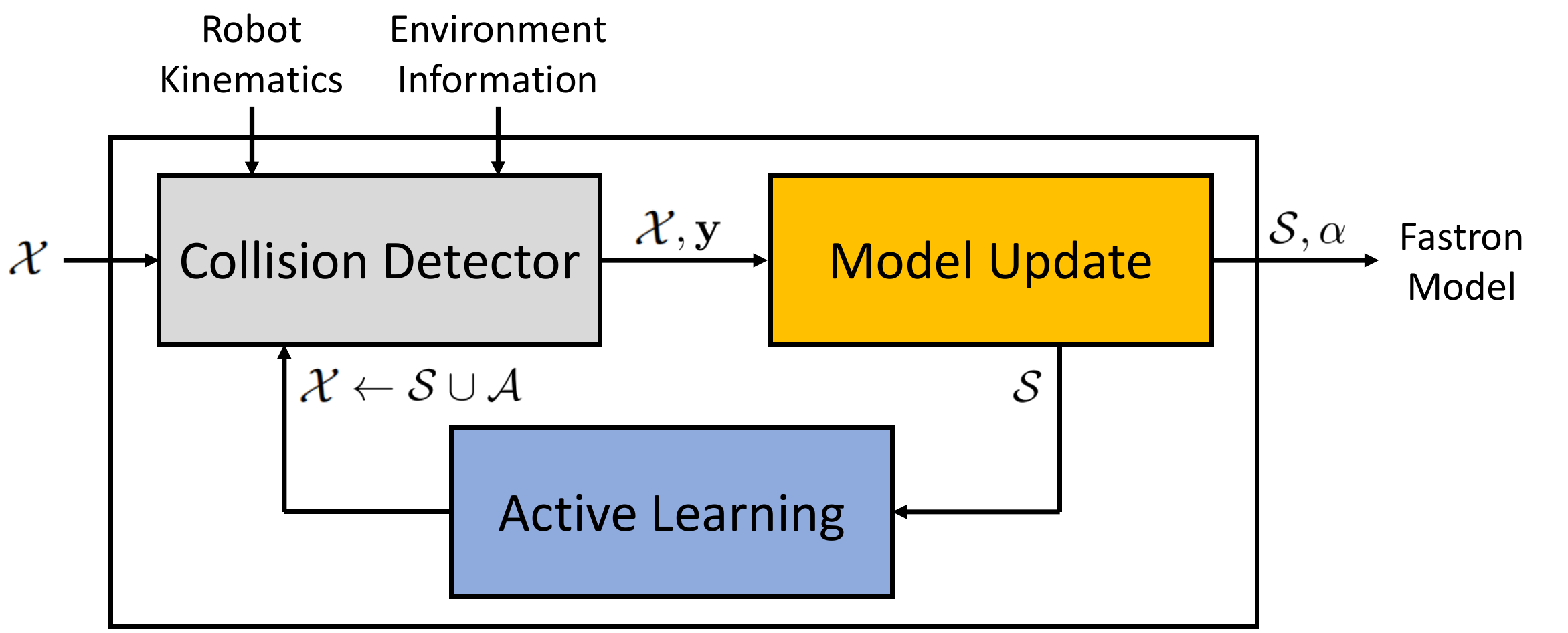}}
  \caption{Block diagram illustrating the pipeline of the Fastron algorithm. The collision detector assigns true labels $\vec{y}$ for the points in $\mathcal{X}$. The model update determines the weights $\vec{\alpha}$ and support set $\mathcal{S}$ which may be used for proxy collision detection. The active learning strategy augments $\mathcal{S}$ with a set of unlabeled configurations $\mathcal{A}$ before the cycle repeats. }
  \label{fig:blockDiagram}
\end{figure} 

Huh et al. \cite{Huh2016} use Gaussian mixture models (GMMs) to represent in-collision and collision-free regions specifically for use in motion planning using Rapidly-Exploring Random Trees (RRTs) \cite{LaValle2008}. The GMMs are used both as a proxy collision detector by labeling query points according to Mahalanobis distances from the GMM means and as sampling distributions when finding new nodes for RRT tree expansion. Their results show their GMM-based RRTs can generate motion plans up to 5 times faster than a bidirectional RRT. A downside to this algorithm is the RRT routine must be called repeatedly to generate enough data to achieve consistent proxy collision detector performance, e.g., 3 to 4 RRT calls were required in their algorithm execution time trials. Relying on repeated RRTs for data collection may slow performance, especially when the environment is continuously changing. Furthermore, a downside to GMMs is the number of components in the mixture is fixed, which indicates it may be more difficult to model all possible C-space obstacles for a given robot than it would be when using a nonparametric model.

KNNs have also been used for C-space modeling \cite{Burns2005} and proxy collision detection \cite{Pan2016}, but have only been implemented in static environments. Using KNNs for collision detection in sampling-based motion planning may generate motion plans up to 2 times faster \cite{Pan2016}. Disadvantages of the KNN approach include requiring to store the entire training set and not being able to easily adapt for a changing environment.

Neural networks have been applied to perform collision detection for box-shaped objects and have achieved suitably low enough error to calculate collision response in physics simulations \cite{Garcia2002}. A disadvantage of the neural network approach is there is typically no formulaic method to determine the optimal set of parameters for neural networks, which in this case required training thousands of networks to find the best-performing network. A significantly large amount of data was required to train and cross-validate the models. Finally, this method has only been tried on box obstacles, suggesting a new network must be trained for other objects.

Qureshi et al. proposes a potentially transformative approach to motion planning and bypasses collision checking during motion planning runtime by directly generating waypoints with MPNet, a pair of neural networks that encodes the workspace and generates feasible motion plans \cite{Qureshi}. Motion plans may be generated up to 100 times faster with MPNet than the state-of-the-art BIT* motion planning method. One limitation is the excessive amount of data needed to train MPNet.

The purpose for the Fastron algorithm is to provide a global C-space model that can adapt to a changing workspace. As an online algorithm, all datapoints and all training are achieved during runtime, enabling adaptability to various or changing environments without requiring a large amount of a priori data or requiring multiple instances of models to account for various scenarios. Additionally, Fastron directly adapts its discriminative model to correct misclassified configurations rather than relying on samples to influence generative models, which allows efficient updates of the decision boundary separating $\mathcal{C}_{obs}$ from $\mathcal{C}_{free}$ in continuously changing environments.

\section{Methods}
In this section, we describe the components of the Fastron algorithm. We begin by describing the binary classification problem before going into the details of how Fastron trains and updates a model. These details also include a proof that this algorithm will eventually always find a model that correctly classifies all points in the training dataset. We finally address some practical aspects of the algorithms to consider when implementing this algorithm. The block diagram in Fig. \ref{fig:blockDiagram} shows the pipeline of the entire algorithm.

\subsection{Binary Classification}
The goal of training a binary classifier is to find a model whose output predicts in which of two classes a query point belongs. The parameters and weights chosen for the model are based on a training set $\mathcal{X}=\{\mathcal{X}_1, \ldots, \mathcal{X}_N\}$ and its associated training labels $\vec{y}$. In this paper, the elements of $\mathcal{X}$ are $d$-dimensional robot configurations, e.g., the $d$ joint angles defining a robot manipulator's position. We assume further that these joint angles may be scaled to a subspace of $\mathbb{R}^d$\footnote{We apply joint limits to C-spaces that involve 1-sphere $\mathbb{S}^1$ spaces, e.g., for the 3 DOF robot in Fig. \ref{fig:exampleWorkspace}, the base yaw and elbow pitch revolute joints are limited to $(-\pi,\pi]$ and the base pitch joint is limited to $[0,\pi]$.}. We use the label $\vec{y}_i=+1$ to denote $\mathcal{X}_i$ is in the in-collision class and $\vec{y}_i=-1$ to denote $\mathcal{X}_i$ is in the collision-free class.

Since the labels are $\pm 1$, one possible model for prediction may be $sign(f(\vec{x}))$, where $f:\mathbb{R}^d\rightarrow \mathbb{R}$ is some hypothesis function. The hypothesis function may be $f(\vec x) = \sum_i\langle\phi(\mathcal{X}_i),\phi(\vec{x})\rangle_\mathcal{F}\vec{\alpha}_i$ where $\phi : \mathbb{R}^d \rightarrow \mathcal{F}$ is a mapping to some (often higher-dimensional) feature space $\mathcal{F}$ and weight vector $\vec{\alpha} \in \mathbb{R}^N$ contains a weight for each $\mathcal{X}_i\in \mathcal{X}$. This hypothesis can be interpreted as a weighted sum comparing the query configuration to each training configuration and is used in discriminative models such as kernel perceptrons \cite{Freund1999} and support vector machines \cite{Cortes1995, Diehl}. $f(\vec x)$ may be written in matrix form for the configurations in $\mathcal{X}$ as $\vec F = \vec{K\alpha}$ where $\vec{F}_i=f(\mathcal{X}_i)\ \forall \mathcal{X}_i\in\mathcal{X}$ and $\vec K$ is the Gram matrix for $\mathcal{X}$ where $\vec{K}_{ij}=\langle\phi(\vec{x}),\phi(\vec{x}')\rangle_\mathcal{F}$.

The goal of the Fastron learning algorithm is to find $\vec{\alpha}$ such that $sign(\vec{F}_i) = \vec{y}_i \ \forall i$. Alternatively, the goal is to find $\vec{\alpha}$ such that the margin $m(\mathcal{X}_i) = \vec{y}_i\vec{F}_i > 0 \ \forall i$. Only the configurations in $\mathcal{X}$ with a nonzero weight $\vec{\alpha}_i$ are needed to compute a label prediction, and the rest may be discarded after an $\vec{\alpha}$ satisfying the positive margin condition is found. The configurations with nonzero weights comprise the support set $\mathcal{S} \subseteq \mathcal{X}$ of the model. Once the Fastron model is trained, $sign(f(\vec{x}))$ may be used as a proxy to performing collision detection for query configuration $\vec{x}$.

%For Fastron, we use the hypothesis $f(\vec x) = \sum_ik_{RQ}(\mathcal{X}_i,\vec x)\vec{\alpha}_i$, where weight vector $\vec{\alpha} \in \mathbb{R}^N$ contains a weight for each $\mathcal{X}_i\in \mathcal{X}$.

\subsection{Kernel Function}
\label{sec:kernel}

\begin{figure}[b!]
	\centering
{\includegraphics[width=\linewidth, trim={0cm 0.8cm 0.3cm 0cm}, clip]{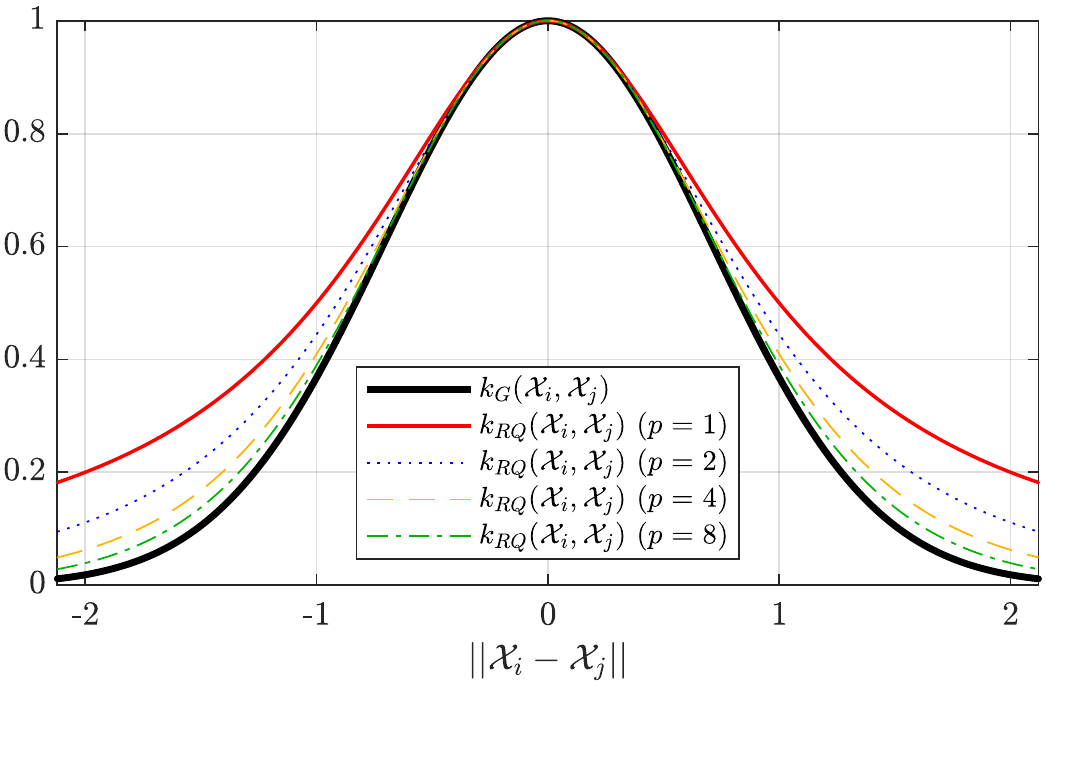}}
  \caption{The rational quadratic kernel $k_{RQ}(\cdot, \cdot$) is an approximation of the more expensive Gaussian kernel $k_{G}(\cdot, \cdot)$. $\gamma = 1$ in this plot.}
  \label{fig:kernelShapes}
\end{figure}

A kernel function $k(\vec{x},\vec{x}')=\langle\phi(\vec{x}),\phi(\vec{x}')\rangle_\mathcal{F}$ compares a configuration $\vec{x}$ to $\vec{x}'$ by mapping to some feature space $\mathcal{F}$ and taking an inner product. $k(\cdot,\cdot)$ should provide a large score for two similar configurations and a low score for dissimilar configurations. The Gram matrix $\vec{K}$ where $\vec{K}_{ij}=k(\mathcal{X}_i,\mathcal{X}_j)$ is thus a similarity matrix and is a useful tool in machine learning techniques, such as support vector machines \cite{Cortes1995, Diehl} and kernel perceptrons \cite{Freund1999}, where similarity between two samples may suggest they share classification labels.

A popular kernel function is the Gaussian kernel, defined as $k_G(\vec{x},\vec{x}') = exp(-\gamma\|\vec{x}-\vec{x}'\|^2)$, where $\gamma > 0$ is a parameter dictating the width of the kernel. $k_G(\cdot,\cdot)$ is often considered the default kernel choice for kernel-based machine learning applications when there is limited prior knowledge of the underlying structure of the data \cite{Smola1998}.

$k_G(\cdot,\cdot)$ is known to be a positive semidefinite kernel \cite{Rasmussen2006}, which means $\vec{K}$ is a positive definite matrix when each configuration in $\mathcal{X}$ is unique \cite{Hofmann2008}. According to Mercer's theorem, for each positive semidefinite kernel, there exists a mapping $\phi : \mathbb{R}^d \rightarrow \mathcal{H}$ where $\mathcal{H}$ is a Hilbert space such that $k(\vec{x},\vec{x}') =\langle\phi(\vec{x}),\phi(\vec{x}')\rangle_\mathcal{H}$ \cite{Minh}. Thus, there is no need for explicit mapping to a feature space when utilizing the Gaussian kernel function; computing the inner product without explicitly mapping to a feature space is known as the kernel trick. The kernel trick is useful when learning a classifier that requires richer features than provided in the input space $\mathbb{R}^d$ but explicit mapping to a higher-dimensional may be difficult or time-consuming. As the shapes of $\mathcal{C}_{free}$ and $\mathcal{C}_{obs}$ are typically complex for robotic applications, this implicit mapping to a richer feature space allows easier separation of the two classes.

While the Gaussian kernel is popular for machine learning applications, exponential evaluations are computationally expensive operations, which can slow down training and classification especially for large $N$. Exponentiation can be avoided by representing the Gaussian kernel with a limit:
\begin{equation}
k_{G}(\vec{x},\vec{x}') = \lim_{p\rightarrow\infty}\left(1+\frac{\gamma}{p}\|\vec{x}-\vec{x}'\|^2\right)^{-p}
\end{equation}

For finite values of $p$, the above kernel is the rational quadratic kernel \cite{Rasmussen2006}, which we represent as $k_{RQ}(\cdot,\cdot)$. The rational quadratic kernel may alternatively be derived as a superposition of Gaussian kernels of varying kernel widths: $k_{RQ}(\vec x,\vec y) = \int {P(\gamma;p,\theta)k_G(\vec x, \vec y)} d\gamma$, where $P(\gamma;p,\theta)$ is a gamma distribution \cite{Rasmussen2006}. The sum of positive semidefinite kernels is also a positive semidefinite kernel \cite{Hofmann2008}. As a sum of Gaussian kernels, the rational quadratic kernel satisfies the positive semidefiniteness property, showing $k_{RQ}(\vec x,\vec y)$ provides the result of an inner product in a richer Hilbert space via the kernel trick according to Mercer's theorem and yields positive definite Gram matrices when each configuration in $\mathcal{X}$ is unique.

Better approximations to the Gaussian kernel are achieved with higher values of $p$ as seen in Fig. \ref{fig:kernelShapes}. When $p$ is a power of 2, efficient implementation of the rational quadratic kernel may take advantage of tetration. In this paper, we use $p = 2$ as the approximation to a Gaussian need not be extremely precise to generate a similarly performing classifier. Using the Eigen C++ library, completely computing a $5000 \times 5000$ Gram matrix takes 257.0 ms using $k_G(\cdot, \cdot)$, while using $k_{RQ}(\cdot,\cdot)$ takes 171.3 ms, illustrating that kernel evaluations can be performed 1.5 times faster.

\subsection{Weight Update}

\subsubsection{Derivation of Update Rule}
The Fastron algorithm is inspired by the kernel perceptron in the sense that it iteratively adds or adjusts the weights for configurations that are incorrectly classified by the current model \cite{Freund1999}. Unlike the perceptron algorithm, the Fastron algorithm prioritizes the configurations in $\mathcal{X}$ with the most negative margin and makes adjustments to the weights such that the configuration is forced to be correctly classified immediately after the update, which is not a guarantee with the standard perceptron algorithm. These changes result in a model with a large number of weights equal to 0 that generally takes fewer updates to converge to a solution compared to the perceptron algorithm. %Margin $m(x)$ is a signed distance of a point from the separating hyperplane, i.e., $m(x) = y(x)f(x)$, where $m(x) > 0$ only if $f(x)$ correctly predicts the label of $x$.

\begin{algorithm}[t]
\caption{Fastron Model Updating}
\label{alg:fastronUpdate}

%\SetKwInput{Input}{Input}
\SetKwInput{Param}{Parameters}
\SetKw{logicalor}{OR}
\SetKw{continue}{continue}
\SetKw{break}{break}
\DontPrintSemicolon

% \KwIn{Initial weight vector $\vec{\alpha}$; initial hypothesis vector $\vec{F}$; training data $\mathcal{X}$; labels $\vec{y}$}     
 \KwIn{Training dataset of configurations $\mathcal{X}$, collision status labels $\vec{y}$}
\Param{Maximum number of iterations $iter_{max}$, maximum number of support points $\mathcal{S}_{max}$, conditional bias parameter $\beta$, kernel parameter $\gamma$ for Gram matrix calculations}
% \KwOut{Updated $\vec{\text{\textalpha}}$; updated $\vec{F}$}
  \KwOut{Updated $\vec{\alpha}$, support set of configurations $\mathcal{S}$}
  \tcp{Get weights, hypothesis, and Gram matrix from previous update}
  $\vec{\alpha}, \vec{F}, \vec{K} \leftarrow \mathrm{loadPreviousModel}()$ \\
  \tcp{Back up previous $\vec{\alpha}$ and $\vec{F}$}
$\vec{\alpha_{before}} \leftarrow \vec{\alpha},\ \vec{F_{before}} \leftarrow \vec{F}$ \\
 \For {$iter = 1$ to $iter_{max}$}
 {
   \tcp{Check for misclassifications}
 \uIf{$\min{\vec{y}\vec{F}} \leq 0$}{
 		$i \leftarrow \argmin \vec{y}\vec{F}$ \\
        $\mathrm{computeGramMatrixColumn}(i)$ \\%\tcp*{Compute column $j$ of $K$ if it has not yet been}
        \tcp{Add/adjust support point}
        \uIf {$\mathrm{count}(\vec{\alpha}\neq 0)<\mathcal{S}_{max}$ \logicalor $\vec{\alpha}_i\neq 0$}{
  		  $\Delta\vec{\alpha} \leftarrow {\beta}^{0.5(\vec{y}_i+1)}\vec{y}_i-\vec{F}_i$ \\
          $\vec{\alpha}_i \leftarrow \vec{\alpha}_i + \Delta \vec{\alpha}$ \\
          $\vec{F} \leftarrow \vec{F} + \Delta\vec{\alpha}\vec{K}\vec{e(i)}$ \\
%          $\vec{F}_i \leftarrow \vec{F}_i + \vec{K}_{ij}\Delta\vec{\alpha} \ \forall i$ \\
          \continue
        }
   }
   \tcp{Back up $\vec{\alpha}$ and $\vec{F}$}
	$\vec{\alpha_{before}} \leftarrow \vec{\alpha},\ \vec{F_{before}} \leftarrow \vec{F}$ \\
   \tcp{Remove redundant support points}
   \If{$\max{\vec{y} (\vec{F} - \vec{\alpha})} > 0 \textup{ subject to } \vec{\alpha}\neq 0$}{
     $i \leftarrow \argmax \vec{y}(\vec{F} - \vec{\alpha})$ subject to $\vec{\alpha}\neq 0$\\
     $\vec{F} \leftarrow \vec{F} - \vec{\alpha}_i\vec{K}\vec{e(i)}$\\
     $\vec{\alpha}_i \leftarrow 0$ \\
     \continue
   }
%   \tcp{No further changes can be made}
   \break
 }
% \uIf{$\min_i{\vec{y}_i\vec{F}_i} \leq 0$ \and exists($\vec{\text{\textalpha}}_{full}$)}
% {
% $\vec{\text{\textalpha}}\leftarrow\vec{\text{\textalpha}}_{full},\ \vec{F}\leftarrow \vec{F}_{full}$} 
 \tcp{Revert solution if prior was better}
  \uIf{$\mathrm{count}({\vec{y}\vec{F_{before}}} \leq 0) < \mathrm{count}({\vec{y}\vec{F}} \leq 0)$}
 {
  $\vec{\alpha}\leftarrow\vec{{\alpha}_{before}},\ \vec{F}\leftarrow \vec{F_{before}}$ \label{removalSafeguard}}
  %$\vec{\alpha}\leftarrow\vec{\alpha}_{before}$ \label{removalSafeguard}} 
 \tcp{Remove all elements corresponding to non-support points}
 $\alpha, \mathcal{S}, \vec{F}, \vec{K} \leftarrow \textup{removeNonsupportPoints}()$ \label{removeNonsupportPoints} \\
% $\mathcal{S} \leftarrow \{\mathcal{X}_i : \mathcal{X}_i \in \mathcal{X} \land \vec{\alpha}_i \neq 0\}$ \\
%   $\vec{K} \leftarrow \vec{K} \textup{ such that } \vec{\alpha}_{i}, \vec{\alpha}_{j} \neq 0$ \\
%   $\vec{F} \leftarrow \vec{F} \textup{ such that } \vec{\alpha}_i \neq 0$ \\
%%   $\vec{F} \leftarrow \{\vec{F}_i : \vec{\alpha}_i \neq 0\}$ \\
%   $\vec{\alpha} \leftarrow \vec{\alpha} \textup{ such that } \vec{\alpha}_i \neq 0$ \\
%   $\vec{\alpha} \leftarrow \{\vec{\alpha}_i : \vec{\alpha}_i \neq 0\}$ \\
%$removeNonsupportPoints()$ \label{removeNonsupportPoints} \\

 \textbf{return} {$\vec{\alpha}$, $\mathcal{S}$}
\end{algorithm}

By prioritizing the training configuration with the most negative margin, we guarantee that the weight we are updating is for a misclassified configuration. In what follows, parenthetical superscripts denote the training iteration upon which the given value depends. If $\mathcal{X}_i$ has the most negative margin of all configurations in $\mathcal{X}$ on iteration $n$, $\vec{\alpha}_i^{(n)}$ must be adjusted by $\Delta \vec{\alpha}_i^{(n)}$ such that $m^{(n+1)}(\mathcal{X}_i)>0$ (thereby ensuring $\mathcal{X}_i$ is correctly classified immediately after the weight update):
\begin{align}
\vec{y}_i\left(\sum_{j\neq i}\vec{K}_{ji}\vec{\alpha}_j^{(n)} + \vec{K}_{ii}\left(\vec{\alpha}_i^{(n)} + \Delta\vec{\alpha}_i^{(n)}\right)\right)>0
\end{align}

As $\vec{K}_{ii} = 1$ when using the rational quadratic kernel, the above condition may be simplified to $m^{(n)}(\mathcal{X}_i) + \vec{y}_i\Delta \vec{\alpha}_i^{(n)} > 0$, which may be enforced by setting $\Delta \vec{\alpha}_i^{(n)} = \vec{y}_i - \vec{F}_i^{(n)}$. This update rule makes $m^{(n+1)}(\mathcal{X}_i) = 1$. The weight and training hypothesis vector updates are thus (where $\vec{e(i)}$ is the $i^{th}$ standard basis vector):
\begin{align}
\Delta \vec{\alpha}_i^{(n)} &= \vec{y}_i - \vec{F}_i^{(n)}  \label{eqn:updateRule1}\\
\vec{\alpha}^{(n+1)} &= \vec{\alpha}^{(n)} + \Delta\vec{\alpha}_i^{(n)}\vec{e(i)}  \label{eqn:updateRule2}\\
\vec{F}^{(n+1)} &= \vec{F}^{(n)} + \Delta\vec{\alpha}_i^{(n)}\vec{K}\vec{e(i)} \label{eqn:updateRule3}
\end{align}

Note that only one element of $\vec{\alpha}$ is updated per iteration, while all $N$ elements of $\vec{F}$ are updated per iteration.

\subsubsection{Alternative Derivation of Update Rule}
Representing the problem with a loss function provides an alternative method for deriving the update rule and yields interesting insights. Consider the loss function
\begin{equation}
\mathcal{L}(\vec \alpha) = \frac{1}{2}\vec{\alpha}^\mathsf{T} \vec{K\alpha} - \vec{y}^\mathsf{T}\vec{\alpha}
\label{eqn:lossFn}
\end{equation}

This loss function can be derived using the method of Lagrange multipliers, where we seek to minimize $\|\sum_i\phi(\mathcal{X}_i)\vec{\alpha}_i\|^2$ (equivalent to maximizing the distance of the projections of training points in the feature space) subject to the constraint $\vec{y}_if(\mathcal{X}_i)\geq 1 \ \forall \mathcal{X}_i\in \mathcal{X}$. The loss function in Eq. \ref{eqn:lossFn} is similar to that used in support vector machines \cite{Cortes1995}, but the constraint when minimizing Eq. \ref{eqn:lossFn} is now $\vec{y}_i\vec{\alpha}_i\geq 0\ \forall i$. See the Appendix for the derivation of Eq. \ref{eqn:lossFn} for our definition of the hypothesis function $f(\vec{x})$.

% The optimal solution is simply $\alpha^* = K^{-1}y$ but undesired for the reasons stated in Section \ref{learningGoals}. 
A quadratic programmer may be used to minimize Eq. \ref{eqn:lossFn}, but finding the optimal solution is undesired for computational effort and lack of sparsity. To improve the training time and sparsity of the model, Fastron takes a greedy coordinate descent approach and terminates when $m^{(n)}(\mathcal{X}_i)>0 \ \forall \mathcal{X}_i\in \mathcal{X}$. Coordinate descent updates one element of $\vec{\alpha}^{(n)}$ per iteration while leaving all other elements fixed \cite{Wright2015}. On iteration $n$, coordinate descent minimizes $\mathcal{L}(\vec{\alpha}^{(n)})$ along the $i^{th}$ axis by setting $\vec{\alpha}_i^{(n+1)}$ to the solution of $\frac{\partial\mathcal{L}(\vec{\alpha}^{(n)})}{\partial\vec{\alpha}_i^{(n)}}=0$:
\begin{equation}
\vec{\alpha}_i^{(n+1)} = \vec{y}_i - \sum_{j\neq i}\vec{K}_{ij}\vec{\alpha}_j^{(n)} \label{updateRule}
\end{equation}

Replacing $\vec{\alpha}_i^{(n+1)}$ in Eq. \ref{updateRule} with $\vec{\alpha}_i^{(n)} + \Delta \vec{\alpha}_i^{(n)}$, we realize $\Delta \vec{\alpha}_i^{(n)}=\vec{y}_i - \vec{F}_i^{(n)}$, matching the result in Eq. \ref{eqn:updateRule1}. It follows that $\vec{\alpha}^{(n)}$ and $\vec{F}^{(n)}$ will be incremented as shown in Eq. \ref{eqn:updateRule2} and \ref{eqn:updateRule3}.

\begin{figure*}[t]
	\centering
	\begin{subfigure}[b]{0.49\linewidth}
      \includegraphics[width=\linewidth]{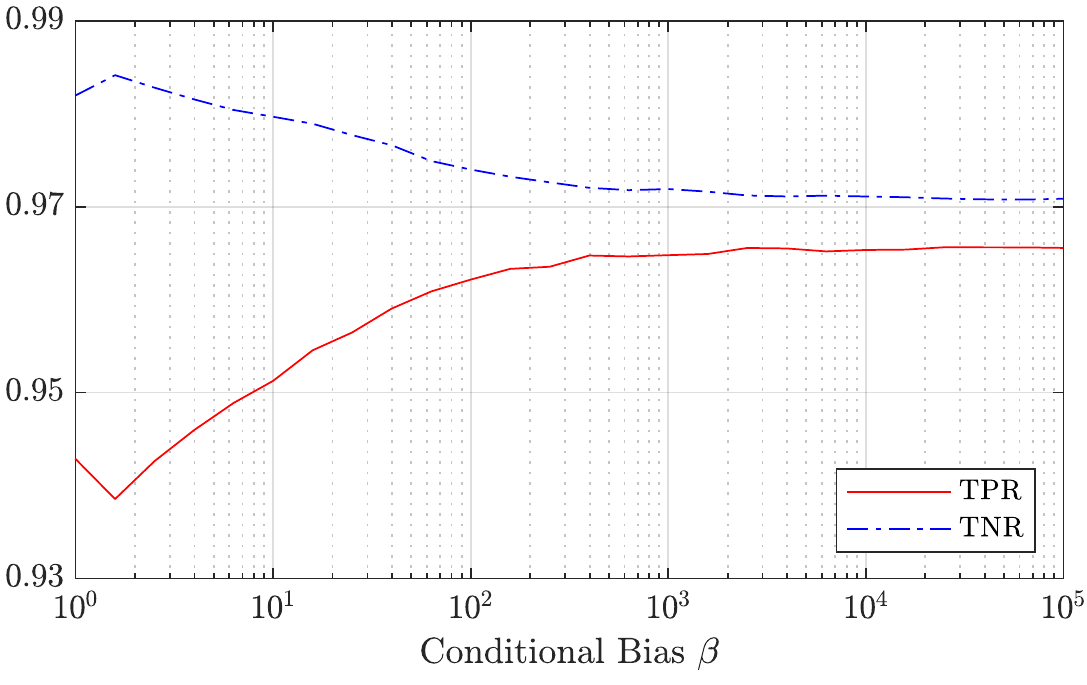}
      \caption{2 DOF, $N = 2000$}
      \label{fig:condBias2dof}
    \end{subfigure}
	\begin{subfigure}[b]{0.49\linewidth}
      \includegraphics[width=\linewidth]{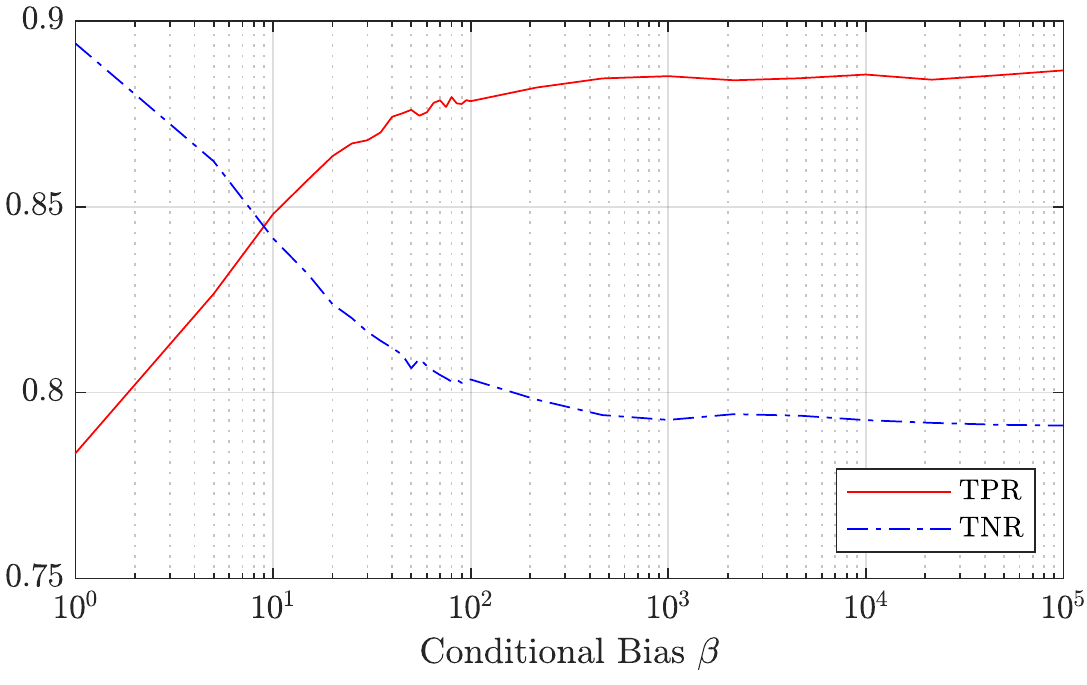}
      \caption{4 DOF, $N = 4000$}
      \label{fig:condBias4dof}
    \end{subfigure}
    \caption{TPR and TNR of Fastron for various conditional bias $\beta$ values for the 2 and 4 DOF robot shown in Fig. \ref{fig:2and4dofEnv}. The ground truth labels are provided using the GJK \cite{Gilbert1988} collision detection method. Increasing the parameter improves TPR at the cost of TNR, but the effects taper off as $\beta$ gets large.}
    \label{fig:condBias}
\end{figure*}

\begin{figure}[b!]
	\centering
	\begin{subfigure}[b]{0.48\linewidth}
	\includegraphics[width=\linewidth,trim={1cm 0.5cm 1cm 2cm},clip]{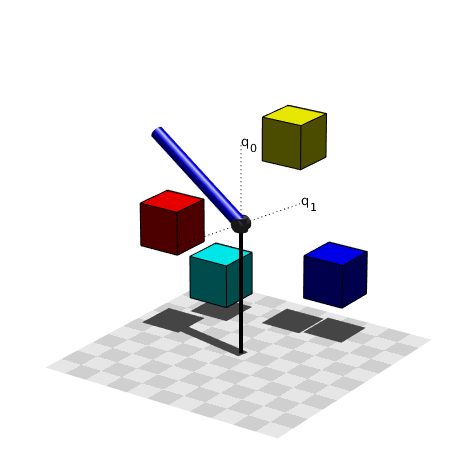}
      \caption{2 DOF}
      \label{fig:2dofEnv}
    \end{subfigure}
	\begin{subfigure}[b]{0.48\linewidth}
      \includegraphics[width=\linewidth,trim={1cm 0.5cm 1cm 2cm},clip]{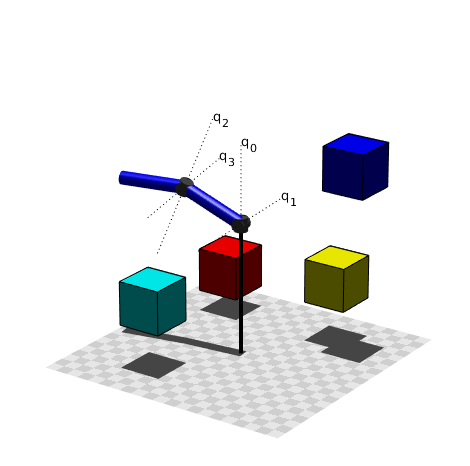}
      \caption{4 DOF}
      \label{fig:4dofEnv}
    \end{subfigure}
    \caption{Example environments with up to 4 cube obstacles and simple 2 and 4 DOF robots. There are half as many links as there are joints, and each pair of joints is overlapping.}
    \label{fig:2and4dofEnv}
\end{figure}

%Rather than repeatedly computing the summation in the second term on each iteration, the hypothesis vector $\vec{F}^{(n)}$ can be utilized to set $\vec{\alpha}_i^{(n+1)} = \vec{\alpha}_i^{(n)} + \Delta\vec{\alpha}_i^{(n)}$, where $\Delta\vec{\alpha}_i^{(n)} = \vec{y}_i - \vec{F}_i^{(n)}$. This incremental approach also requires setting $\vec{F}^{(n+1)} = \vec{F}^{(n)} + \Delta\vec{\alpha}_i^{(n)}\vec{K}\vec{e(i)}$. Note that $\vec{F}_i^{(n+1)} = \vec{y}_i$ if $\vec{\alpha}_i$ was selected to be updated, resulting in $m^{(n+1)}(\mathcal{X}_i)>0$. Furthermore, note that this update rule ensures that if $\vec{\alpha}^{(n)}$ satisfies $\vec{y}_i\vec{\alpha}_i^{(n)}\geq 0\ \forall i$, $\vec{\alpha}^{(n+1)}$ also satisfies these constraints.

Updating $\vec{\alpha}_i^{(n)}$ changes $\mathcal{L}(\vec{\alpha}^{(n)})$ as follows:
\begin{align}
\mathcal{L}\left(\vec{\alpha}^{(n+1)}\right) &= \mathcal{L}\left(\vec{\alpha}^{(n)} + \Delta\vec{\alpha}_{i}^{(n)}\vec{e(i)}\right) \\
%&= \frac{1}{2}(\vec{\alpha}^{(n)} + \Delta\vec{\alpha}_i^{(n)}\vec{e(i)})^\mathsf{T}\vec{K}(\vec{\alpha}^{(n)} + \Delta\vec{\alpha}_i^{(n)}\vec{e(i)}) \notag\\
%&\phantom{=\ } - \vec{y}^\mathsf{T}(\vec{\alpha}^{(n)} + \Delta\vec{\alpha}_i^{(n)}\vec{e(i)}) \\
&= \frac{1}{2}{\vec{\alpha}^{(n)}}^\mathsf{T}\vec{K\alpha}^{(n)} - \vec{y}^\mathsf{T}\vec{\alpha}^{(n)} + \frac{1}{2}{\Delta\vec{\alpha}_i^{(n)}}^2 \notag \\
&\phantom{=\ }  + \Delta\vec{\alpha}_i^{(n)}\vec{F}_i^{(n)} - \Delta\vec{\alpha}_i^{(n)}\vec{y}_i \\
&= \mathcal{L}\left(\vec{\alpha}^{(n)}\right) - \frac{1}{2}{\Delta\vec{\alpha}_i^{(n)}}^2 \label{deltaLoss}
\end{align}

The coordinate descent direction is selected greedily, i.e., the $\vec{\alpha}_i^{(n)}$ to update should be selected such that there is maximal decrease in loss. Additionally, since the algorithm will terminate when $m^{(n+1)}(\mathcal{X}_i)>0 \ \forall \mathcal{X}_i\in \mathcal{X}$, the directions considered are restricted to only include directions where $m^{(n)}(\mathcal{X}_i)\le 0$. Note that this restriction ensures that if $\vec{\alpha}^{(n)}$ satisfies $\vec{y}_i\vec{\alpha}_i^{(n)}\geq 0\ \forall i$, then $\vec{\alpha}^{(n+1)}$ satisfies  $\vec{y}_i\vec{\alpha}_i^{(n+1)}\geq 0\ \forall i$. Initializing $\vec{\alpha}_i^{(0)}=0\ \forall i$ would allow these conditions to be satisified during the updates. Maximizing the decrease in loss given the negative margin restriction shows that greedy coordinate descent selects to update $\vec{\alpha}_i^{(n)}$ according to:
\begin{align}
i &= \argmax_{j:m^{(n)}(\mathcal{X}_j)\le 0} \left({\frac{1}{2}\Delta\vec{\alpha}_j^{(n)}}^2\right) \\
 &= \argmax_{j:m^{(n)}(\mathcal{X}_j)\le 0} \left(\left(\vec{y}_j - \vec{F}_j^{(n)}\right)^2\right) \\
&= \argmax_{j:m^{(n)}(\mathcal{X}_j)\le 0} \left(1 - 2m^{(n)}(\mathcal{X}_j) + (m^{(n)}(\mathcal{X}_j))^2\right) \\
&= \argmin_{j} \left(m^{(n)}(\mathcal{X}_j)\right) \label{minMargin}
\end{align}

Eq. \ref{minMargin} shows that the training point with the most negative margin (intuitively, the point farthest from the separating hyperplane on the wrong side) is prioritized when updating. The combination of the update rule in Eq. \ref{updateRule} and the descent direction in Eq. \ref{minMargin} guarantees $\mathcal{L}\left(\vec{\alpha}^{(n+1)}\right) \leq \mathcal{L}\left(\vec{\alpha}^{(n)}\right) - \frac{1}{2}$ because ${\Delta\vec{\alpha}_i^{(n)}}^2 = (1-{m^{(n)}(\mathcal{X}_i)})^2 \geq 1$ whenever there exists a misclassified training point, i.e., $m^{(n)}(\mathcal{X}_i)\le 0$. In other words, each iteration guarantees a decrease in loss by at least $\frac{1}{2}$ whenever there are still training points with nonpositive margin.

\begin{claim}
Minimization of $\mathcal{L}(\vec{\alpha})$ with the greedy coordinate descent rule defined in Eq. \ref{updateRule} and \ref{minMargin} will always eventually yield a hypothesis with positive margin for all samples given nonsingular Gram matrix $\vec K$.
\label{claim:convergence}
\end{claim}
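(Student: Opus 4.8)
The plan is to convert the per-iteration loss decrease already established in the excerpt into a finite-termination statement by bounding $\mathcal{L}$ from below. The text has shown that, whenever some training point still has nonpositive margin, the greedy update (Eq. \ref{updateRule}) in the most-negative-margin direction (Eq. \ref{minMargin}) satisfies $\mathcal{L}(\vec{\alpha}^{(n+1)}) \le \mathcal{L}(\vec{\alpha}^{(n)}) - \tfrac{1}{2}$, since $\Delta\vec{\alpha}_i^{(n)2} = (1-m^{(n)}(\mathcal{X}_i))^2 \ge 1$ for the selected $i$. The coordinate descent performs an update exactly when a point with $m^{(n)}(\mathcal{X}_i)\le 0$ exists and halts the instant none does. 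Hence the entire claim reduces to: the strictly decreasing sequence $\{\mathcal{L}(\vec{\alpha}^{(n)})\}$, with decrements of at least $\tfrac{1}{2}$, cannot continue forever, so the process must reach a state where every margin is positive.

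First I would establish that $\vec{K}$ is positive definite. Section \ref{sec:kernel} argues the rational quadratic kernel is positive semidefinite, so $\vec{K}$ is positive semidefinite; together with the hypothesis that $\vec{K}$ is nonsingular, this forces $\vec{K}\succ 0$, because a positive semidefinite matrix with no zero eigenvalue has strictly positive spectrum. Then I would use positive definiteness to bound the loss $\mathcal{L}(\vec{\alpha}) = \tfrac{1}{2}\vec{\alpha}^\mathsf{T}\vec{K}\vec{\alpha} - \vec{y}^\mathsf{T}\vec{\alpha}$ from Eq. \ref{eqn:lossFn} below: as a strictly convex quadratic it attains a unique global minimum at $\vec{\alpha}^\star = \vec{K}^{-1}\vec{y}$ with value $\mathcal{L}(\vec{\alpha}^\star) = -\tfrac{1}{2}\vec{y}^\mathsf{T}\vec{K}^{-1}\vec{y}$, and $\mathcal{L}(\vec{\alpha})\ge \mathcal{L}(\vec{\alpha}^\star)$ for every $\vec{\alpha}$.

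Finally I would assemble a counting argument. Initializing $\vec{\alpha}^{(0)}=\vec{0}$ gives $\mathcal{L}(\vec{\alpha}^{(0)})=0$, so after $n$ updates the guaranteed decrement yields $\mathcal{L}(\vec{\alpha}^{(n)})\le -\tfrac{n}{2}$. Combining with the lower bound gives $-\tfrac{n}{2}\ge -\tfrac{1}{2}\vec{y}^\mathsf{T}\vec{K}^{-1}\vec{y}$, i.e. $n\le \vec{y}^\mathsf{T}\vec{K}^{-1}\vec{y}$, a finite quantity independent of how the iterations proceed. Therefore only finitely many updates can occur; since an update is triggered precisely when a point with nonpositive margin remains, the algorithm must terminate in a configuration where $m(\mathcal{X}_i)>0$ for all $i$, which is exactly the desired positive-margin hypothesis (and the finite count also furnishes an explicit worst-case iteration bound).

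The only nontrivial ingredient is the lower bound on $\mathcal{L}$, which rests entirely on $\vec{K}\succ 0$; once that is secured the remainder is bookkeeping. The chief subtlety to confirm is that the $\tfrac{1}{2}$-decrement genuinely holds at every iteration up to termination — it does, because the greedy selection rule in Eq. \ref{minMargin} always chooses an index with $m^{(n)}(\mathcal{X}_i)\le 0$ while any exists, and the procedure stops the moment no such index is left, so no iteration with an insufficient decrement is ever executed.
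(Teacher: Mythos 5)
Your proof is correct and follows essentially the same route as the paper's: the guaranteed $\tfrac{1}{2}$ decrease in loss per greedy update, combined with the lower bound $\mathcal{L}(\vec{K}^{-1}\vec{y}) = -\tfrac{1}{2}\vec{y}^\mathsf{T}\vec{K}^{-1}\vec{y}$, gives a finite cap on the number of iterations, so the process must halt in a state where every margin is positive. Your only refinement is making explicit that nonsingularity together with positive semidefiniteness of the Gram matrix yields $\vec{K}\succ 0$ (which is what actually justifies the lower bound), a point the paper leaves implicit by relying on Section \ref{sec:kernel}.
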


\begin{proof}
If $\min_i{m^{(n)}(\mathcal{X}_i)}\leq 0$, the upper bound on the change in loss per descent step is $\sup\left(\mathcal{L}\left(\vec{\alpha}^{(n+1)}\right) - \mathcal{L}\left(\vec{\alpha}^{(n)}\right)\right)= \sup\left(-\frac{1}{2}(1-{m^{(n)}(\mathcal{X}_i)})^2\right)=-\frac{1}{2}$. A lower bound of $\mathcal{L}(\vec \alpha)$ is $\inf\mathcal{L}(\vec \alpha) = \mathcal{L}(\vec{K}^{-1}\vec{y}) = -\frac{1}{2}\vec{y}^\mathsf{T}\vec{K}^{-1}\vec{y}$ for nonsingular $\vec{K}$. The margin is exactly 1 for all samples when $\vec{\alpha} = \vec{K}^{-1}\vec{y}$. A loose upper bound on the number of descent steps required to reach $\mathcal{L}(\vec{K}^{-1}\vec{y})$ from initial loss $\mathcal{L}(\vec{\alpha}^{(0)})$ is $\frac{\mathcal{L}(\vec{K}^{-1}\vec{y})-\mathcal{L}(\vec{\alpha}^{(0)})}{-\frac{1}{2}} = \vec{y}^\mathsf{T}\vec{K}^{-1}\vec{y} + 2\mathcal{L}(\vec{\alpha}^{(0)})$.

If $\min_i{m^{(n)}(\mathcal{X}_i)} > 0$, the hypothesis at iteration $n$ successfully provides a positive margin for all samples.
\end{proof}

% An upper bound on the number of iterations required to minimize $\mathcal{L}(\vec \alpha)$ subject to $\vec{Y\alpha}\geq \vec{0}$ from initial weights $\alpha^{(0)}$ is $\frac{\mathcal{L}(\alpha^*)-\mathcal{L}(\alpha^{(0)})}{-\frac{1}{2}} = y^\mathsf{T}K^{-1}y + 2\mathcal{L}(\alpha^{(0)})$.

% If $\vec\alpha^{*}$ yields optimal loss $\mathcal{L}(\alpha^{*})$ satisfying the $Y\alpha\geq 0$ constraint, $\alpha^*$ also yields positive margin for all samples. An upper bound on the number of iterations required to minimize $\mathcal{L}(\vec \alpha)$ subject to $\vec{Y\alpha}\geq \vec{0}$ from initial weights $\alpha^{(0)}$ is $\frac{\mathcal{L}(\alpha^*)-\mathcal{L}(\alpha^{(0)})}{-\frac{1}{2}} = y^\mathsf{T}K^{-1}y + 2\mathcal{L}(\alpha^{(0)})$.

% As $\min_{\vec{\alpha}}\mathcal{L}(\alpha)\leq \min_{\vec{Y\alpha\geq \vec{0}}}\mathcal{L}(\alpha)$, 

% For nonsingular $K$, the optimum of unconstrained $\mathcal{L}(\alpha)$ occurs at $\alpha^{U*} = K^{-1}y$. 

Claim \ref{claim:convergence} means the weight update algorithm can terminate once all training samples have positive margin or will otherwise work toward achieving positive margin for all samples. As established in Section \ref{sec:kernel}, $\vec{K}$ is a positive definite matrix if each configuration in $\mathcal{X}$ is unique (which is the case when sampling from a continuous space), implying $\vec{K}$ is nonsingular in practice. While $\vec{y}^{\mathsf{T}}\vec{K}^{-1}\vec{y}$ may be large, the total number of iterations required to satisfy $m(\mathcal{X}_i)>0 \ \forall \mathcal{X}_i \in \mathcal{X}$ is typically much smaller, as will be shown empirically by the short training times in Section \ref{sec:results}. In the case that training takes longer than desired, an iteration limit may be defined for early termination (defined as $iter_{max}$ in Algorithm \ref{alg:fastronUpdate}) at the cost of yielding a classifier with lower accuracy.

\subsection{Conditional Bias Parameter}
As with the standard kernel perceptron, the Fastron model does not have an additive bias term. On the other hand, the kernel SVM model contains a bias term $b \in \mathbb{R}$ in its hypothesis: $f_{SVM}(\vec{x}) = f(\vec{x}) + b$. The benefit of including a bias term is that points with the label $\vec{y}_i = sign(b)$ are more likely to be classified correctly because the model is universally biased toward labeling query points as $sign(b)$.

As false negatives (mistaking $\mathcal{C}_{obs}$ for $\mathcal{C}_{free}$) are more costly than false positives (mistaking $\mathcal{C}_{free}$ for $\mathcal{C}_{obs}$) in the context of collision detection for motion planning, we would like to bias the model toward the $\mathcal{C}_{obs}$ label to err on the side of caution. Rather than trying to learn a bias term, we instead conditionally multiply the target values by a user-selected value during training such that $\mathcal{C}_{obs}$ points are more likely to be classified correctly. More specifically, we adjust the rule defined in Eq. \ref{eqn:updateRule1} to
\begin{equation}
\Delta \vec{\alpha}_i^{(n)} = b_i\vec{y}_i - \vec{F}_i^{(n)}
	\end{equation}
where $b_i=\beta^{0.5(\vec{y}_i+1)}$ (assuming the labels are $\pm 1$) and $\beta \geq 1$ is a user-selected conditional bias parameter. When $\beta>1$, weight updates are larger when correcting for a $\mathcal{C}_{obs}$ configuration compared to a $\mathcal{C}_{free}$ configuration. Larger weights for $\mathcal{C}_{obs}$ points would ultimately influence a larger neighborhood in C-space, thereby padding C-space obstacles and potentially increasing true positive rate.

Fig. \ref{fig:condBias} shows the effect of the conditional bias parameter on true positive rate (TPR) and true negative rate (TNR) (averaged over environments with randomly placed obstacles) for the 2 DOF and 4 DOF robots shown in Fig. \ref{fig:2and4dofEnv}, respectively. Increasing the parameter improves TPR at the cost of TNR, but the effects taper off as $\beta$ gets large.

Note that an upper bound on the number of iterations still exists when including the conditional bias parameter. As the target values are now $b_i\vec{y}_i$ instead of $\vec{y}_i$, we can consider a modified loss function $\mathcal{L}(\alpha) = \frac{1}{2}\vec{\alpha}^\mathsf{T}\vec{K\alpha} - \vec{y}^\mathsf{T}\vec{B\alpha}$, where $\vec{B}$ is a diagonal matrix containing the $b_i$ value for each $\vec{y}_i$. Noting that ${\Delta\vec{\alpha}_i^{(n)}}^2\geq 1$ for $\min_i{m^{(n)}(\mathcal{X}_i)}\leq 0$, $\beta\geq 1$, and $\vec{\alpha}^{(0)} = \vec{0}$, the new upper bound on iterations to achieve positive margin for all samples is $\vec{y}^\mathsf{T}\vec{BK}^{-1}\vec{By}$. Once again, in practice, the total number of iterations to satisfy $m(\mathcal{X}_i)>0\ \forall \mathcal{X}_i\in \mathcal{X}$ is typically significantly smaller.

\subsection{Redundant Support Point Removal}
A support point whose margin would be positive even if it were not in the support set is redundant. Redundant support points should be removed from $\mathcal{S}$ (by setting its corresponding weight in $\vec{\alpha}$ to 0) to promote the sparsity of the model. Redundant support point removal is useful when the workspace obstacles move, causing the collision statuses of the points in $\mathcal{X}$ and the decision boundary to change. Removing outdated support points that no longer contribute to the shifted decision boundary is necessary to reduce the model complexity, allowing computational efficiency in changing environments through model sparsity.

To remove $\mathcal{X}_i$ from $\mathcal{S}$, the weight and training hypothesis vector updates are computed as follows:
% are these equations necessary?:
\begin{equation}
\vec{\alpha}^{(n+1)} = \vec{\alpha}^{(n)} - \vec{\alpha}_i^{(n)}\vec{e(i)}
\end{equation}
\begin{equation}
\vec{F}^{(n+1)} = \vec{F}^{(n)} - \vec{\alpha}_i^{(n)}\vec{K}\vec{e(i)}
\end{equation}

The resultant margin at point $\mathcal{X}_i$ if it were removed from the support set is $m^{(n+1)}(\mathcal{X}_i) = \vec{y}_i\sum_{j\neq i}\vec{K}_{ij}\vec{\alpha}_j^{(n)}=m^{(n)}(\mathcal{X}_i)-\vec{y}_i\vec{\alpha}_i^{(n)}$. Considering margin to be a measure of how well a point is classified, points are iteratively removed in decreasing order of positive resultant margin until $m^{(n)}(\mathcal{X}_i)-\vec{y}_i\vec{\alpha}_i^{(n)} < 0 \ \forall \mathcal{X}_i \in \mathcal{S}$, i.e., removing an additional support point will cause it to be misclassified. We perform redundant support point removal only after a hypothesis that yields positive margin for all training points is found, thus allowing Claim \ref{claim:convergence} to remain valid.

\begin{figure*}[t]
	\centering
	\begin{subfigure}[b]{0.49\linewidth}
      \includegraphics[width=\linewidth]{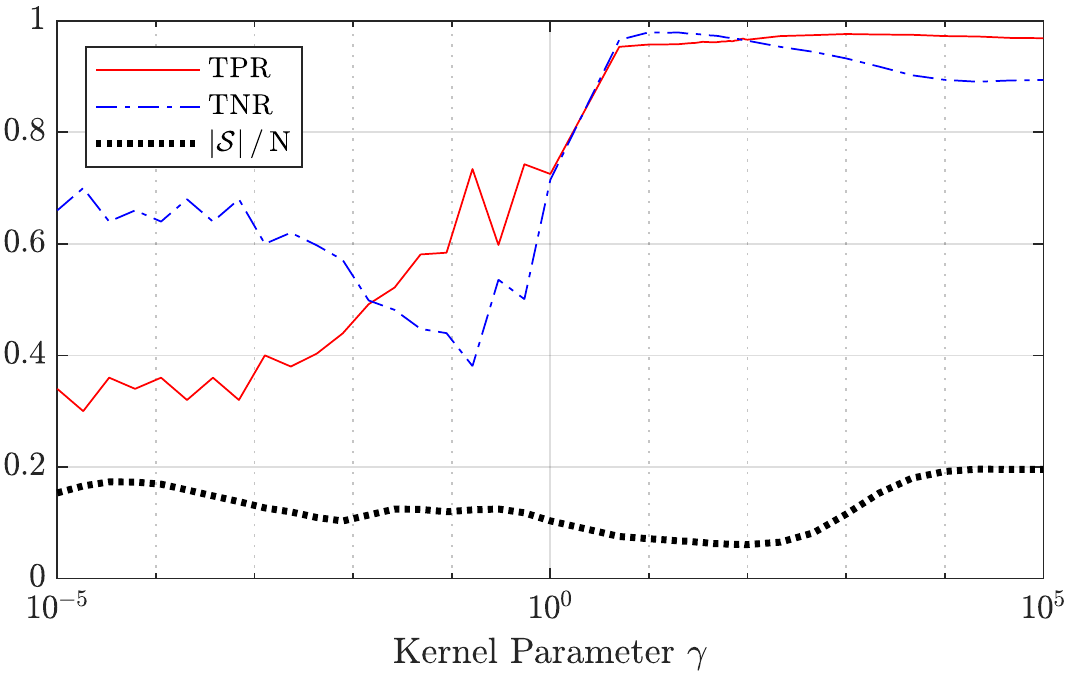}
      \caption{2 DOF, $N = 2000$}
      \label{fig:gamma2dof}
    \end{subfigure}
	\begin{subfigure}[b]{0.49\linewidth}
      \includegraphics[width=\linewidth]{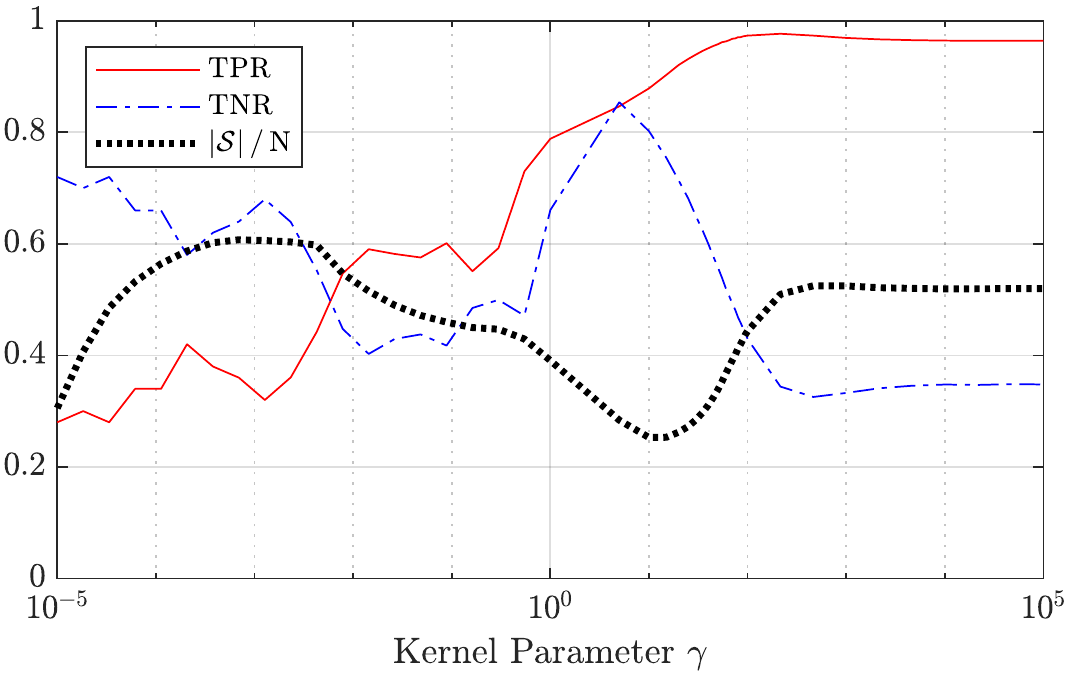}
      \caption{4 DOF, $N = 4000$}
      \label{fig:gamma4dof}
    \end{subfigure}
    \caption{TPR, TNR, and model sizes (as a ratio of number of support points to training set size, $|\mathcal{S}|/N$) of Fastron for various $\gamma$ values for the 2 and 4 DOF robots shown in Fig. \ref{fig:2and4dofEnv}. The ground truth labels are provided using the GJK \cite{Gilbert1988} collision detection method. These curves motivate the choices in $\gamma$ for the 2 DOF and 4 DOF case in which large TPR and TNR are desired while keeping support set sizes as small as possible.}
    \label{fig:gamma}
\end{figure*}

Each redundant support point removal step clearly improves the sparsity of the model by 1. However, following similar steps required to find the change in loss in Eq. \ref{deltaLoss}, the change in loss when removing support point $\mathcal{X}_i\in \mathcal{S}$ is $\mathcal{L}(\vec{\alpha}^{(n)} - \vec{\alpha}_i\vec{e(i)}) - \mathcal{L}(\vec{\alpha}^{(n)}) = \frac{1}{2}\vec{\alpha}_i^2+(b_i\vec{y}_i-\vec{F}_i^{(n)})\vec{\alpha}_i$. This change in loss may be positive or negative depending on the values of $\vec{y}_i$, $\vec{F}_i$, and $\vec{\alpha}_i$, showing that redundant support point removal can potentially step away from the optimal solution. Furthermore, note that redundant support point removal changes the margin at $\mathcal{X}_j$ when removing $\mathcal{X}_i$ from $\mathcal{S}$ as follows: $m^{(n+1)}(\mathcal{X}_j) = m^{(n)}(\mathcal{X}_j) - \vec{y}_j\vec{K}_{ij}\vec{\alpha}_i$. Clearly, the margin decreases for any $\mathcal{X}_j$ where $\vec{y}_j\vec{\alpha}_i>0$, and the number of misclassified training points becomes nonzero if $m^{(n)}(\mathcal{X}_j) < \vec{y}_j\vec{K}_{ij}\vec{\alpha}_i$ for any $\mathcal{X}_j$. If training points become misclassified when a support point is removed, more weight updates are required to correct these points. In practice, a solution is still found quickly as will be shown in the training speed results in Section \ref{sec:results}. As a safeguard, if the iteration limit is reached and the solution obtained before support point removal has fewer misclassifications than after removal, the more correct solution will be returned as shown on Line \ref{removalSafeguard} in Algorithm \ref{alg:fastronUpdate}.

% Following similar steps required to find the change in loss in Equation \ref{deltaLoss}, the change in loss when removing support point $x_i$ is $\Delta\mathcal{L}(\alpha^{(n)} - \alpha_i\mathbf{e}_i) = \frac{1}{2}\alpha_i^2+(y_i-F_i^{(n)})\alpha_i = \frac{1}{2}(m^{(n+1)}(x_i) - m^{(n)}(x_i))(m^{(n+1)}(x_i) + m^{(n)}(x_i) - 2)$. $\Delta\mathcal{L}(\alpha)$ may be positive or negative depending on the values of $y_i$, $F_i$, and $\alpha_i$, thus invalidating the finite upper bound derived in Equation \ref{finiteUpperBound}. To ensure the upper bound still applies, we only consider the redundant support points that cause $\Delta\mathcal{L}(\alpha)$ to be negative. 

\begin{algorithm}[t!]
\caption{Fastron Active Learning}
\label{alg:activeLearning}

\DontPrintSemicolon
\SetKwInput{Param}{Parameters}
\SetKw{break}{break}
 \KwIn{Support set $\mathcal{S}$}
 \Param{Number of configurations to add to dataset $\mathcal{A}_{max}$, maximum number of exploition samples to generate near each support point $\kappa$, Gaussian variance $\sigma^2$}
 \KwOut{Unlabeled set $\mathcal{A}$}
$\mathcal{A}\leftarrow \emptyset$\\
% \tcp{Exploitation: Add up to $k_{NS}$ Gaussian-distributed samples centered on each support point}
\tcp{Exploitation}
%\tcp{Exploitation: Add up to $k_{NS}$ isotropic Gaussian samples centered on each support point}
 \For {$k = 1$ to $\kappa$}
 {
 	\ForEach {$x$ in $S$}
    {
    	\lIf{$|\mathcal{A}| == \mathcal{A}_{max}$}{\break}
        $\mathcal{A} \leftarrow \mathcal{A} \cup \{\vec{x}_{new}\sim\ $\FuncSty{$\mathcal{N}(x, \sigma^2I)\}$ \label{exploitation}}
%         $\mathcal{D}_A \leftarrow \mathcal{D}_A \ \cup \ $\FuncSty{randn(\FuncArgSty{x},\FuncArgSty{std})}
    }
 }
 \tcp{Exploration} 
% \tcp{Exploration: Add uniformly distributed samples from the input space to reach the max number of additional checks}
 \For {$i = 1$ to $\mathcal{A}_{max} - \kappa|\mathcal{S}|$} {
 	$\mathcal{A} \leftarrow \mathcal{A} \cup \{\vec{x}_{new}\sim\ $\FuncSty{$U(-\mathds{1},\mathds{1})\}$ \label{exploration}}
%  	$\mathcal{D}_A \leftarrow \mathcal{D}_A \ \cup\ $\FuncSty{rand()}
 }
 
% \FuncSty{clip(\FuncArgSty{D'})} \tcp{Limit values of samples to fit within limits of input space}
 
% \tcp{Update all structures}
% \FuncSty{pad(K,A)} \\
% \FuncSty{pad($\alpha$,A)} \\
% \FuncSty{pad(F,A)}
 \textbf{return} {$\mathcal{A}$}
\end{algorithm}

\subsection{Active Learning}
When the environment changes, the true decision boundary will change. The labels for the training set must be updated using the collision checker prior to updating the model. Rather than resampling the entire space, which may result in many unnecessary collision checks, we can exploit our previous model to search for new information. Active learning is a methodology to query an oracle for additional information and is intended to reduce the amount of labeling needed to update a model \cite{Schohn2000}. The oracle in this case is the geometry- and kinematics-based collision detection paradigm (KCD).

Our active learning strategy determines a new set of configurations $\mathcal{A}$ to add to the dataset on which to perform collision checks. Collision checks are to be performed on all previous support points $\mathcal{S}$ and $\mathcal{A}$. As active learning takes place after sparsifying the trained model, the number of collision checks that will be performed for the next model update is always $|\mathcal{S}|+\mathcal{A}_{max}$, where the $\mathcal{A}_{max}$ is the user-specified number of configurations to add to $\mathcal{A}$. 

Fastron's active learning strategy is conducted in two stages: exploitation and exploration. In the exploitation stage, up to $\kappa$ Gaussian distributed random samples are generated near each support point and are added to $\mathcal{A}$ as shown on Line \ref{exploitation} in Algorithm \ref{alg:activeLearning}. For simplicity, we sample using isotropic Gaussians with the variance in each direction $\sigma^2={\left(2\gamma\right)}^{-1}$. Up to $\kappa|\mathcal{S}|$ unlabeled configurations are added to $\mathcal{A}$ during the exploitation stage. The idea behind the exploitation stage is to search for small changes in or improvements to C-space obstacles such as when workspace obstacles move an incremental amount or the C-space obstacle boundary may be defined more precisely.

If the number of points in $\mathcal{A}$ is less than $\mathcal{A}_{max}$, the exploration stage fills the rest of $\mathcal{A}$ with uniformly random samples generated in the C-space as shown on Line \ref{exploration} in Algorithm \ref{alg:activeLearning}. For certain robots such as manipulators, newly introduced workspace obstacles can cause new C-space obstacles to materialize in difficult-to-predict locations. The purpose of the random exploration stage is thus to search for these new or drastically different C-space obstacles.

\subsection{Practical Considerations}
\subsubsection{Algorithm Pipeline}
The block diagram in Fig. \ref{fig:blockDiagram} shows the sequence of steps for the algorithm. Initially, a uniformly random set of unlabeled configurations $\mathcal{X}$ are generated and the KCD labels each configuration in $\mathcal{X}$. The KCD requires a snapshot of the obstacles in the current workspace and knowledge of the robot's kinematics and geometry. We regard the KCD as a black box, fully encompassing the entire collision detection cycle, including forward kinematics to locate robot geometry, sensor readings to obtain obstacle information, and tests for intersection of the robot's links with obstacles.

The labeled dataset $\mathcal{X}$ is used to update the Fastron model. $\mathrm{loadPreviousModel}()$ in Algorithm \ref{alg:fastronUpdate} loads the weight vector, hypothesis vector, and partially filled Gram matrix from the previous update, or initializes all elements to $0$ if a previous model does not exist. After the support set $\mathcal{S}$ and the weights $\alpha$ are determined, non-support points are discarded and the model is ready to be used for proxy collision detection. $\mathrm{removeNonsupportPoints}()$ in Algorithm \ref{alg:fastronUpdate} removes all elements in the weight vector, hypothesis vector, dataset, and Gram matrix that correspond to non-support points. Active learning augments the previous support set with $|\mathcal{A}|$ unlabeled configurations. This new dataset is then fed into the KCD before the cycle repeats.

We expect this entire pipeline to run in parallel with all other processes, and the most recent Fastron model is used for proxy collision detection.

\subsubsection{Joint Limits}
This algorithm is intended to be used with robots with joint limits so that there are bounds within which C-space samples may be generated. As we work with isotropic kernels, configurations must be mapped such that the kernels affect the same proportion of each DOF. In this paper, we choose to map the bounded $d$-dimensional joint space to a $d$-dimensional input space $[-1,1]^d$. We choose $[-1,1]^d$ as the input space because if the upper and lower limits for any joint are symmetric about $0$, a joint position of $0$ still maps to $0$ in input space. If $\vec{q}_u$ and $\vec{q}_l$ are vectors containing the upper and lower joint limits, respectively, then the following formula can be applied to map a joint space configuration $\vec{q}_{joint}$ to an input space point $\vec{q}_{input}$:
\begin{equation}
\vec{q}_{input} = (2\vec{q}_{joint}-\vec{q}_u-\vec{q}_l)\oslash(\vec{q}_u-\vec{q}_l)
\end{equation}
where $\oslash$ performs element-wise division.

\subsubsection{Support Point Cap}
Whenever a weight update increases the number of nonzero elements in $\vec{\alpha}$, another point is added to the support set $\mathcal{S}$. To limit the computation time during classification, a support point limit $\mathcal{S}_{max}$ may be defined to prevent $|\mathcal{S}|$ from growing too large.

During the update steps, if the worst margin occurs at $\mathcal{X}_i\in \mathcal{S}$, the weight update can proceed without consideration of $\mathcal{S}_{max}$ because $|\mathcal{S}|$ will not increase when adjusting the weight of a preexisting support point. On the other hand, if the worst margin occurs at $\mathcal{X}_i\not \in \mathcal{S}$, $\mathcal{X}_i$ is added to $\mathcal{S}$ only if $|\mathcal{S}|<\mathcal{S}_{max}$. Otherwise, if $|\mathcal{S}|=\mathcal{S}_{max}$, there is an attempt to remove a redundant support point before continuing to train. If no point can be removed, training is terminated.

\subsubsection{Lazy Gram Matrix Evaluation}
An advantage of prioritizing the most negative margin rather than sequentially adjusting the weight for each point in $\mathcal{X}$ is that not all values of $\vec{K}$ are utilized. The Gram matrix may thus be evaluated lazily. For example, if the most negative margin occurs at point $\mathcal{X}_i$, then only the $i^{th}$ column of $\vec{K}$ is required for the update. Once the $i^{th}$ column of $\vec{K}$ is computed, it does not need to be recomputed if $\vec{\alpha}_i$ needs to be adjusted in a later training iteration. $\mathrm{computeGramMatrixColumn}()$ in Algorithm \ref{alg:fastronUpdate} computes the $i^{th}$ column of the Gram matrix if it has not been computed yet.

\subsubsection{Dynamic Data Structures}
To avoid training from scratch every time collision status labels are updated, some data structures must be retained and updated throughout the lifetime of the algorithm. The training dataset of configurations $\mathcal{X}$ and Gram matrix $\mathbf{K}$ are stored as two-dimensional arrays, while the weight vector $\vec{\alpha}$, hypothesis vector $\vec{F}$, and true collision status labels $\vec{y}$ are stored as one-dimensional arrays. The dimensionality of each of these data structures always depends on how many points are currently in $\mathcal{X}$.

After training is complete, non-support points are discarded from $\mathcal{X}$, and all other data structures are shrunk accordingly. All sparsification happens on Line \ref{removeNonsupportPoints} in Algorithm \ref{alg:fastronUpdate}. After active learning, $\mathcal{A}_{max}$ new points are added to $\mathcal{X}$, and all data structures must be resized accordingly. If repeated deallocation and reallocation of memory slows performance, a fixed amount of memory can be reserved for each data structure based on $\mathcal{S}_{max}$.

Typically the values in $\vec{F}$ are determined incrementally. However, elements in $\vec{F}$ corresponding to the $\mathcal{A}_{max}$ new points from active learning cannot be set the same way. Instead, the columns in $\vec{K}$ that have been partially filled through lazy Gram matrix evaluation should first be completely filled. Next, the uninitialized elements of $\vec{F}$ should be calculated directly using the updated Gram matrix and nonzero weights in $\vec{\alpha}$.

\section{Experimental Results}
\label{sec:results}

\subsection{Performance in Static Environment}
\subsubsection{Descriptions of Alternative Methods}\label{sec:altMethods}
We compare the performance of decision boundary machine learning techniques for static environments. We compare Fastron with two state-of-the-art, kernel-based algorithms: incremental SVM \cite{Karasuyama2010} with active learning \cite{Pan2015} and sparse SVM \cite{Huang2010}, which we refer to as ISVM and SSVM, respectively. As with the Fastron model, we use $[-1,1]^d$ as the input space and $\pm 1$ as the labels for both ISVM and SSVM. We train and validate all methods with the GJK algorithm \cite{Gilbert1988}, a standard for collision detection for convex polyhedra.

\begin{algorithm}[t]
\caption{Incremental SVM with Active Learning Method (Adapted from Original \cite{Pan2015})}
\label{alg:isvm}

\DontPrintSemicolon
\SetKwInput{Param}{Parameters}
\SetKw{break}{break}
\SetKw{and}{AND}
\SetKw{true}{TRUE}
\SetKw{false}{FALSE}
 \KwIn{Initial training dataset of configurations $\mathcal{X}_{init}$, initial collision status labels $\vec{y}_{init}$}
 \Param{Kernel parameter $\gamma$ (for SVM), regularization parameter $C$ (for SVM), amount to change exploitation/exploration probability threshold $\Delta P_{explore}$, number of collision checks to perform per active learning update $N_\mathcal{A}$}
 \KwOut{Weight vector $\vec{\alpha} \in \mathbb{R}^{N+1}$ (where first element is model bias), updated dataset $\mathcal{X}$}
%  $D\leftarrow \emptyset$, $y\leftarrow \emptyset$ \\
 \tcp{Train initial SVM model}
  $\mathcal{X} \leftarrow \mathcal{X}_{init}$\\%, $\vec{y} \leftarrow \vec{y}_{init}$\\%\FuncSty{randsample($\mathcal{C}$,$N_{orig}$)} \\
%  $\vec{y} \leftarrow \vec{y}_{init}$\\%\FuncSty{colCheck($D_{orig}$)} \\
%  $D\leftarrow D_{orig}$, $y\leftarrow y_{orig}$ \\
  $\vec{\alpha} \leftarrow \mathrm{incSvmUpdate}(\mathcal{X}_{init},\vec{y}_{init})$ \\
  \tcp{Generate up to $\mathcal{A}_{max}$ new samples}
  \While{$|\mathcal{X}\setminus \mathcal{X}_{init}|<\mathcal{A}_{max}$}
  {
  	\tcp{Active Learning}
  	$\mathcal{A} \leftarrow \emptyset$ \\
  	$explore \leftarrow \mathrm{rand}() < P_{explore}$ \\
  	\If{$explore$}
    {
%    	$\mathcal{A} \leftarrow$ \FuncSty{randsample($\mathcal{C}$,$N_\mathcal{A}$)} \\
    	 \For {$i = 1$ to $N_\mathcal{A}$}
        {
         	$\mathcal{A} \leftarrow \mathcal{A} \cup \{\vec{x}_{new}\sim\ $\FuncSty{$U(-\mathds{1},\mathds{1})\}$}
%	        $\mathcal{A} \leftarrow$ \FuncSty{random($\mathcal{S}$)}
        }
    }
    \Else
    {
    	\For {$i = 1$ to $N_\mathcal{A}$}
        {
        	$\mathcal{X}_{sel} \leftarrow \mathrm{randPick}(\mathcal{S})$ \\
            $\mathcal{X}_{near} \leftarrow \argmin_{\mathcal{X}_j}{|\mathcal{X}_{sel} - \mathcal{X}_j|}$ \qquad\qquad\qquad s.t. $\mathcal{X}_j\in \mathcal{S} \land \vec{y}_j \neq \vec{y}_{sel}$\\
            $\vec{x}_{new} \leftarrow \argmax_{\vec{x}}{k(\vec{x},\mathcal{X}_{sel}) + k(\vec{x},\mathcal{X}_{near})}$ \\
            $\mathcal{A} \leftarrow \mathcal{A} \cup \{\vec{x}_{new}\}$
        }
    }
    $\mathcal{X} \leftarrow \mathcal{X} \cup \mathcal{A}$\\%, $\vec{y} \leftarrow \vec{y} \cup \vec{y}_\mathcal{A}$ \\
  	$\vec{y}_{\mathcal{A}}\leftarrow \mathrm{colCheck}(\mathcal{A})$ \\

    \tcp{Update SVM with new information}
    $\vec{\alpha} \leftarrow \mathrm{incSvmUpdate}(\mathcal{A},\vec{y}_{\mathcal{A}})$ \\

    \tcp{Increase $P_{explore}$ if exploration samples are poorly classified}
    \If{$explore$ \and $\mathrm{errRate}(\mathcal{A})>err_{thres}$}
    { $P_{explore}\pluseq\Delta P_{explore}$ }
    \Else { $P_{explore}\minuseq\Delta P_{explore}$ }

    \tcp{Exit if all new samples are correctly classified}
    \If {$\mathrm{errRate}(\mathcal{X}\setminus \mathcal{X}_{init})==0$} { \break }
  }
  \Return{$\vec{\alpha}, \mathcal{X}$}
\end{algorithm}

Pan et al. use ISVM to create an accurate C-space model \cite{Pan2015,Pan2013}. Starting by fitting an SVM model to a small set of labeled configurations, Pan et al. iteratively improve upon this initial model using an active learning approach which randomly selects either exploration sampling or exploitation sampling when seeking new points to add to the training set. Exploration requires generating uniformly random samples, while exploitation generates new points between support points of opposite labels. Note that the purpose for Pan et al.'s active learning strategy is to improve the accuracy of the model, while Fastron's active learning strategy is meant to search for changes in C-space when the workspace changes. 

While SSVM has not yet been applied to C-space approximation, we include SSVM for comparison because it attempts to minimize the $L_0$-norm of the weight vector, i.e., the number of support points. $L_0$-norm minimization is advantageous because classification time is directly dependent on the number of support points in the model. SSVM approximates the $L_0$-norm by using a weighted $L_2$-norm of model weights $\vec{\alpha}$, $\vec{\alpha}^\textsf{T}\vec{\Lambda\alpha}$, where $\vec{\Lambda}$ is a diagonal matrix. The elements of $\vec{\Lambda}$ are set to $\vec{\alpha}_i^{-2}$ for $|\vec{\alpha}_i|>\epsilon$ and to $\epsilon^{-2}$ otherwise, where $\epsilon$ is a small positive value. Including this approximated $L_0$-norm into the objective function forces many elements of $\vec{\alpha}$ to approach 0. Points whose $\vec{\alpha}_i$ approach 0 are removed from the training set before repeating the optimization.

\begin{algorithm}[t]
\caption{Sparse SVM Method (Adapted from Original \cite{Huang2010})}
\label{alg:ssvm}

\DontPrintSemicolon
\SetKwInput{Param}{Parameters}
\SetKw{break}{break}
\SetKw{and}{AND}
\SetKw{true}{TRUE}
\SetKw{false}{FALSE}
 \KwIn{Training dataset of configurations $\mathcal{X}$, collision status labels $\vec y$}
 \Param{Kernel parameter $\gamma$ (for SVM), regularization parameter $C$ (for SVM), maximum number of iterations $iter_{max}$ }
 \KwOut{Weight vector $\vec\alpha \in \mathbb{R}^{N+1}$ (where first element is model bias)}
 \tcp{Initialize weights and Gram matrix}
 $\vec\alpha \leftarrow \mathds{1}$ \\
% \tcp{Compute Gram matrix and part of Hessian matrix}
 $\vec{K}_{ij} \leftarrow k(\mathcal{X}_i,\mathcal{X}_j)\ \forall\ \mathcal{X}_i,\mathcal{X}_j \in \mathcal{X}$\\
 $\bar{\vec K} \leftarrow [\ \mathds{1}\ \vec{K}\ ]^\mathsf{T}\mathrm{diag}(\vec{y})$ \\
 \For{$iter = 1$ to $iter_{max}$}
 {
 	\tcp{Perform optimization}
 	$\vec A \leftarrow \mathrm{diag}(\vec \alpha)$, 
 	$\vec H \leftarrow \bar{\mathbf{K}}^\mathsf{T}\vec A^2\bar{\mathbf{K}}$\\
    $\vec \beta \leftarrow \argmin_{b}{\frac{1}{2}\vec b^\mathsf{T}\vec{Hb} - \mathds{1}^\mathsf{T}\vec b}\ $ s.t. $0\leq \vec{b}_i \leq C\ \forall i$ \\
    \tcp{Obtain new weights \vec{$\alpha$}}
    $\vec{\alpha_{prev}} \leftarrow \vec \alpha$ \\
    $\vec \alpha \leftarrow \vec A^2\bar{\vec{K}}\vec\beta$, 
    $\vec\alpha_i \leftarrow 0\ \forall |\vec\alpha_i| < \epsilon$ \\
    \tcp{Exit if no change in model size}
    \If {$\|\vec{\alpha}\|_0==\|\vec{\alpha_{prev}}\|_0$}{\break}
 }
 \Return{$\vec \alpha$}
\end{algorithm}

\subsubsection{Description of Experiment}
For simplicity, in this set of tests we implemented all algorithms in MATLAB and assume that each algorithm's standing will remain the same when transferred to a compiled language. For the training portion of ISVM, we use Diehl et al.'s MATLAB implementation of the algorithm \cite{Diehl}. Furthermore, note that Pan et al. \cite{Pan2015} trains on pairs of objects, which will require multiple models when working with robot arms. As we are interested in comparing the performance of single models, we instead train one incremental SVM model on collision status labels generated for the entire robot arm. For training the SSVM, we use MATLAB's $\mathrm{quadprog}()$ to solve the more efficient dual problem described by Huang et al. \cite{Huang2010}. To further improve the training time of SSVM, we terminate training when the number of support points stops decreasing. In our experience, other than training time, no other metric seemed largely affected compared to the stopping rule provided by Huang et al. (i.e., terminate when the change in $\vec{\alpha}$ is less than $10^{-4}$). Our interpretations of the training and active learning of ISVM and training of SSVM are provided in Algorithms \ref{alg:isvm} and \ref{alg:ssvm}, respectively. For Algorithm \ref{alg:isvm}, $\mathrm{incSvmUpdate}()$ performs the incremental SVM update according to Diehl et al. \cite{Diehl}, $\mathrm{rand}()$ generates a random number between 0 and 1, $\mathrm{randPick}()$ selects a sample from a set, $\mathrm{colCheck}()$ performs geometry- and kinematics-based collision checks, and $\mathrm{errRate}()$ determines the proportion of misclassifications of a given set.

Ground truth collision checking is performed using the GJK algorithm \cite{Gilbert1988}. We work with cube obstacles and fit the tightest possible bounding box around each link of the robot. Robot kinematics and visualization are performed using Peter Corke's MATLAB Robotics Toolbox \cite{Corke2017}.

We begin with a simple 2 DOF robot whose body is a single rod whose yaw and pitch may be controlled. Up to 4 obstacles are randomly placed in the environment. Fig. \ref{fig:2dofEnv} shows an example test environment. We repeat the experiments using a 4 DOF robot, which is created by concatenating two of the 2 DOF robots as shown in Fig. \ref{fig:4dofEnv}.

Fig. \ref{fig:gamma} provides TPR, TNR, and model sizes (metrics demonstrating classification accuracy and speed) for Fastron for various $\gamma$ values in randomly generated environments to motivate our choices for the $\gamma$ values. We select $\gamma = 30$ for the 2 DOF case and  $\gamma = 10$ for the 4 DOF case as these choices seem to provide an adequate balance between TPR/TNR and model size. A smaller $\gamma$ for the 4 DOF case makes sense as a wider kernel is needed to account for the larger C-space. These choices of $\gamma$ values also worked well for ISVM and SSVM.

The rational quadratic kernel is used for Fastron and SSVM, and the Gaussian kernel is used for ISVM as Pan et al. use \cite{Pan2015}. Both Fastron and SSVM are trained on a uniformly random set of samples in the input space. The incremental SVM uses the active learning approach to build its training set \cite{Pan2015}. Each algorithm is ultimately trained on $N=2000$ samples for the 2 DOF case, and $N=4000$ samples for the 4 DOF case. The 4 DOF case is undersampled because larger dataset sizes become computationally intractable for the comparison methods using our MATLAB implementations.

\begin{figure*}
    \centering
    \begin{subfigure}[b]{0.195\linewidth}
      \includegraphics[width=\linewidth,trim = {1.1cm 0.9cm 1.1cm 1.3cm},clip]{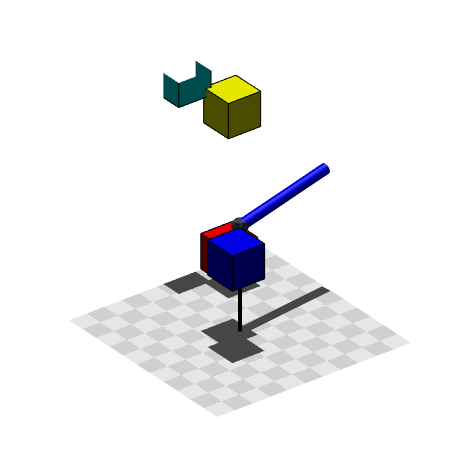}
    \end{subfigure}
    \begin{subfigure}[b]{0.195\linewidth}
      \includegraphics[width=\linewidth]{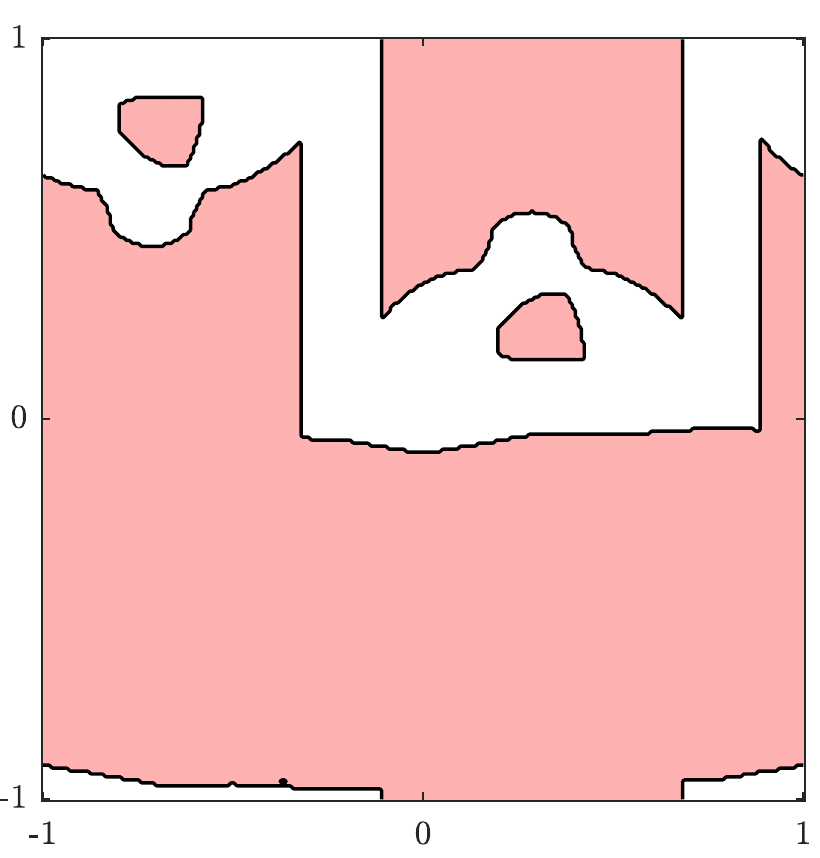}
    \end{subfigure}
	\begin{subfigure}[b]{0.195\linewidth}
      \includegraphics[width=\linewidth]{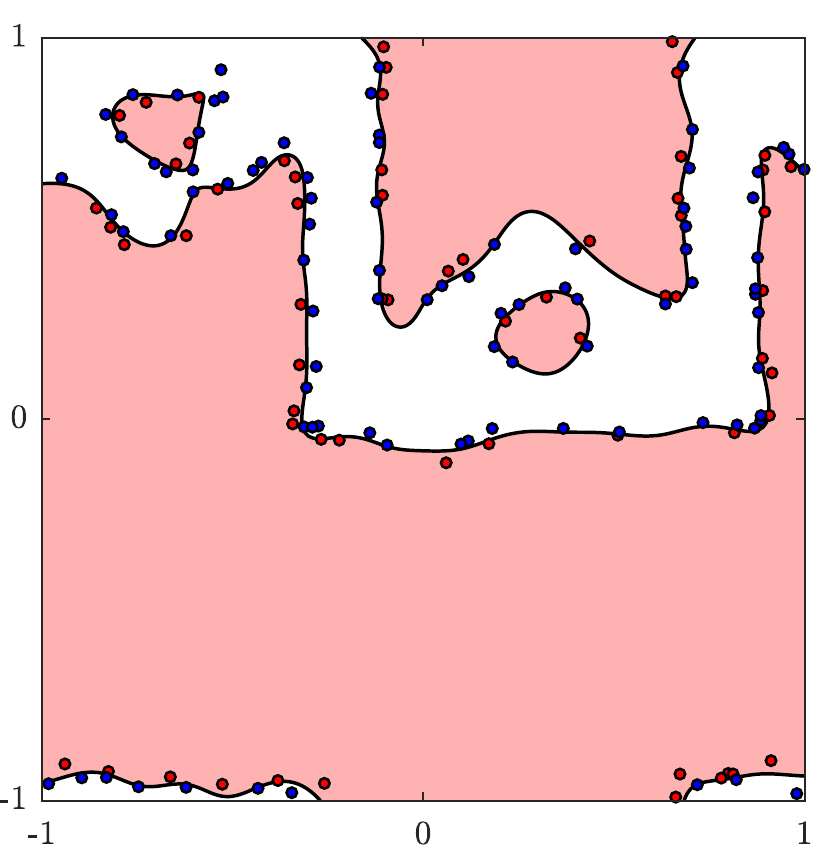}
    \end{subfigure}
    \begin{subfigure}[b]{0.195\linewidth}
      \includegraphics[width=\linewidth]{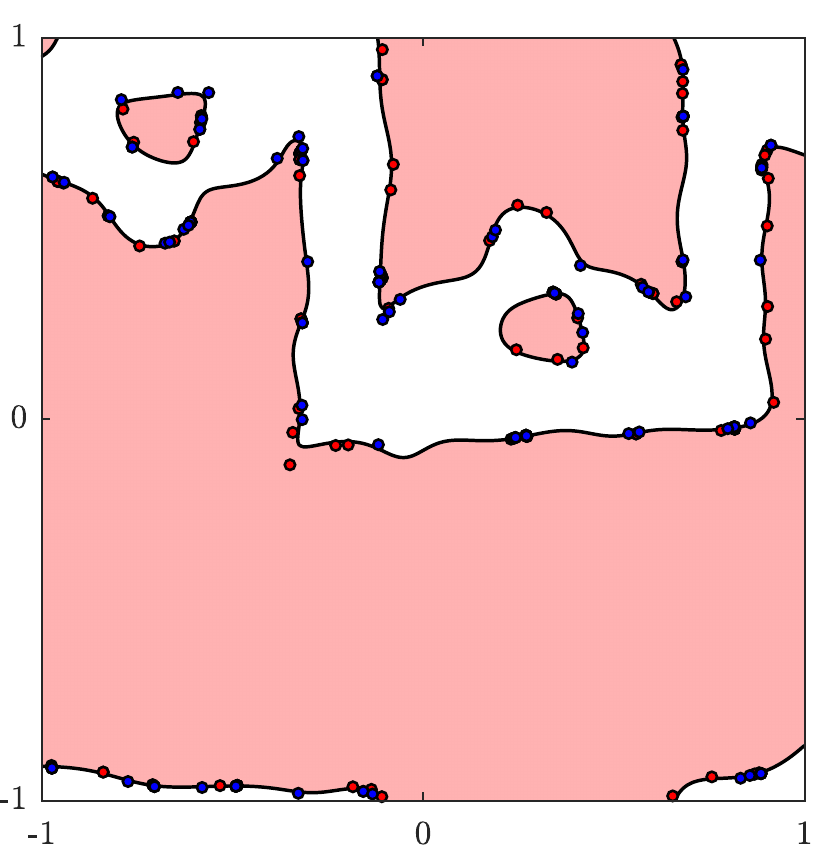}
    \end{subfigure}
    \begin{subfigure}[b]{0.195\linewidth}
      \includegraphics[width=\linewidth]{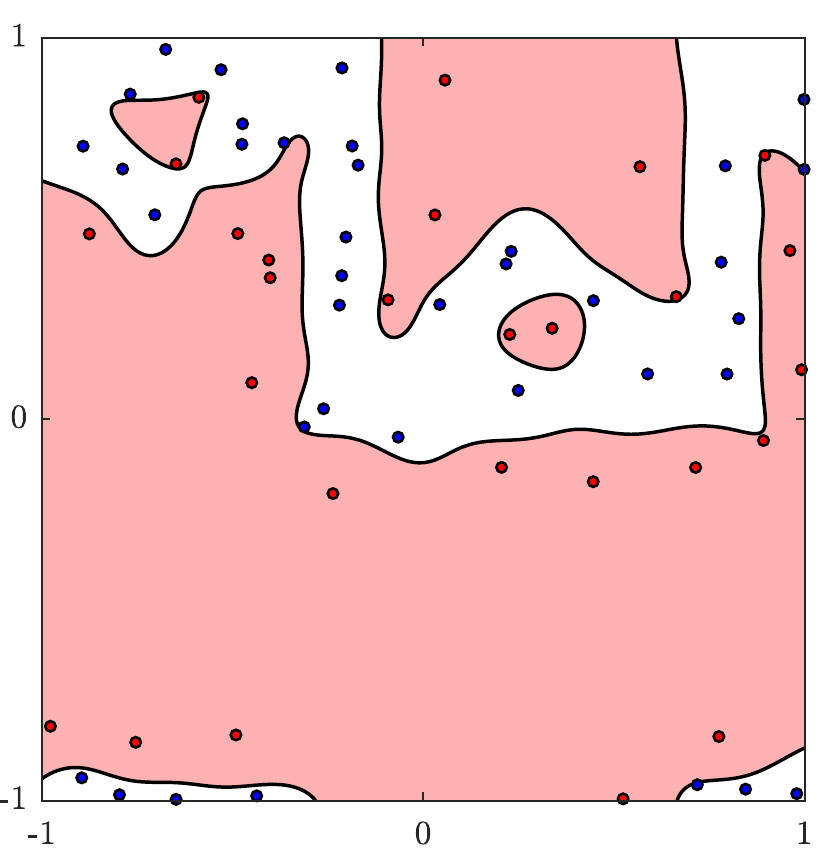}
    \end{subfigure}
    
    \vspace{32pt}
    
    \begin{subfigure}[b]{0.195\linewidth}
      \includegraphics[width=\linewidth,trim = {1.1cm 0.9cm 1.1cm 1.3cm},clip]{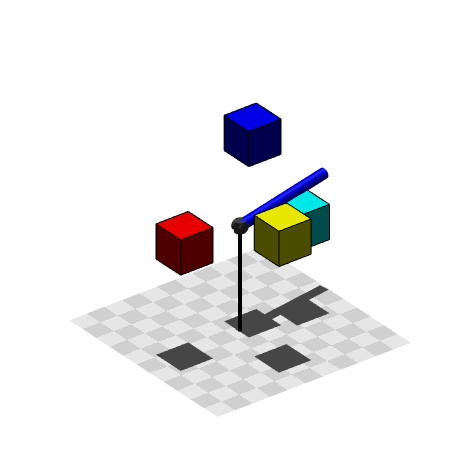}
    \end{subfigure}
        \hfill
    \begin{subfigure}[b]{0.195\linewidth}
      \includegraphics[width=\linewidth]{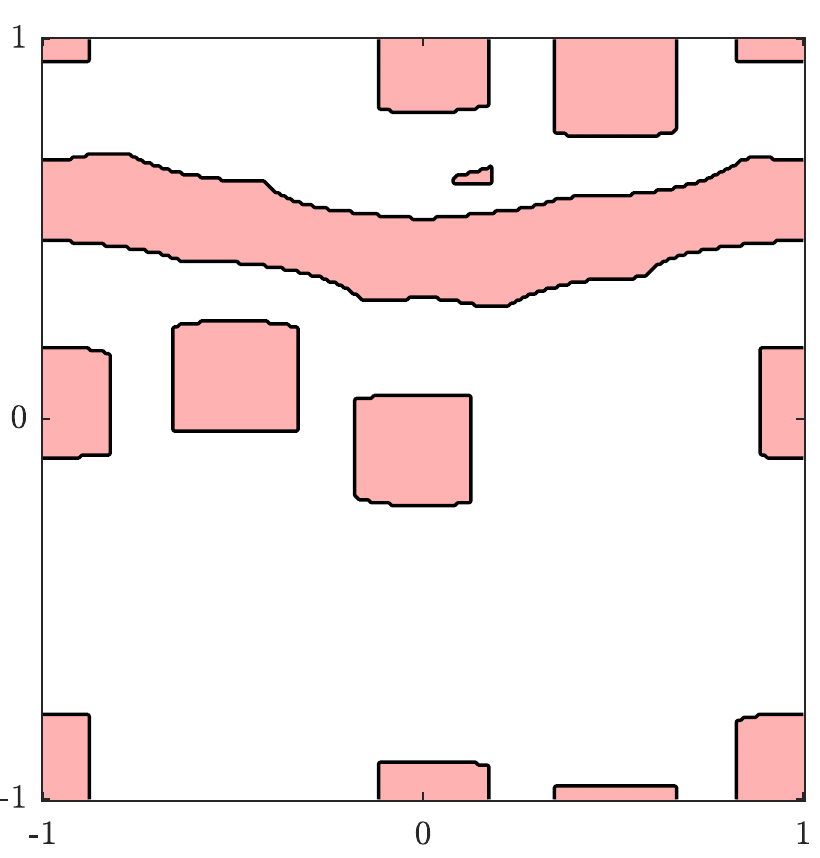}
    \end{subfigure}
        \hfill
	\begin{subfigure}[b]{0.195\linewidth}
      \includegraphics[width=\linewidth]{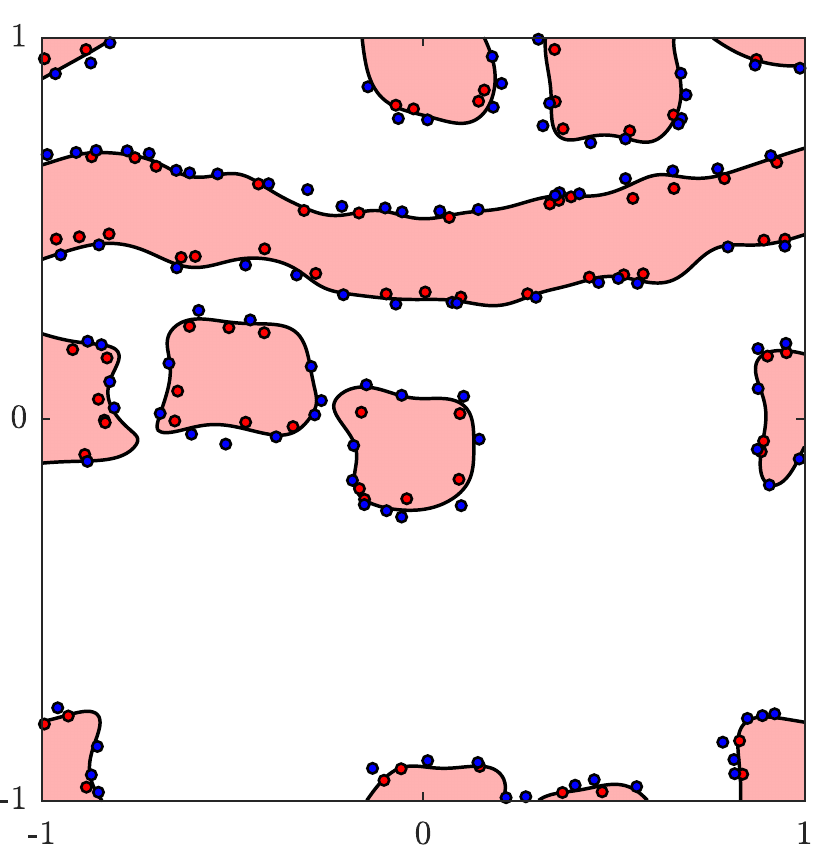}
    \end{subfigure}
        \hfill
    \begin{subfigure}[b]{0.195\linewidth}
      \includegraphics[width=\linewidth]{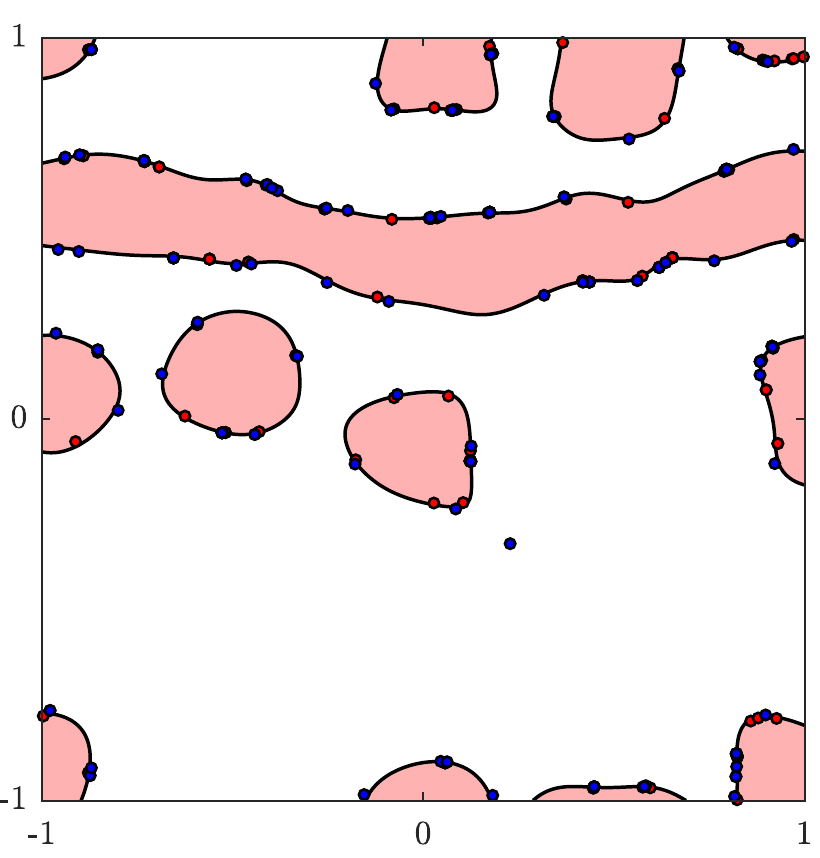}
    \end{subfigure}
        \hfill
    \begin{subfigure}[b]{0.195\linewidth}
      \includegraphics[width=\linewidth]{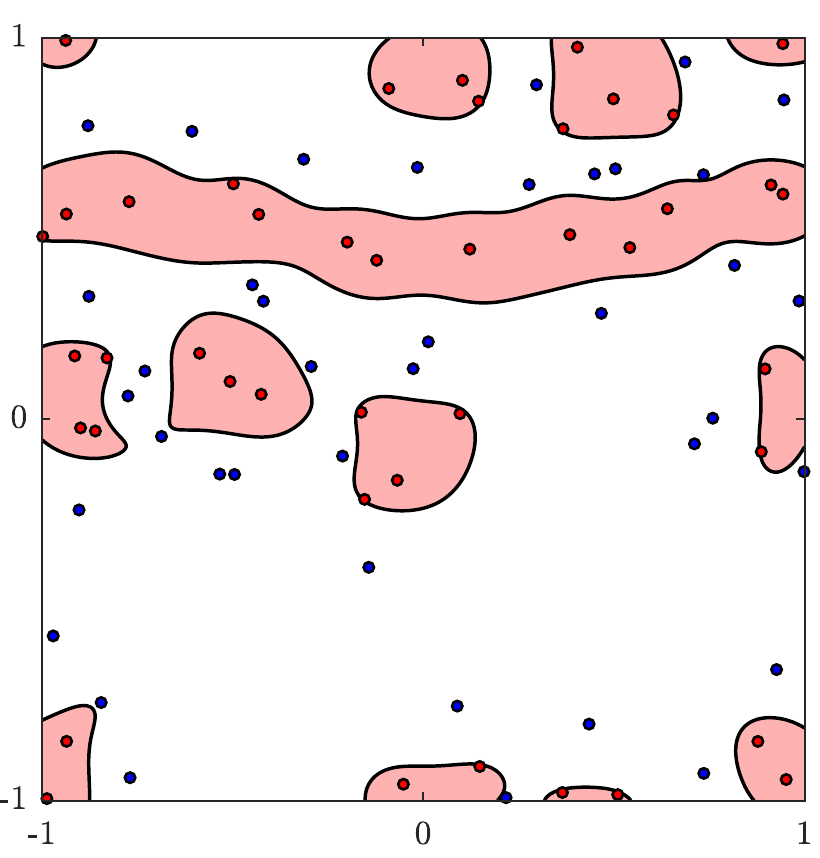}
    \end{subfigure}
    
    \vspace{32pt}
    
        \begin{subfigure}[b]{0.195\linewidth}
      \includegraphics[width=\linewidth,trim = {1.1cm 0.9cm 1.1cm 1.3cm},clip]{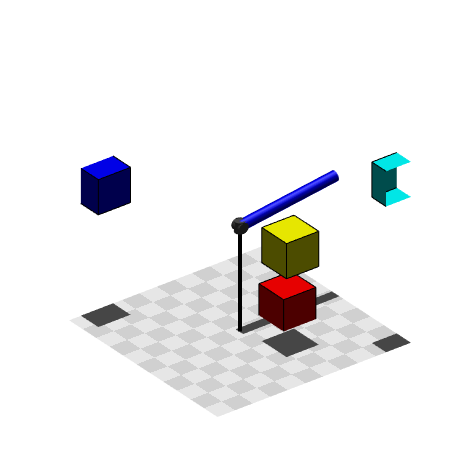}
    \end{subfigure}    \hfill
    \begin{subfigure}[b]{0.195\linewidth}
      \includegraphics[width=\linewidth]{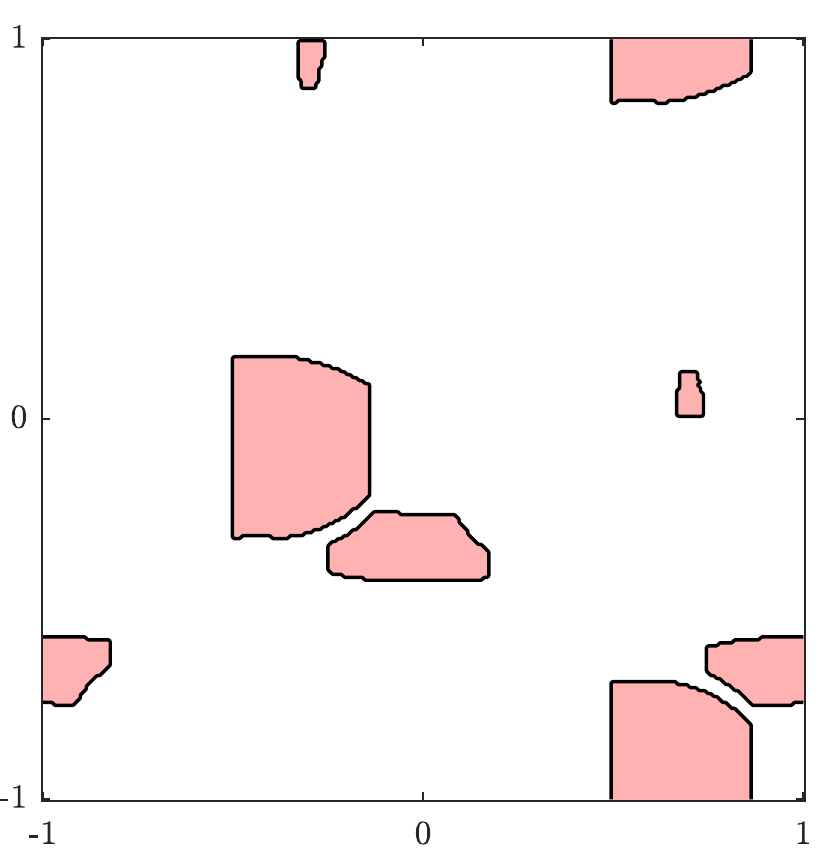}
    \end{subfigure}    \hfill
	\begin{subfigure}[b]{0.195\linewidth}
      \includegraphics[width=\linewidth]{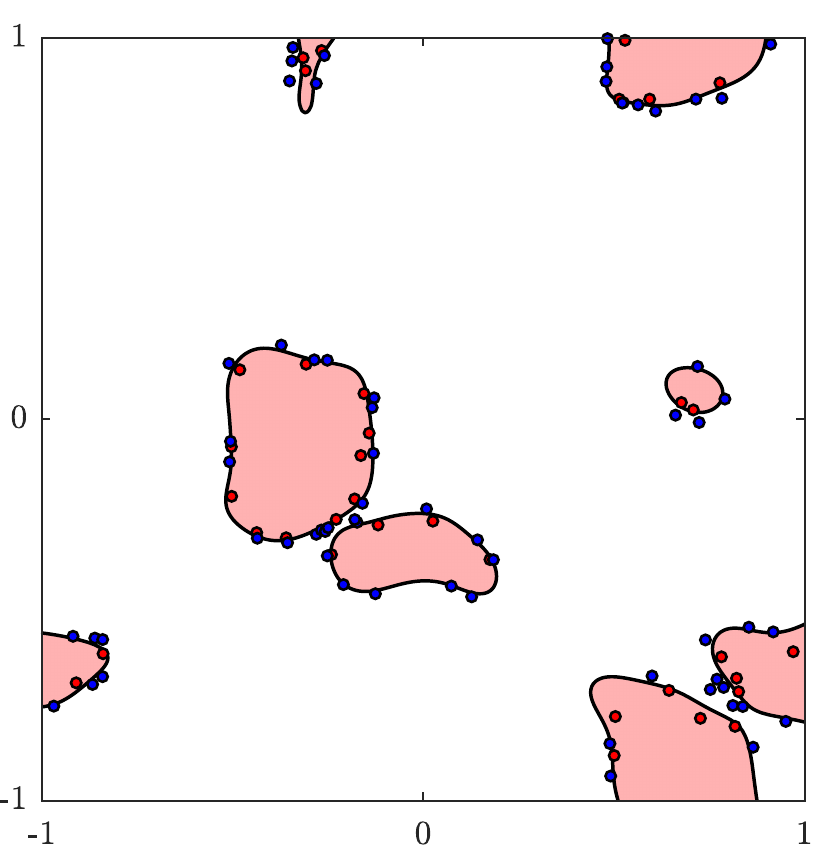}
    \end{subfigure}    \hfill
    \begin{subfigure}[b]{0.195\linewidth}
      \includegraphics[width=\linewidth]{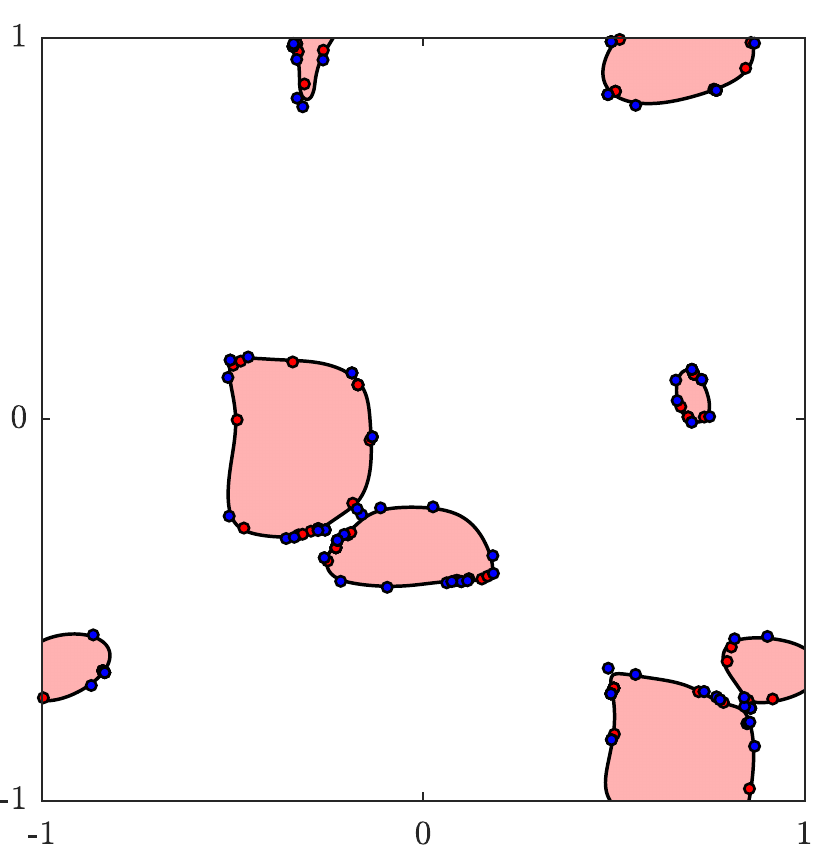}
    \end{subfigure}    \hfill
    \begin{subfigure}[b]{0.195\linewidth}
      \includegraphics[width=\linewidth]{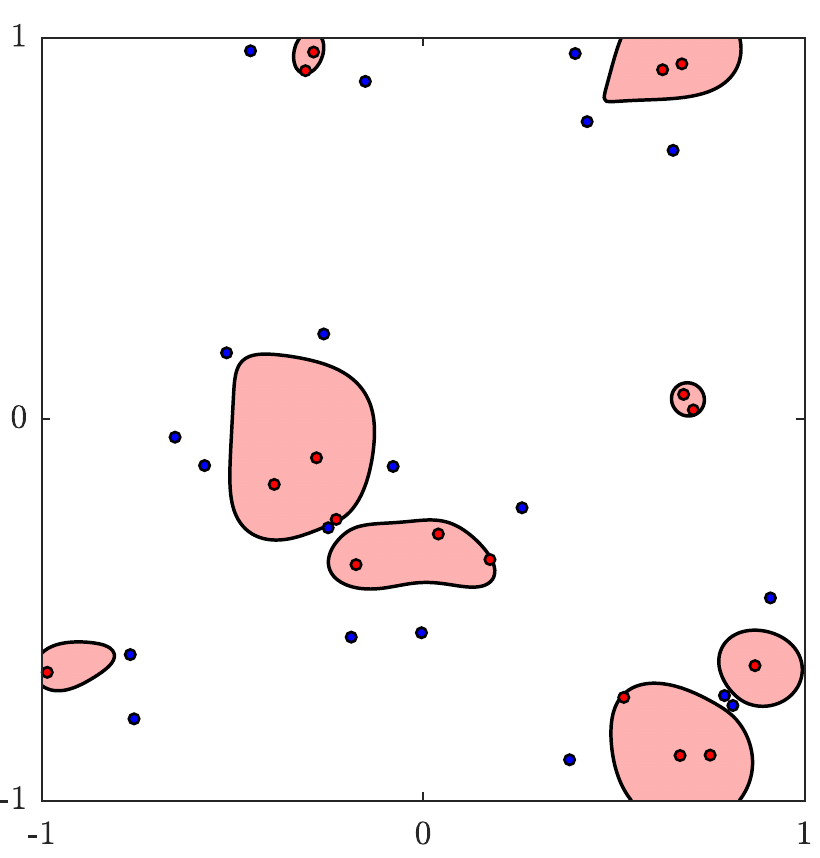}
    \end{subfigure}

    \vspace{32pt}
    
     \begin{subfigure}[b]{0.195\linewidth}
      \includegraphics[width=\linewidth,trim = {1.1cm 0.9cm 1.1cm 1.3cm},clip]{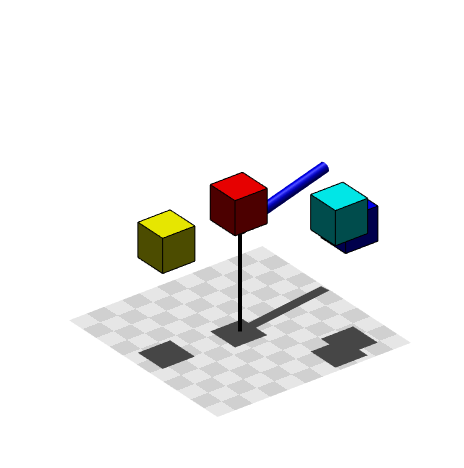}
    \end{subfigure}    \hfill
    \begin{subfigure}[b]{0.195\linewidth}
      \includegraphics[width=\linewidth]{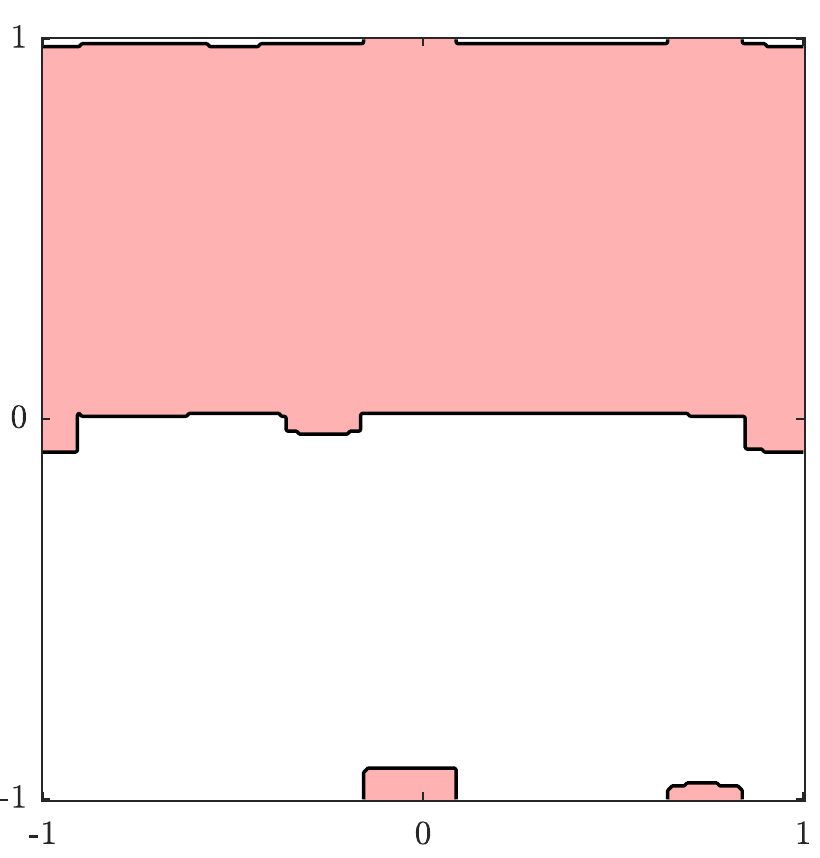}
    \end{subfigure}    \hfill
	\begin{subfigure}[b]{0.195\linewidth}
      \includegraphics[width=\linewidth]{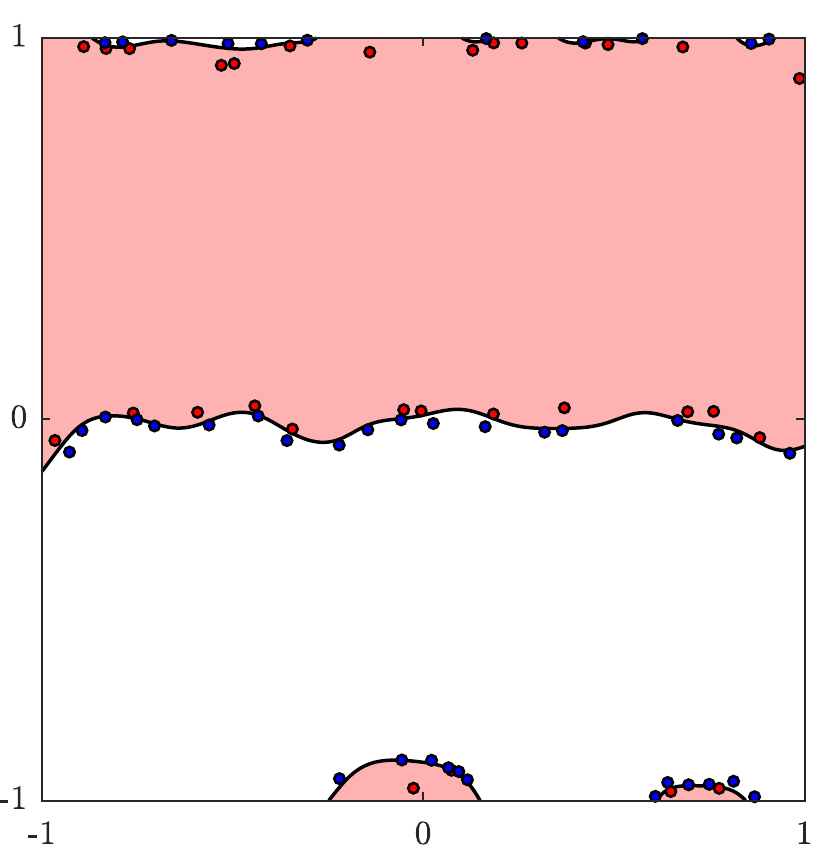}
    \end{subfigure}    \hfill
    \begin{subfigure}[b]{0.195\linewidth}
      \includegraphics[width=\linewidth]{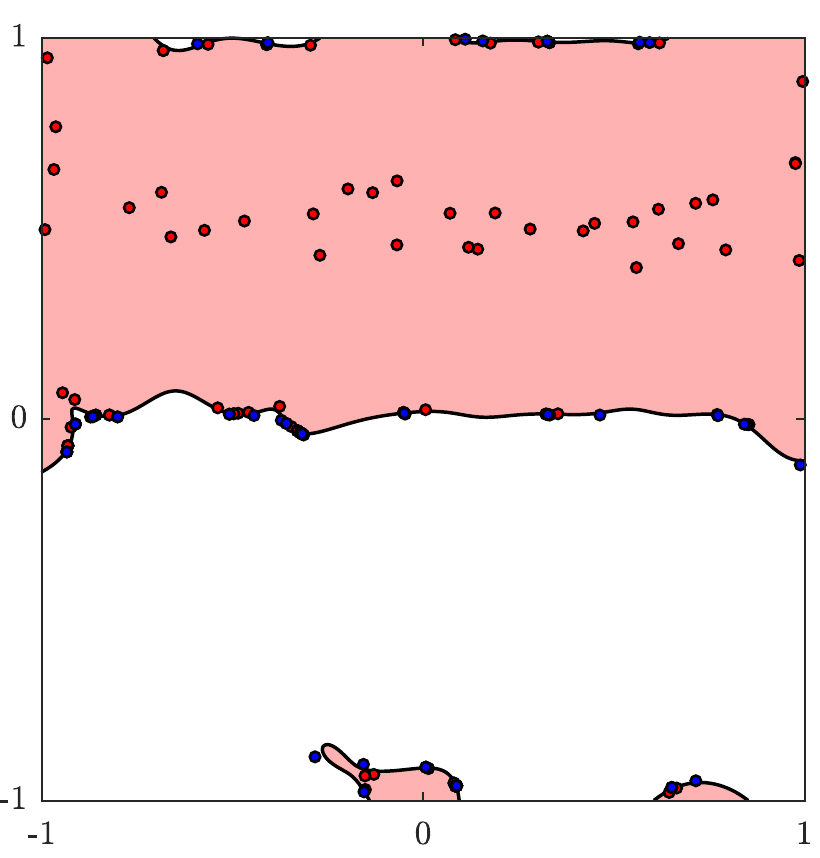}
    \end{subfigure}    \hfill
    \begin{subfigure}[b]{0.195\linewidth}
      \includegraphics[width=\linewidth]{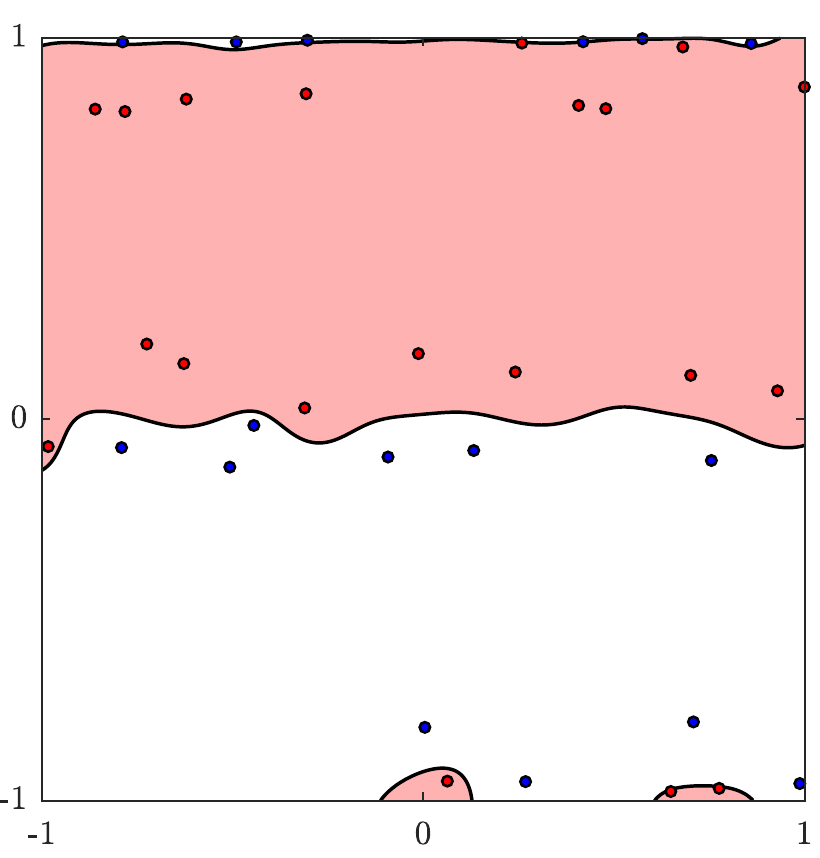}
    \end{subfigure}

    \vspace{32pt}
    
	\begin{subfigure}[b]{0.195\linewidth}
      \includegraphics[width=\linewidth,trim = {1.1cm 0.9cm 1.1cm 1.3cm},clip]{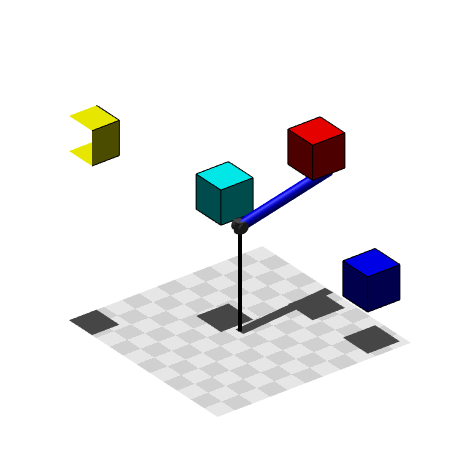}
      \caption{Workspace}
      \label{fig:methodsComparisonWorkspace}
    \end{subfigure}    \hfill
    \begin{subfigure}[b]{0.195\linewidth}
      \includegraphics[width=\linewidth]{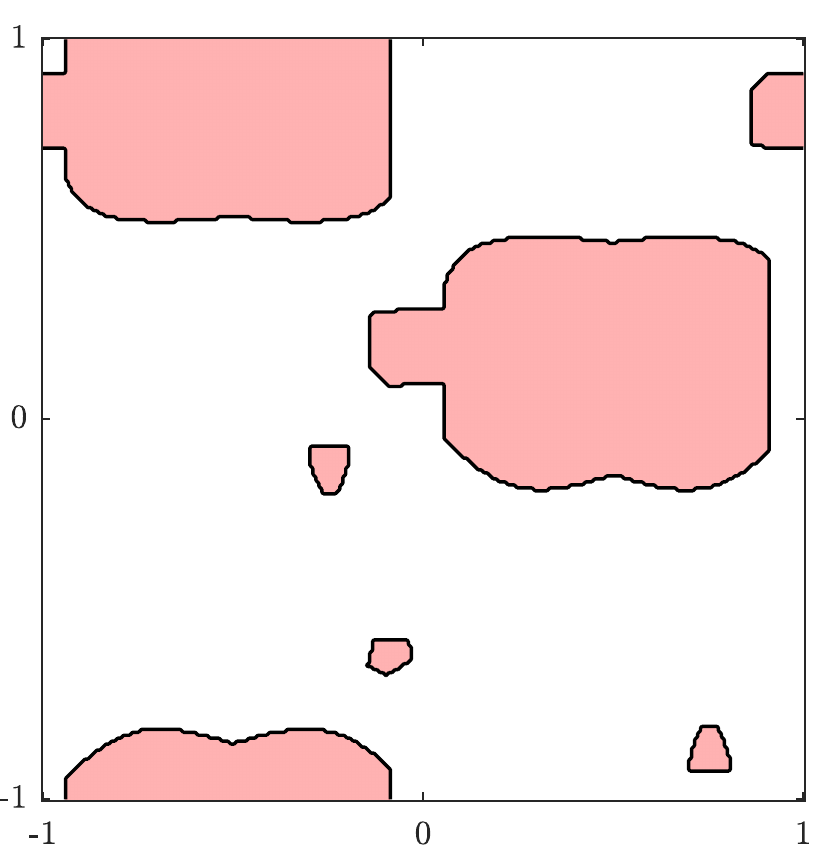}
      \caption{FK + GJK}
      \label{fig:methodsComparisonGT}
    \end{subfigure}    \hfill
	\begin{subfigure}[b]{0.195\linewidth}
      \includegraphics[width=\linewidth]{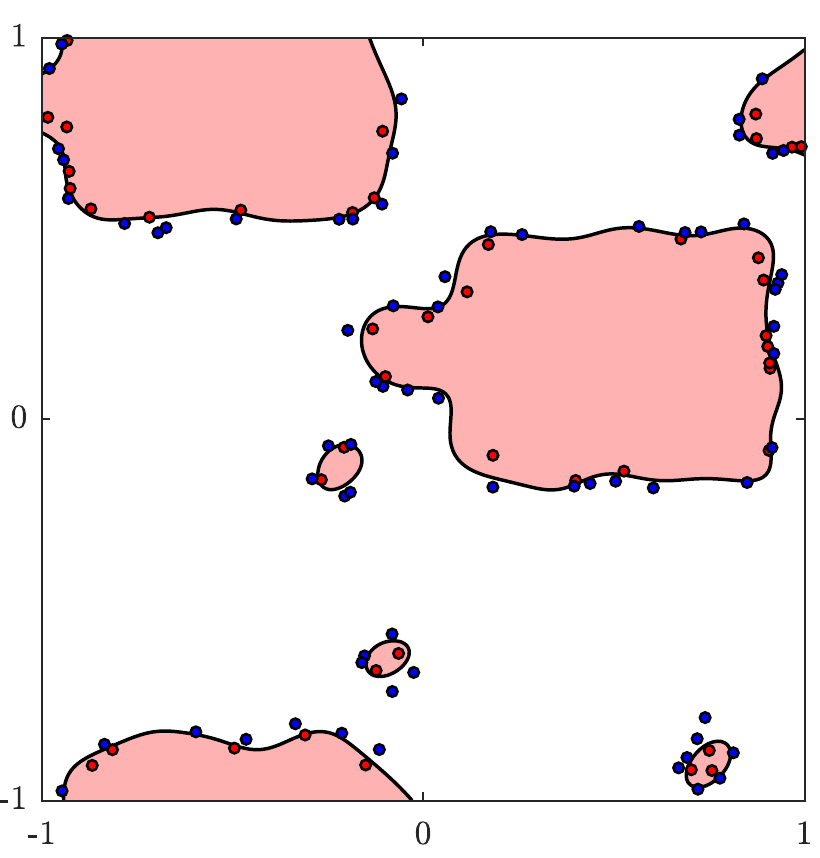}
      \caption{Fastron}
      \label{fig:methodsComparisonFastron}
    \end{subfigure}    \hfill
    \begin{subfigure}[b]{0.195\linewidth}
      \includegraphics[width=\linewidth]{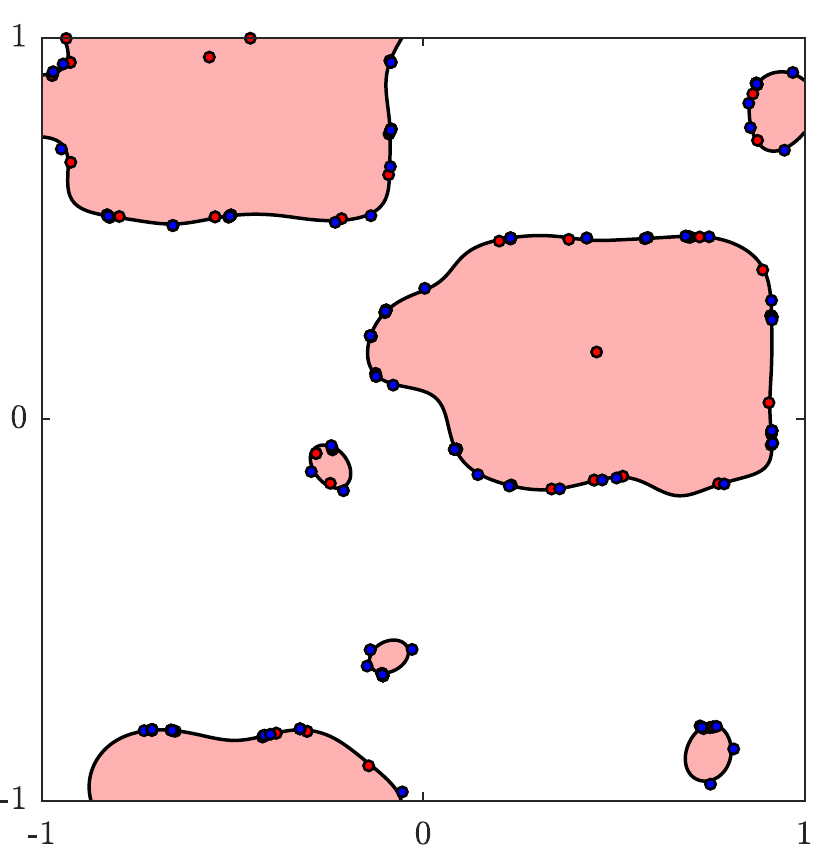}
      \caption{ISVM \cite{Diehl,Pan2015}}
      \label{fig:methodsComparisonSVM}
    \end{subfigure}    \hfill
    \begin{subfigure}[b]{0.195\linewidth}
      \includegraphics[width=\linewidth]{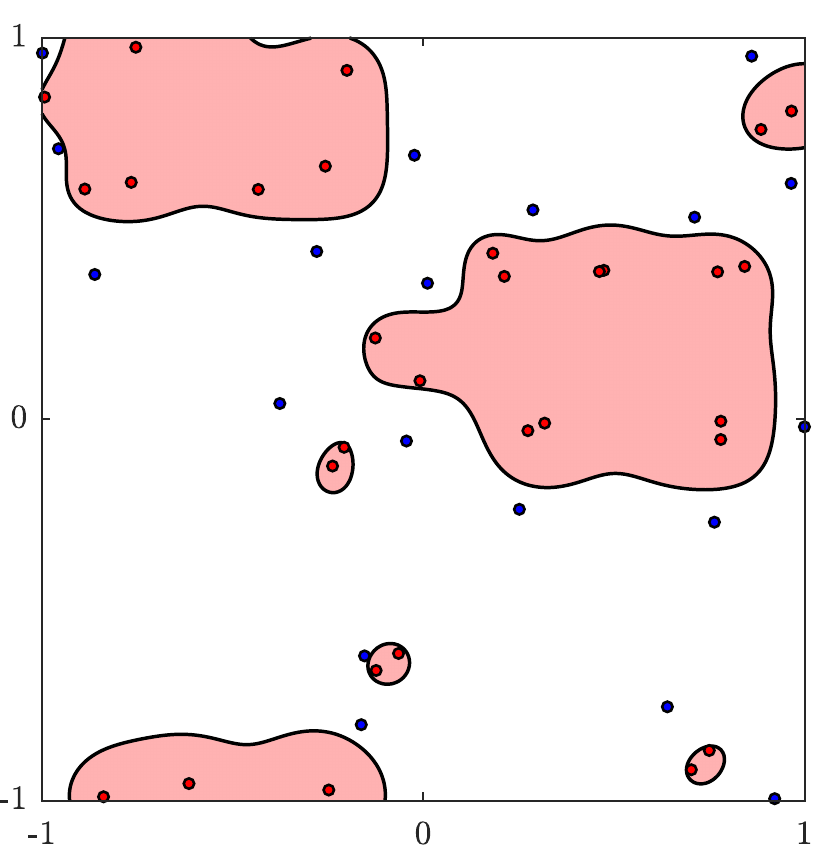}
      \caption{SSVM \cite{Huang2010}}
      \label{fig:methodsComparisonSSVM}
    \end{subfigure}
    
    \caption{Example static environments and C-space approximations using various methods for a simple 2 DOF robot.}
    \label{fig:methodsComparison}
\end{figure*}

\subsubsection{Description of Metrics}
As the sizes of $\mathcal{C}_{free}$ and $\mathcal{C}_{obs}$ may be unbalanced depending on the locations of workspace obstacles, overall accuracy may be skewed when one of the classes dominates the C-space. Thus, we also include within class accuracy, i.e., TPR and TNR. TPR measures the proportion of in-collision samples correctly classified, and TNR measures the proportion of collision-free samples correctly classified.

As measures of classification performance, we measure the average time required to perform a proxy collision check and the size of the model in terms of the number of support points $|\mathcal{S}|$. As proxy checks using each of these models requires a weighted sum of kernel evaluations, proxy check times would be directly related to the number of support points and computational complexity of the type of kernel. As labeling a given configuration when using GJK first requires forward kinematics (FK) to locate where the links of the arm would be, FK is included in the query timing for GJK.

\begin{figure}[t!]
\begin{subfigure}[t]{\linewidth}
\scriptsize
\centering
\ra{1.3}
\begin{tabularx}{\linewidth}{@{}llll@{}}
\toprule
Method & $|\mathcal{S}|$ & Query Time & Training Time \\
\midrule
Fastron				& $133.6\pm 32.4$ 			& $424\pm 90\ $ns 			&$\bm{231.4 \pm 38.3}\ $\textbf{ms} \\
ISVM \cite{Diehl,Pan2015} 	& $349.1\pm 128.6$ 			& $1.78\pm 0.62\ \mu$s 		&$2.72 \pm 1.60\ $s \\
SSVM \cite{Huang2010}		& $\bm{52.4\pm 11.3}$ 	& $\bm{220\pm 52}\ $\textbf{ns} & $18.8 \pm 2.69\ $s \\
FK + GJK & --- & $105.1\pm 16.5\ \mu$s & ---\\
\bottomrule
\end{tabularx}
\caption{2 DOF}
\label{table:staticCase2DOF}
\end{subfigure}

\begin{subfigure}[t]{\linewidth}
\scriptsize
\centering
\setlength{\tabcolsep}{3.5pt}
\ra{1.3}
\begin{tabularx}{\linewidth}{@{}llll@{}}
\toprule
Method & $|\mathcal{S}|$ & Query Time & Training Time \\
\midrule
Fastron					& $1084.1\pm 206.9$ 			& $3.4\pm 0.6\ \mu$s 			&$\bm{707.6\pm 75.2\ }$\textbf{ms} \\
ISVM \cite{Diehl,Pan2015} 	& $1078.1\pm 125.8$ 			& $5.8\pm 0.7\ \mu$s 		&$75.5 \pm 20.6 \ $s			  \\
SSVM \cite{Huang2010}		& $\bm{509.8\pm 106.3}$ 	& $\bm{1.5\pm 0.3\ \mu}$\textbf{s} & $193.7\pm 33.6 \ $s \\
FK + GJK & --- & $166.6\pm 20.0\ \mu$s & --- \\
\bottomrule
\end{tabularx}
\caption{4 DOF}
\label{table:staticCase4DOF}
\end{subfigure}

\caption{Performance of Fastron in a static environment for the 2 DOF and 4 DOF robots shown in Fig. \ref{fig:2and4dofEnv} compared against incremental SVM (ISVM) \cite{Diehl,Pan2015}, sparse SVM (SSVM) \cite{Huang2010}, and the ground truth collision detection method GJK \cite{Gilbert1988}. Lower is better for model size $|\mathcal{S}|$, query time, and model training time. The best results are shown in bold.}
\label{table:staticCase2and4DOF}
\end{figure}

Finally, even though we are working with static environments, we compute training time to determine how long it takes for each algorithm to generate a model. We include training time to assess which algorithms would be suitable for changing environments. This training time includes both the time required to learn the weights for the model and the time required to label each point in the training set using a collision checker.

\subsubsection{Results for 2 DOF Robot}
Fig. \ref{fig:staticResults2DOF} and the table in Fig. \ref{table:staticCase2DOF} provide comparisons of three machine learning methods trained on an environment with up to 4 obstacles for a 2 DOF case. All methods have comparable accuracy, TPR, and TNR. All methods performed significantly faster than GJK, which took on average $105.1\ \mu$s per collision check. Of the machine learning based methods, SSVM had the sparsest solution, which results in it providing the fastest proxy collision checks. Furthermore, ISVM has on average more than 6 times the number of support points probably because more support points are placed near the boundary after its active learning procedure. Due to its larger number of support points and slower kernel, ISVM has the slowest proxy collision check timings.

SSVM takes nearly 100 times longer than Fastron to train, the fastest among all tested methods. Profiling the code suggests the slowest parts of SSVM and ISVM is having to repeatedly solve an optimization problem. Fastron's speed in training compared to the SVM methods is due to not having to optimize an objective function completely and not requiring the entire Gram matrix. 

Visualization of the C-space is straightforward when working with a 2 DOF robot and allows qualitative comparison. Figs. \ref{fig:methodsComparisonGT} shows example ground truth C-spaces (with axes scaled to the input space) for the workspaces shown in Figs. \ref{fig:methodsComparisonWorkspace} along with approximations provided by the three kernel-based methods in Fig. \ref{fig:methodsComparisonFastron}-\ref{fig:methodsComparisonSSVM}. Examining the placement of support points, it is apparent that support points are typically placed closest to the boundary for ISVM except when some C-space obstacles are large. On the other hand, support point placement is much farther from the boundary for SSVM.

\begin{figure}[t]
	\centering
	\begin{subfigure}[t]{0.49\linewidth}
	\includegraphics[width=\linewidth,trim={5.1cm 10.4cm 5.3cm 10.5cm},clip]{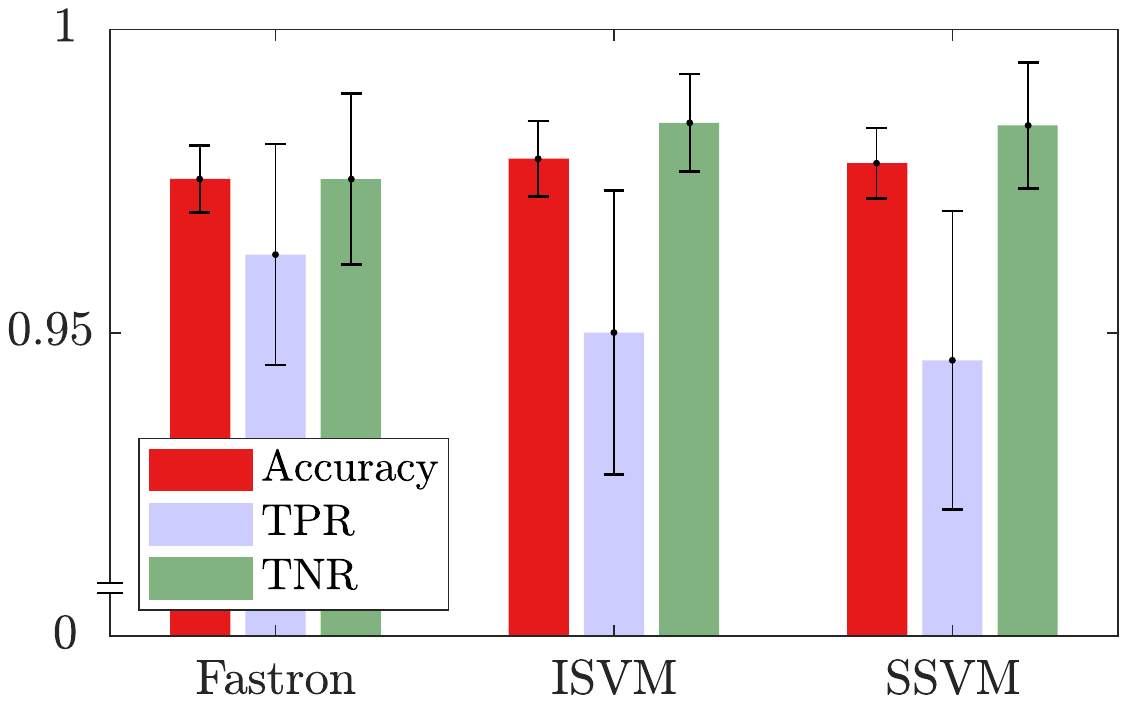}
	\caption{2 DOF}
	\label{fig:staticResults2DOF}
	\end{subfigure}
	\hfill
		\begin{subfigure}[t]{0.49\linewidth}
	\includegraphics[width=\linewidth,trim={5.1cm 10.4cm 5.3cm 10.5cm},clip]{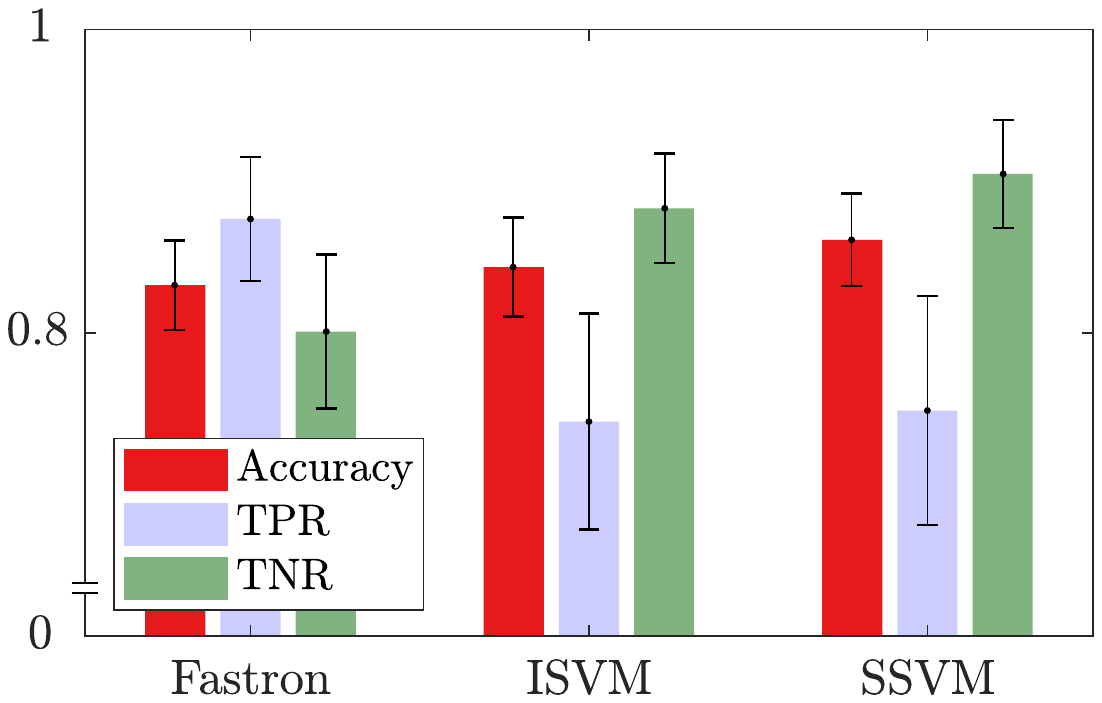}
	\caption{4 DOF}
	\label{fig:staticResults4DOF}
	\end{subfigure}
	\caption{Accuracy, TPR, and TNR (higher is better) for Fastron, ISVM \cite{Karasuyama2010} with active learning \cite{Pan2015}, and SSVM \cite{Huang2010} in a static environment using the 2 DOF and 4 DOF robots shown in Fig. \ref{fig:2and4dofEnv}. GJK \cite{Gilbert1988} is used as the ground truth collision detection method.}	\label{fig:staticResults2and4DOF}
\end{figure}

\begin{figure}[b]
	\centering
	\begin{subfigure}[b]{0.49\linewidth}
		{\includegraphics[width=\linewidth,trim={4cm 9cm 5.1cm 2.2cm},clip]{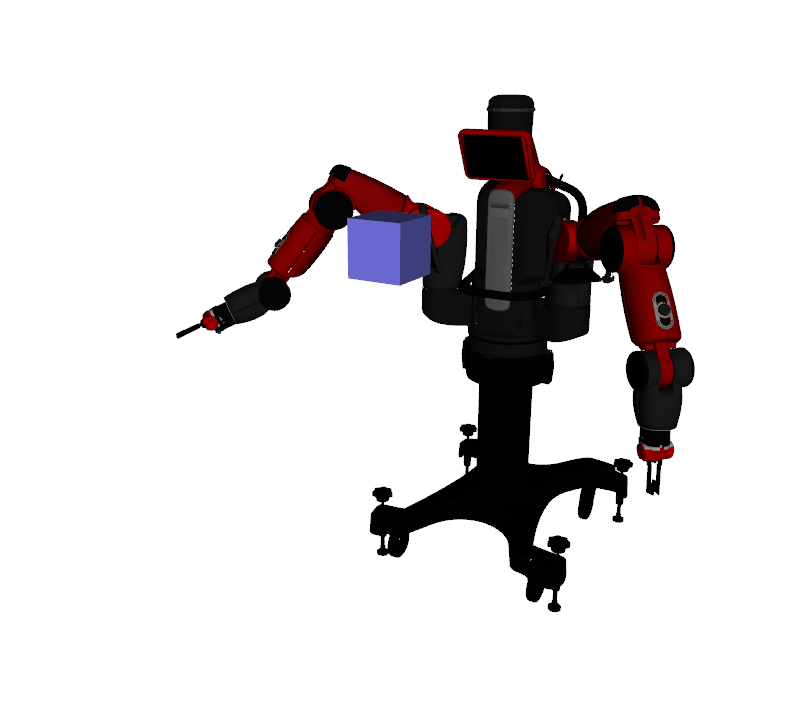}}
	\end{subfigure}
	\begin{subfigure}[b]{0.49\linewidth}
		{\includegraphics[width=\linewidth,trim={4cm 9cm 5.1cm 2.2cm},clip]{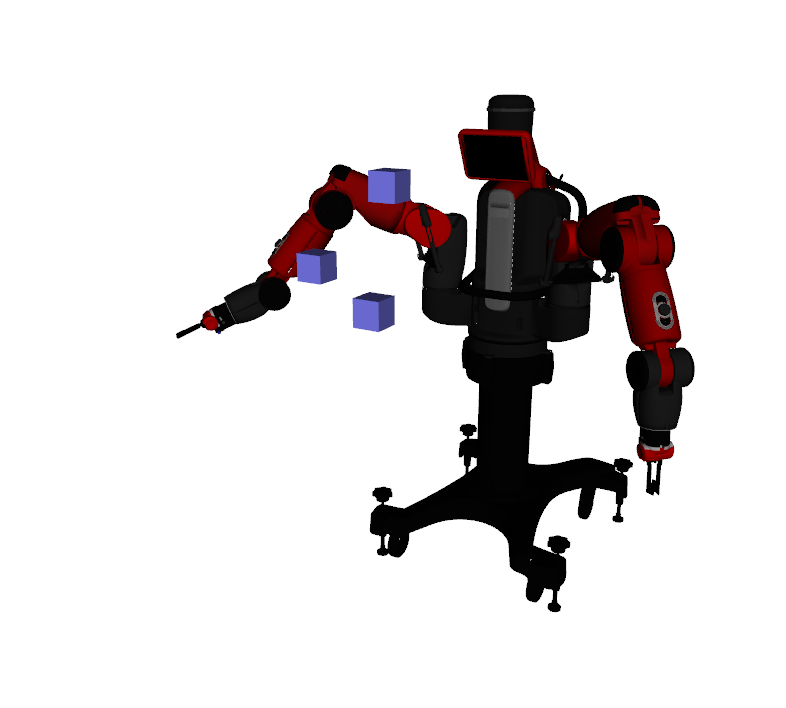}}
	\end{subfigure}
	\caption{Baxter robot with cube obstacles used for collision detection and motion planning experiments.}
	\label{fig:baxterExample}
\end{figure}

\begin{figure}[t]
  \centering
  \begin{subfigure}[b]{\linewidth}
  \scriptsize
\centering
\ra{1.3}
\begin{tabular*}{\linewidth}{@{}llll@{}}
\toprule
Method & $|\mathcal{S}|$ & Query Time & Update Time \\
\midrule
Fastron (FCL)					& $638.6\pm 139.4$ 			& $2.1\pm 0.5\ \mu$s 			&$42.0\pm 5.8\ $ms \\
Fastron (GJK)					& $\bm{558.1\pm 151.1}$ 			& $\bm{1.9\pm 0.5\ \mu}$\textbf{s} 			&$\mathbf{15.8\pm 3.8\ }$\textbf{ms} \\
FK + GJK \cite{Gilbert1988}  & --- & $9.1\pm 0.2\ \mu$s & --- \\
FK + FCL \cite{Pan2012} & --- & $29.2\pm 0.5 \ \mu$s & --- \\
\bottomrule
\end{tabular*}
\caption{4 DOF}
\label{table:dynCase4DOF}
  \end{subfigure}
  
  \begin{subfigure}[b]{\linewidth}
  \scriptsize
\centering
\ra{1.3}
\begin{tabular*}{\linewidth}{@{}lllll@{}}
\toprule
Method & $|\mathcal{S}|$ & Query Time & Update Time \\
\midrule
Fastron (FCL)					& $1060.5\pm 217.9$ 			& $4.1\pm 0.8\ \mu$s 			&$62.4\pm 10.4\ $ms \\
Fastron (GJK)					& $\bm{886.8\pm 215.5}$ 			& $\bm{3.5\pm 0.9\ \mu}$\textbf{s} 			&$\bm{24.2\pm 6.4\ }$\textbf{ms} \\
FK + GJK \cite{Gilbert1988}		& --- & $9.5 \pm 0.2 \ \mu$s & --- \\
FK + FCL \cite{Pan2012} & --- & $30.5\pm 0.5\ \mu$s & --- \\
\bottomrule
\end{tabular*}
\caption{6 DOF}
\label{table:dynCase6DOF}
  \end{subfigure}
  
  \begin{subfigure}[b]{\linewidth}
  
\scriptsize
\centering
\ra{1.3}
\begin{tabular*}{\linewidth}{@{}lllll@{}}
\toprule
Method & $|\mathcal{S}|$ & Query Time & Update Time \\
\midrule
Fastron (FCL)		& $1306.0\pm 291.7$ 			& $5.5\pm 1.2 \ \mu$s 			&$74.7\pm 15.7\ $ms \\
Fastron (GJK)					& $\bm{1151.3\pm 283.9}$ 			& $\bm{4.9\pm 1.2\ \mu}$\textbf{s} 			&$\bm{32.6\pm 7.5 \ }$\textbf{ms} \\
FK + GJK \cite{Gilbert1988}  & --- & $9.6\pm 0.4 \ \mu$s & --- \\
FK + FCL \cite{Pan2012} & --- & $29.3\pm 0.5\ \mu$s & --- \\
\bottomrule
\end{tabular*}
\caption{7 DOF}
\label{table:dynCase7DOF}
  \end{subfigure}
\caption{Performance in a changing environment when actuating the first 4 DOF, the first 6 DOF, and all 7 DOF of the Baxter robot's right arm, respectively, shown in Fig. \ref{fig:baxterExample}. Lower is better for model size $|\mathcal{S}|$, query time, and model update time. The best results are shown in bold.}%
\label{table:dynCase467DOF}
\end{figure}

\begin{figure}
\scriptsize
\centering
\ra{1.3}
\begin{tabular*}{\linewidth}{@{}lllll@{}}
\toprule
Method & $|\mathcal{S}|$ & Query Time & Update Time \\
\midrule
Fastron (FCL)		& $1637.5\pm 305.9$ 			& $6.7\pm 1.2 \ \mu$s 			&$89.2\pm 17.4\ $ms \\
Fastron (GJK)					& $\bm{1582.9\pm 321.1}$ 			& $\bm{6.6\pm 1.4\ \mu}$\textbf{s} 			&$\bm{56.3\pm 14.5 \ }$\textbf{ms} \\
FK + GJK \cite{Gilbert1988}  & --- & $15.3\pm 1.6 \ \mu$s & --- \\
FK + FCL \cite{Pan2012} & --- & $29.1\pm 0.8\ \mu$s & --- \\
\bottomrule
\end{tabular*}
\caption{Performance in a changing environment with three moving obstacles when actuating all 7 DOF of the Baxter robot's right arm, respectively, shown in Fig. \ref{fig:baxterExample}. Lower is better for model size $|\mathcal{S}|$, query time, and model update time. The best results are shown in bold.}
\label{table:dynCase7DOFmultiobs}
\end{figure}

\begin{figure*}[t]
\centering
\begin{subfigure}[t]{0.32\linewidth}
	\includegraphics[width=\linewidth]{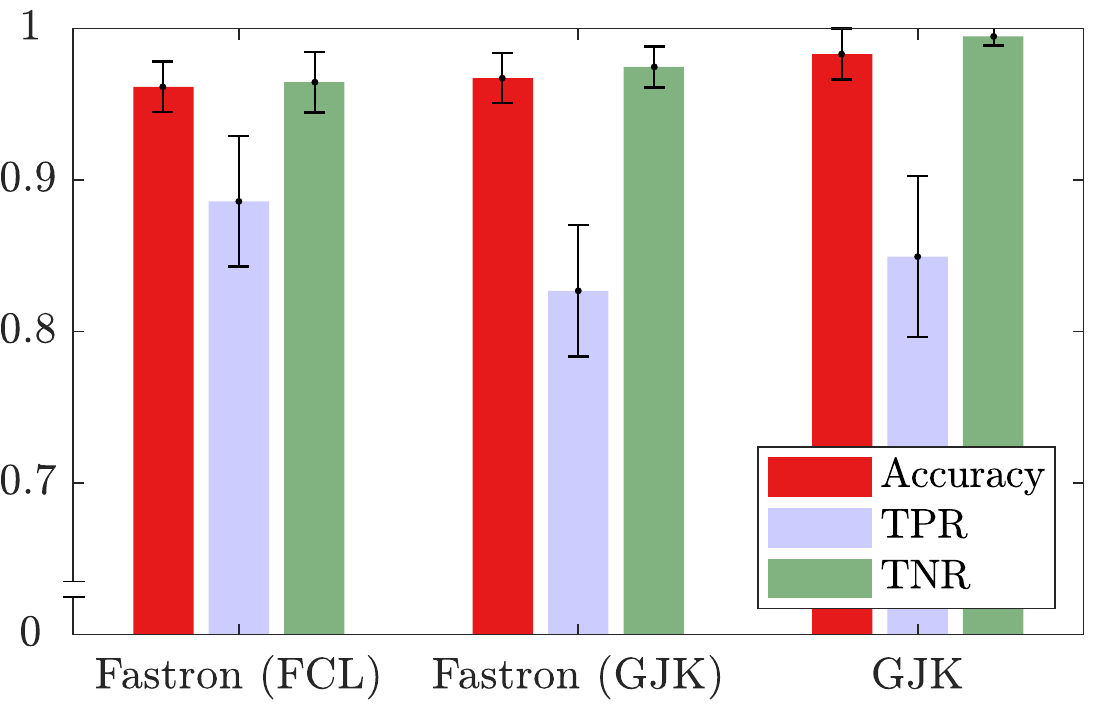}
	\caption{4 DOF}
	\label{fig:dynamicResults4DOF}
\end{subfigure}
\hfill
\begin{subfigure}[t]{0.32\linewidth}
	\includegraphics[width=\linewidth]{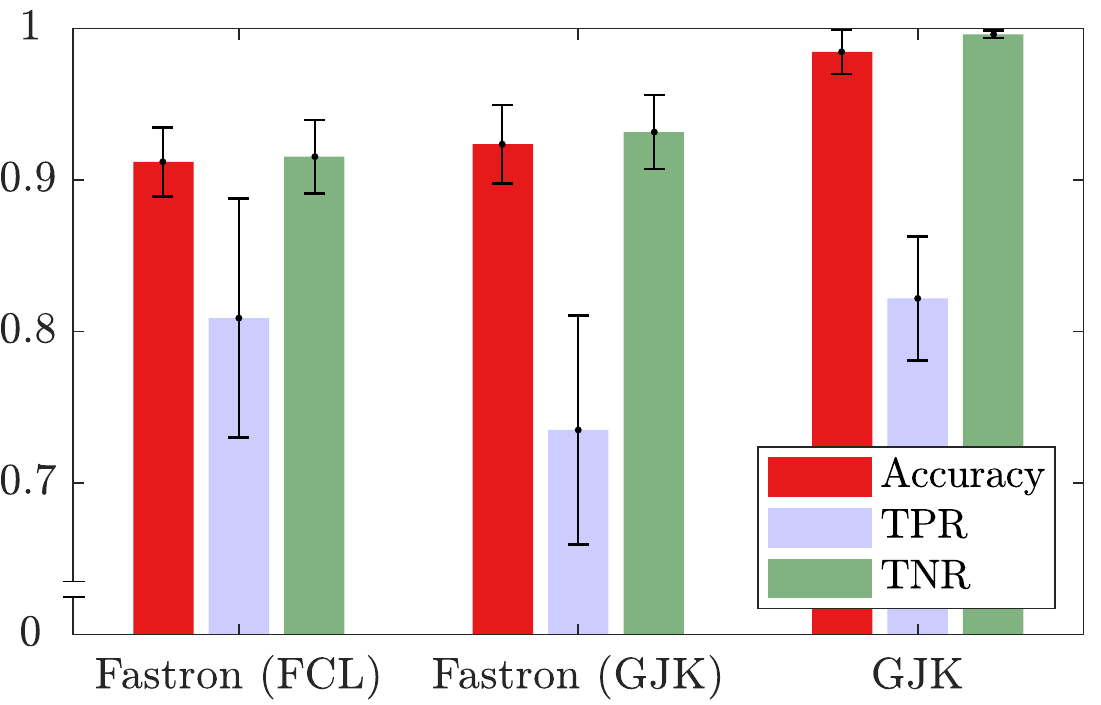}
	\caption{6 DOF}
	\label{fig:dynamicResults6DOF}
\end{subfigure}
\hfill
\begin{subfigure}[t]{0.32\linewidth}
	\includegraphics[width=\linewidth]{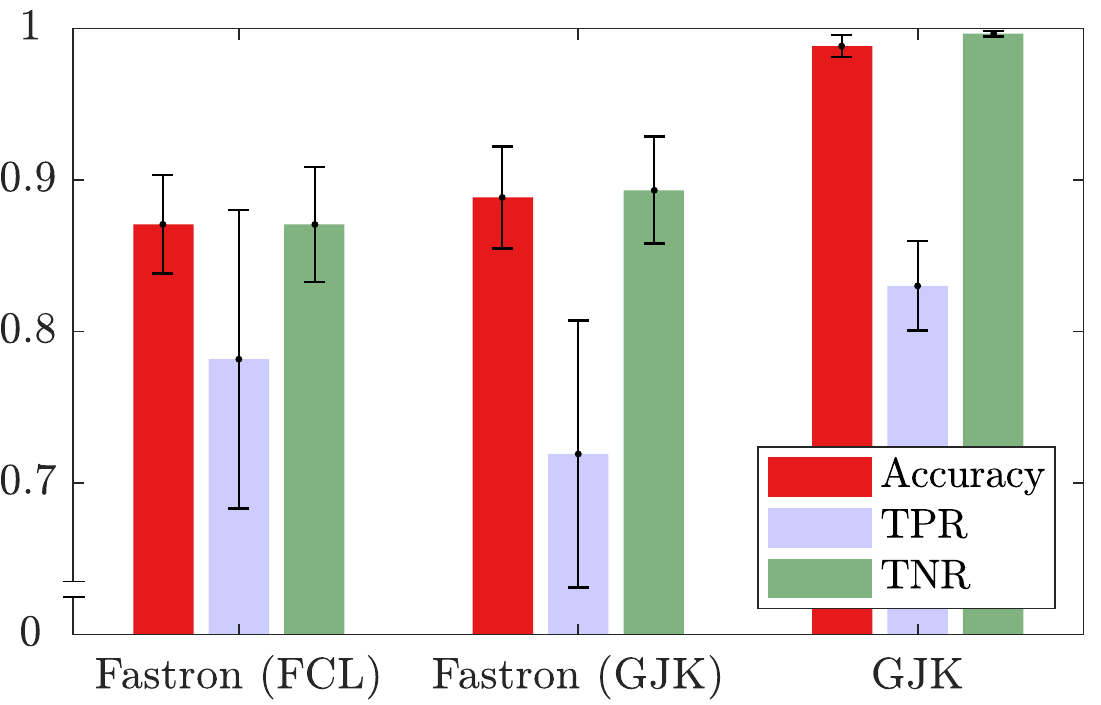}
	\caption{7 DOF}
	\label{fig:dynamicResults7DOF}
\end{subfigure}
\caption{Accuracy, TPR, and TNR (higher is better) for Fastron and GJK \cite{Gilbert1988} in a continuously changing environment using the Baxter's right arm and one workspace obstacle as shown in Fig. \ref{fig:baxterExample}. Fastron is trained on both FCL \cite{Pan2012} and GJK. FCL is used as the ground truth collision detection method. The three cases correspond to actuating the first 4 DOF, the first 6 DOF, and all 7 DOF of the arm, respectively. Note that the GJK results do not change significantly because the entire arm is always used for geometry-based collision detection while the dimensionality of the space that Fastron models equals the number of DOF.}
\label{fig:dynamicResults467DOF}
\end{figure*}

\subsubsection{Results for 4 DOF Robot}
Fig. \ref{fig:staticResults4DOF} and the table in Fig. \ref{table:staticCase4DOF} provide a comparison of the three machine learning methods trained on an environment with up to 4 obstacles. ISVM and SSVM have comparable accuracy, TPR, and TNR. On the other hand, Fastron has slightly less accuracy than the other methods, but has significantly more TPR and less TNR. Higher TPR is advantageous for collision detection where a more conservative prediction is preferred. We anticipate all methods would improve in terms of accuracy, TPR, and TNR if given more training data.

As with the 2 DOF case, all methods performed significantly faster than GJK, which took on average $166.6\ \mu$s per collision check. SSVM once again had the sparsest solution, requiring half as many support points as the other methods. SSVM's sparsest solution also makes it provide the fastest predictions. Fastron and ISVM require approximately the same number of support points, but Fastron classifies faster due to its cheaper kernel.

SSVM takes the longest to train, while Fastron trains at least 200 times faster. Once again, Fastron's training speed is due to not having to optimize an objective function completely and not requiring the entire Gram matrix.

\subsection{Performance in Changing Environment}
\label{changingEnvironment}
\subsubsection{Description of Experiment and Metrics}
We test the Fastron algorithm in changing environments using C++. As Fastron is the only learning-based method that globally models C-space for changing environments to our knowledge, we do not include comparisons to other machine learning methods. Furthermore, the training timings for ISVM and SSVM from the static case suggest that our implementations of these methods are not suited for changing environments. Instead, we compare Fastron to two collision detection methods: GJK and the Flexible Collision Library (FCL) \cite{Pan2012}, the default collision library for the MoveIt! motion planning framework. We use FCL as the ground truth labels for these tests, and use GJK \cite{Gilbert1988} as an approximate geometry method. We train Fastron on labels from FCL and GJK and include the results for both variations.

The experiments involve moving boxes around the reachable workspace of a Baxter robot's right arm. For GJK, we use cylinders as approximations for the geometry of each link in the arm. Robot kinematics and visualization are performed using MoveIt! \cite{Sucan} and rviz \cite{Hershberger2018} in ROS \cite{Quigley}. We anticipate that having only one workspace obstacle (such as the workspace shown on the left in Fig. \ref{fig:baxterExample}) would be the most challenging case for Fastron due to a misbalance of class sizes in C-space. On the other hand, one workspace obstacle would be the easiest case for geometry- and kinematics-based collision detectors whose performance is dependent on the number of objects in the workspace. Thus, most analysis in the following experiments involve one workspace obstacle, but results from multiple obstacle workspaces (such as that shown on the right in Fig. \ref{fig:baxterExample}) are also included.

For the experiments with one obstacle, we perform three versions of the tests: using the first 4 DOF of the arm (excluding all wrist motions), the first 6 DOF (excluding gripper rotation), and all 7 DOF. For the 4 DOF and 6 DOF experiments, we perform full collision checking on the arm, but leave the unactuated joints' positions fixed at 0. We only consider the 7 DOF version when using workspaces with multiple obstacles.

We used a coarse grid search to select the parameters for Fastron. We use $\gamma = 10$ for the 4 DOF case and $\gamma = 5$ for the higher DOF cases. The conditional bias parameter is $\beta=100$ for the 4 DOF case and $\beta = 500$ for the higher DOF cases. Our active learning strategy found and labeled 500 additional points for each case.

As with the static case, we compute accuracy, TPR, and TNR to measure the correctness against FCL of each approximation method. Additionally, we include model size for Fastron and the query timing of all methods. FK is included in the query timing for both GJK and FCL. Finally, we include the time required to update the Fastron model given the previously trained model.

\subsubsection{Results for 4, 6, and 7 DOF Manipulators}
Fig. \ref{fig:dynamicResults467DOF} and the tables in Fig. \ref{table:dynCase467DOF} provide comparisons of Fastron (trained on FCL and GJK) to the geometry-based collision detectors for the 4, 6, and 7 DOF cases, respectively, when using one workspace obstacle. In all cases, GJK had the highest accuracy and TNR. The TPR of Fastron trained on FCL almost meets that of GJK, but only beats GJK for the 4 DOF case.

Comparing Fastron trained on FCL to Fastron trained on GJK, we can see a noticeable improvement in update time. The update rate increases from around 24 to 63 Hz for the 4 DOF case, from 16 to 41 Hz for the 6 DOF case, and from 13 to 30 Hz for the 7 DOF case. The reason for the drastic decrease in update time is obvious: as a large portion of time to update the model is spent on performing collision checks, using a cheaper collision detector for labeling would decrease the update time. On the other hand, we also see that the TPR of Fastron (GJK) is lower compared to that of Fastron (FCL), probably because training on the approximate collision detection method causes some missed detections that would only be detected with a more precise detector. These missed detections may also explain why Fastron (GJK) requires fewer support points on average than Fastron (FCL).

As the degrees of freedom increase, we notice the accuracies of Fastron decrease while model sizes, query times, and update times increase. This is due to the fact that higher dimensional spaces requires more data to correctly model. GJK and FCL are unaffected by the change in DOF because we always need to use the entire arm for collision detections.

The table in Fig. \ref{table:dynCase7DOFmultiobs} provides model sizes, query times, and update times for the full 7 DOF arm when there are three workspace obstacles. Comparing the table in Fig. \ref{table:dynCase7DOFmultiobs} to that in Fig. \ref{table:dynCase7DOF}, we can see that Fastron query times increase marginally when the number of workspace obstacles increase because the number of support points required to model the corresponding C-space is higher. We can see that GJK suffers when the number of workspace obstacles increase while FCL does not significantly change. Consequently, while update times increase for both versions of Fastron with more workspace obstacles, the update times increase more for the GJK-based Fastron than for the FCL-based Fastron.

\begin{figure}[!ht]
  \centering
  \includegraphics[width=\linewidth]{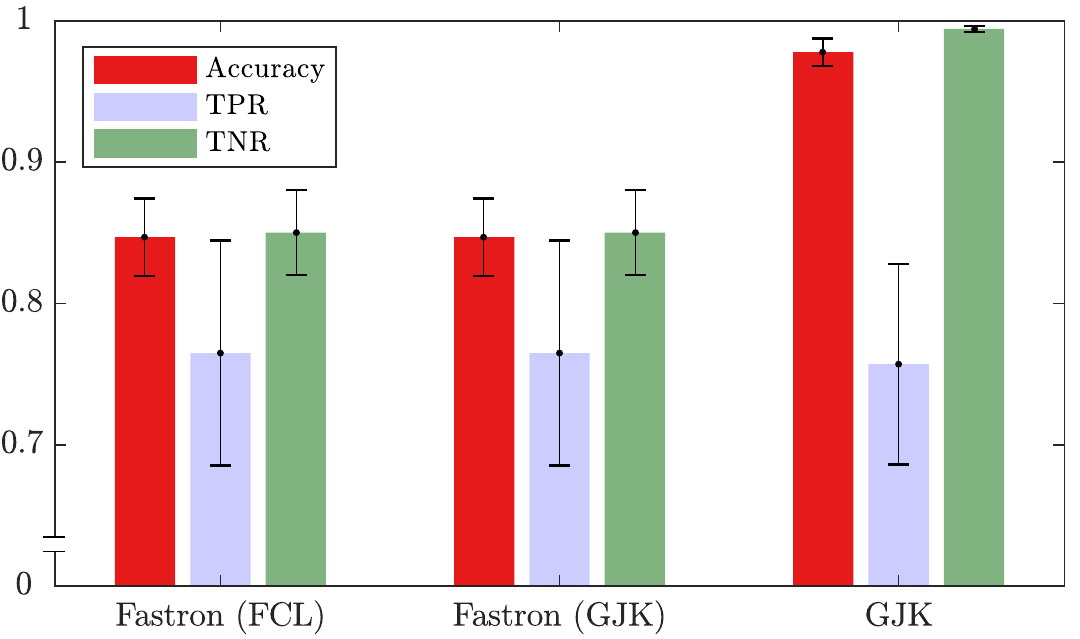}
  \caption{Accuracy, TPR, and TNR (higher is better) for Fastron (trained on FCL \cite{Pan2012}), Fastron (trained on GJK), and GJK \cite{Gilbert1988} in a continuously changing environment using one of the Baxter's 7 DOF arms and three workspace obstacles as shown in Fig. \ref{fig:baxterExample}. FCL is used for ground truth.}
  \label{fig:dynamicResults7Multiobs}
\end{figure}

\begin{figure*}[!ht]
  \centering
  \includegraphics[width=\linewidth,trim={0.8cm 0cm 0.3cm 0cm},clip]{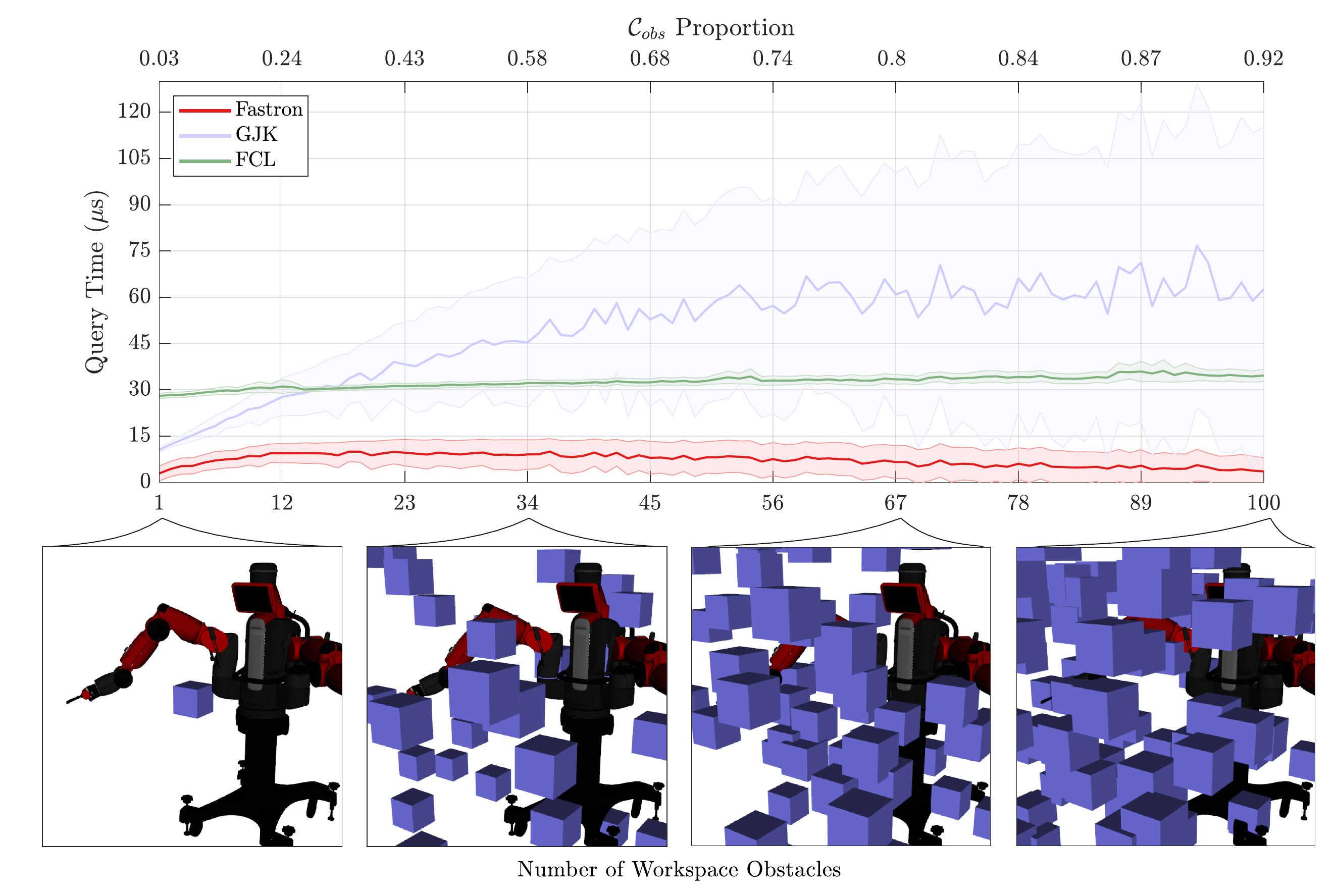}
  \caption{Query times for Fastron, GJK \cite{Gilbert1988}, and FCL \cite{Pan2012} against the number of workspace obstacles (bottom axis) and the estimated proportion that $\mathcal{C}_{obs}$ occupies of the C-space (top axis). GJK's and FCL's query times increase linearly with the number of workspace obstacles, while Fastron's query times increase before decreasing around where $\mathcal{C}_{obs}$ occupies $50\%$ of the C-space.}
  \label{fig:numObsSweep}
\end{figure*}

\begin{figure*}[ht]
    \centering
    \begin{subfigure}[b]{0.245\linewidth}
      \includegraphics[width=\linewidth,trim={0cm 0cm 0.8cm 0cm},clip]{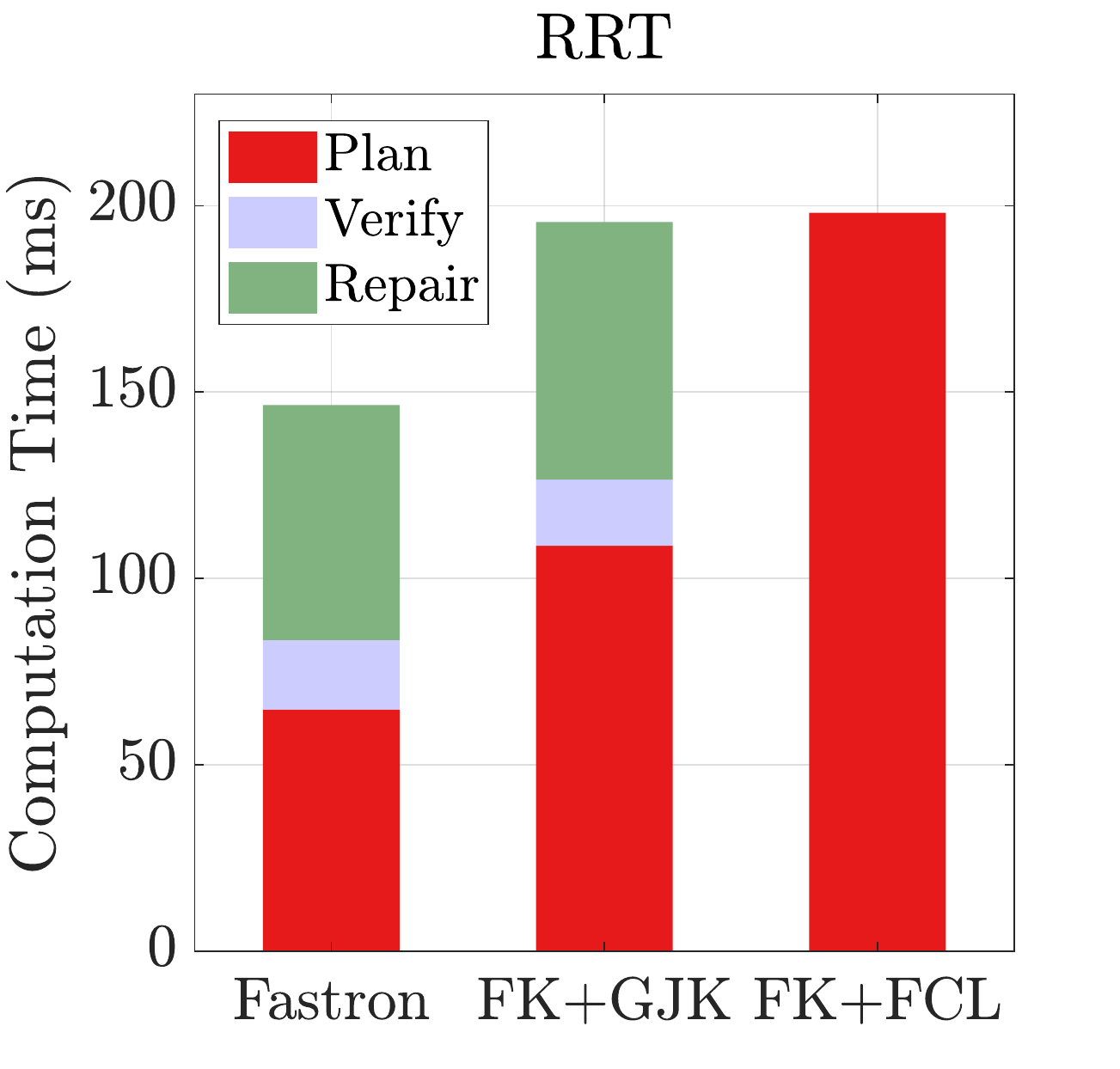}
    \end{subfigure}
    \hfill
    \begin{subfigure}[b]{0.245\linewidth}
      \includegraphics[width=\linewidth,trim={0cm 0cm 0.8cm 0cm},clip]{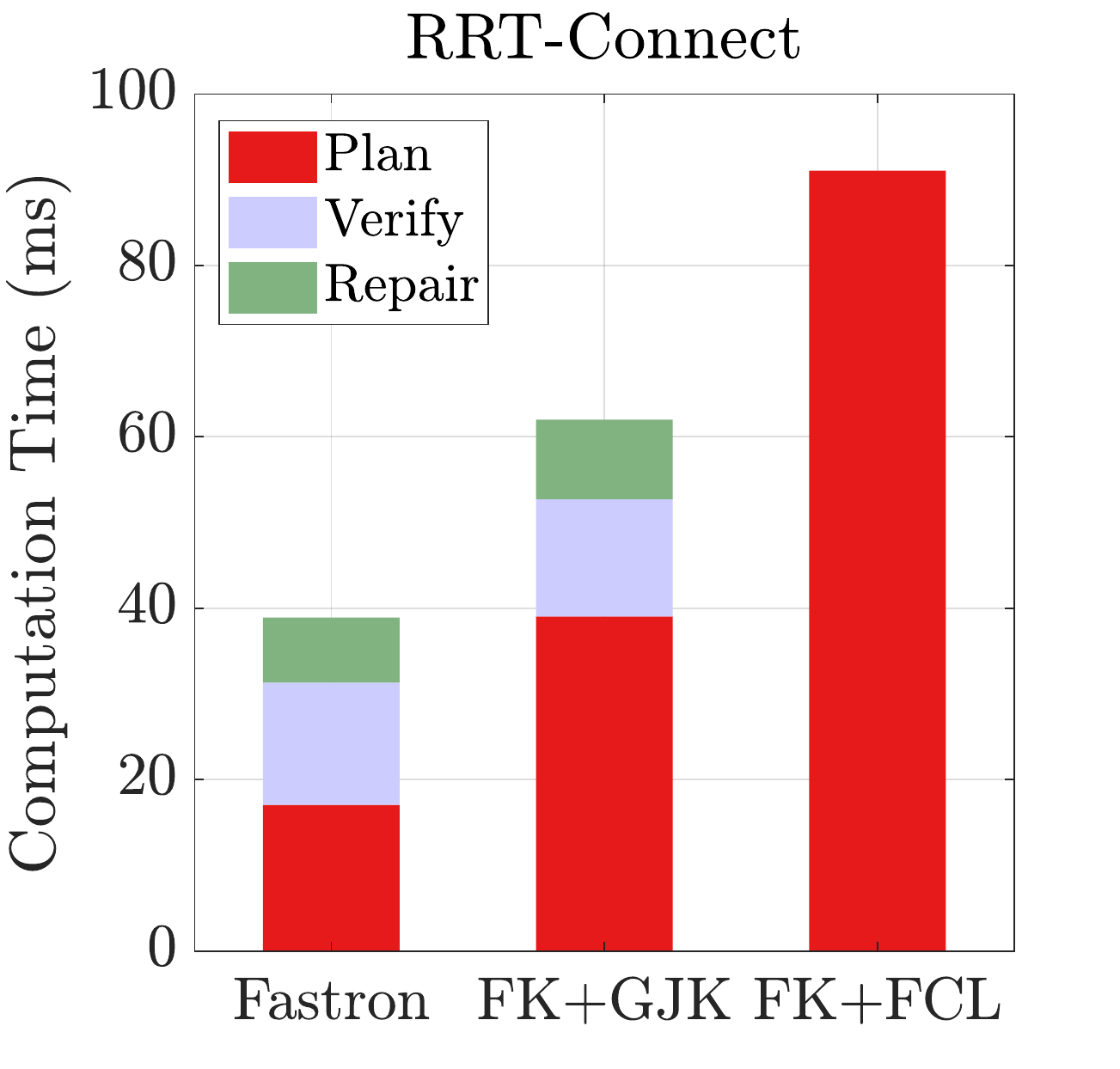}
    \end{subfigure}
    \hfill
	\begin{subfigure}[b]{0.245\linewidth}
      \includegraphics[width=\linewidth,trim={0cm 0cm 0.8cm 0cm},clip]{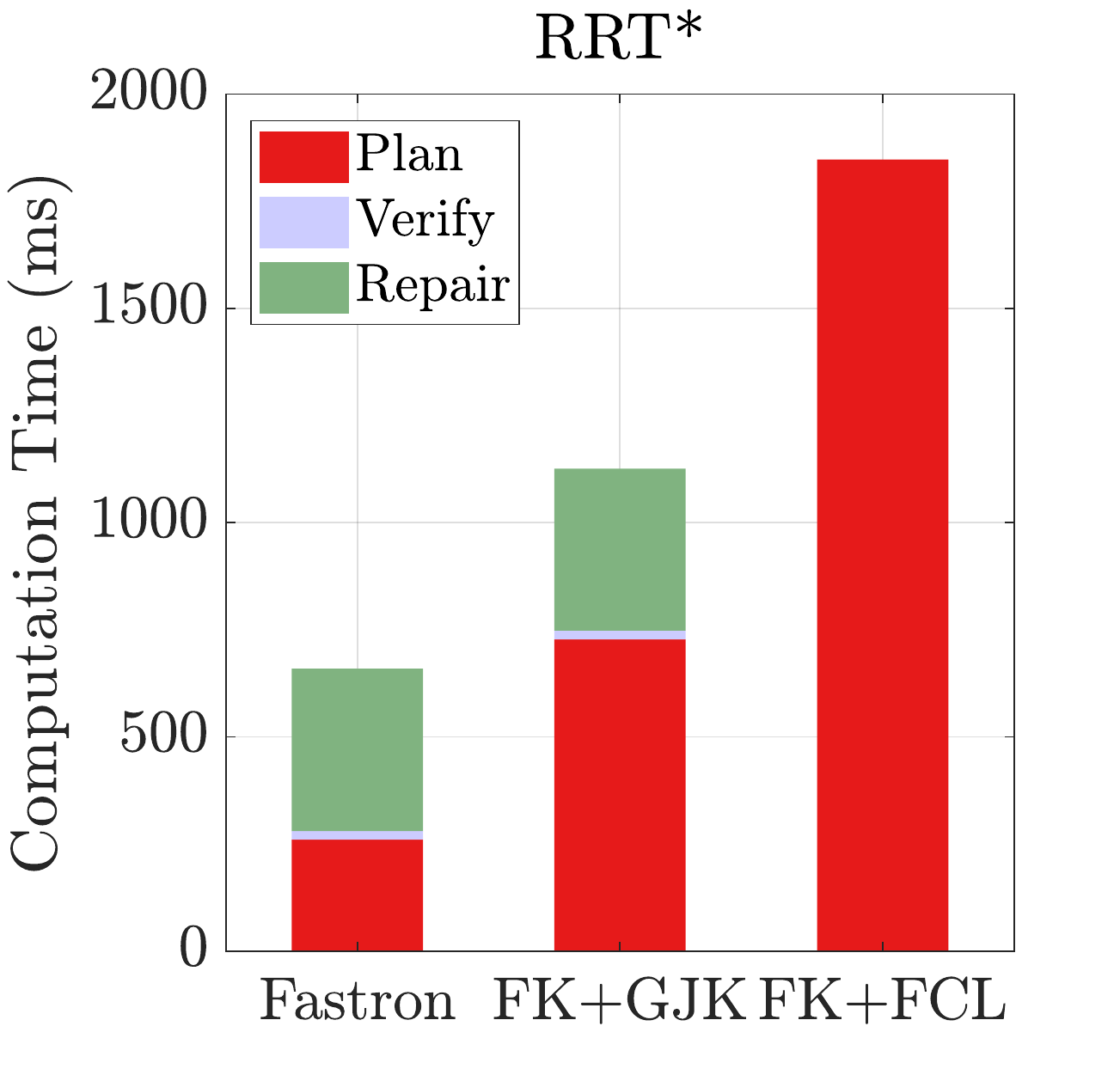}
    \end{subfigure}
    \hfill
    \begin{subfigure}[b]{0.245\linewidth}
      \includegraphics[width=\linewidth,trim={0cm 0cm 0.8cm 0cm},clip]{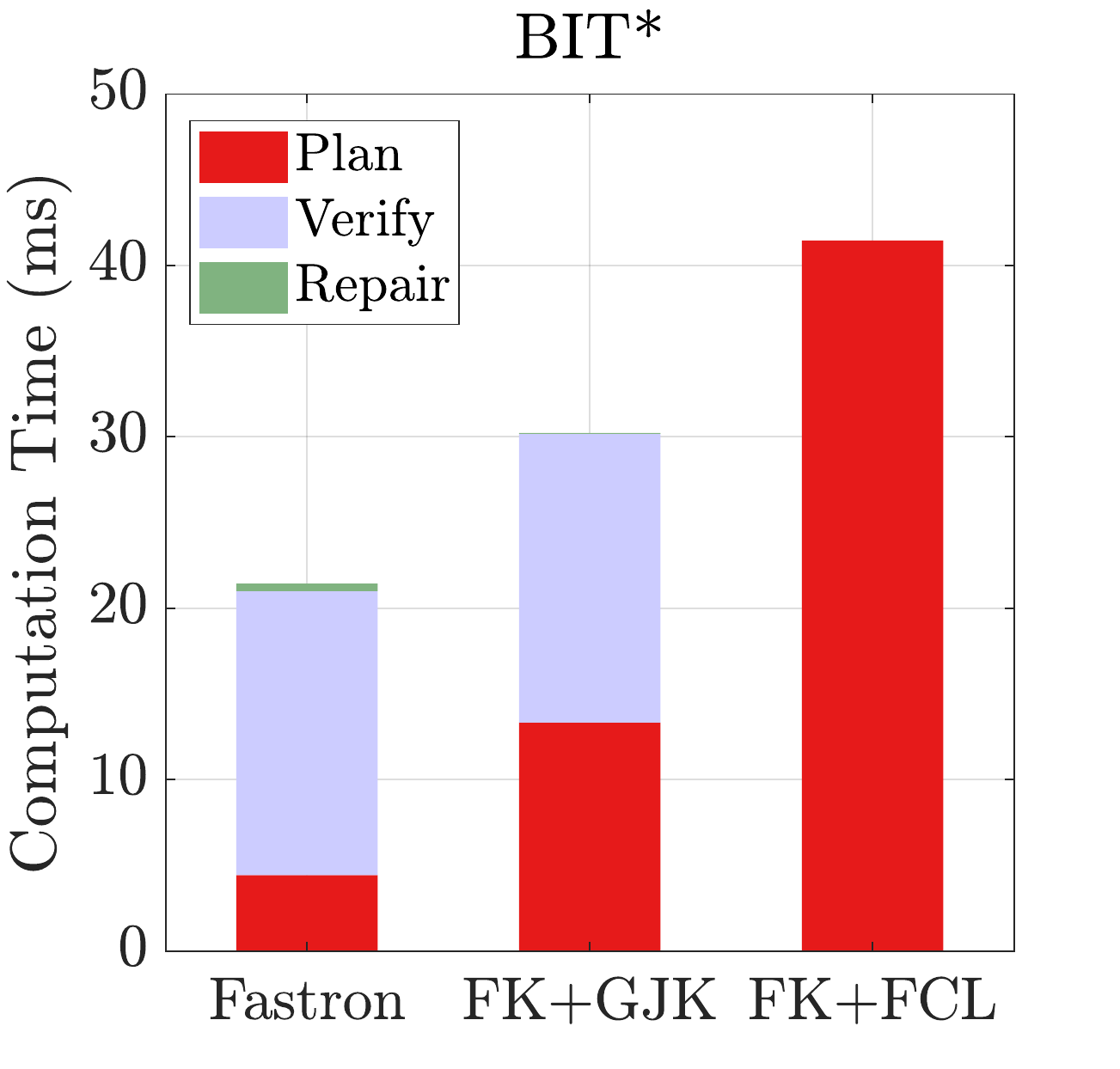}
    \end{subfigure}
    
    \vspace{20pt}
       
    \begin{subfigure}[b]{0.245\linewidth}
      \includegraphics[width=\linewidth,trim={0cm 0cm 0.8cm 0cm},clip]{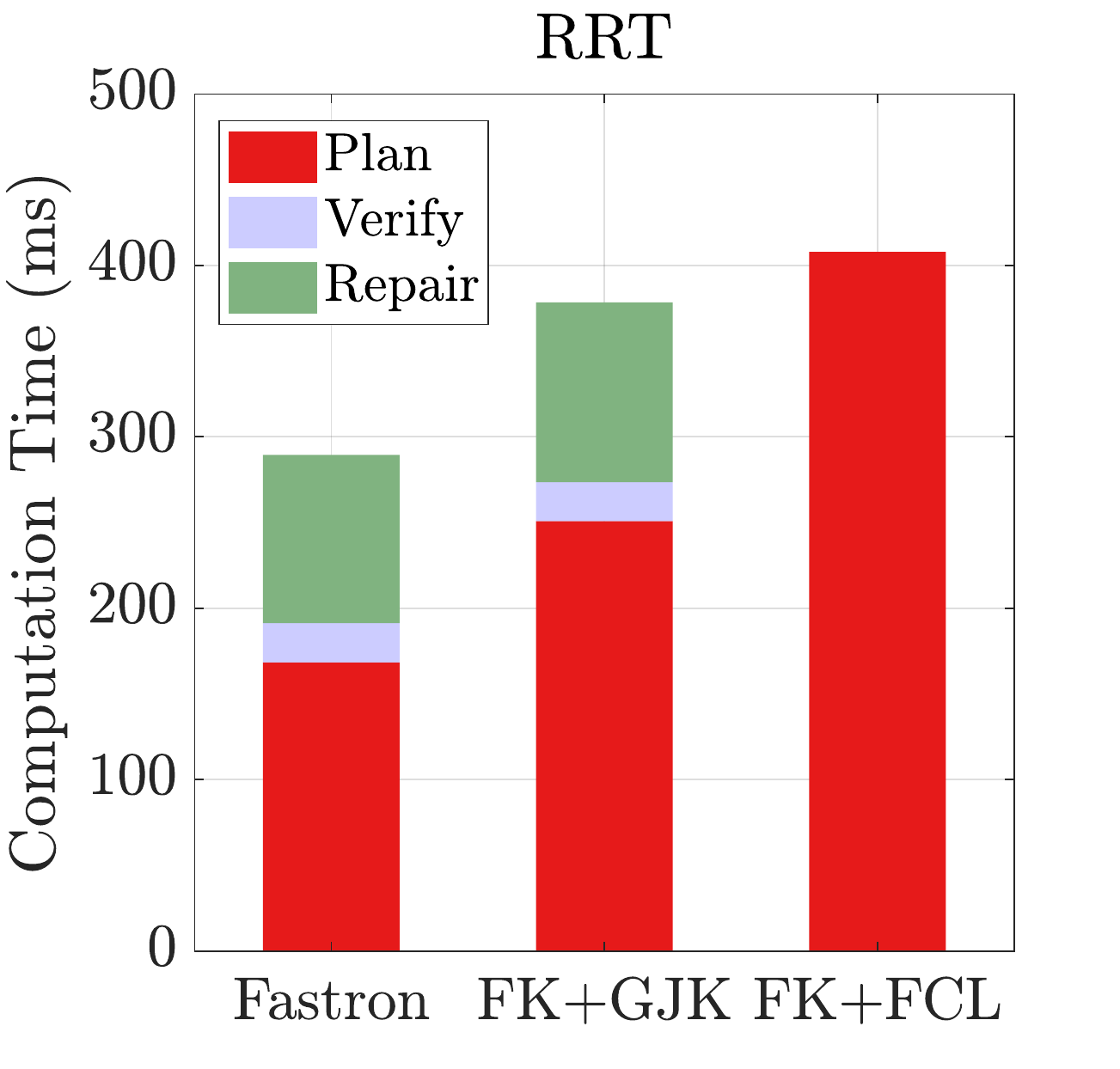}
    \end{subfigure}
    \hfill
    \begin{subfigure}[b]{0.245\linewidth}
      \includegraphics[width=\linewidth,trim={0cm 0cm 0.8cm 0cm},clip]{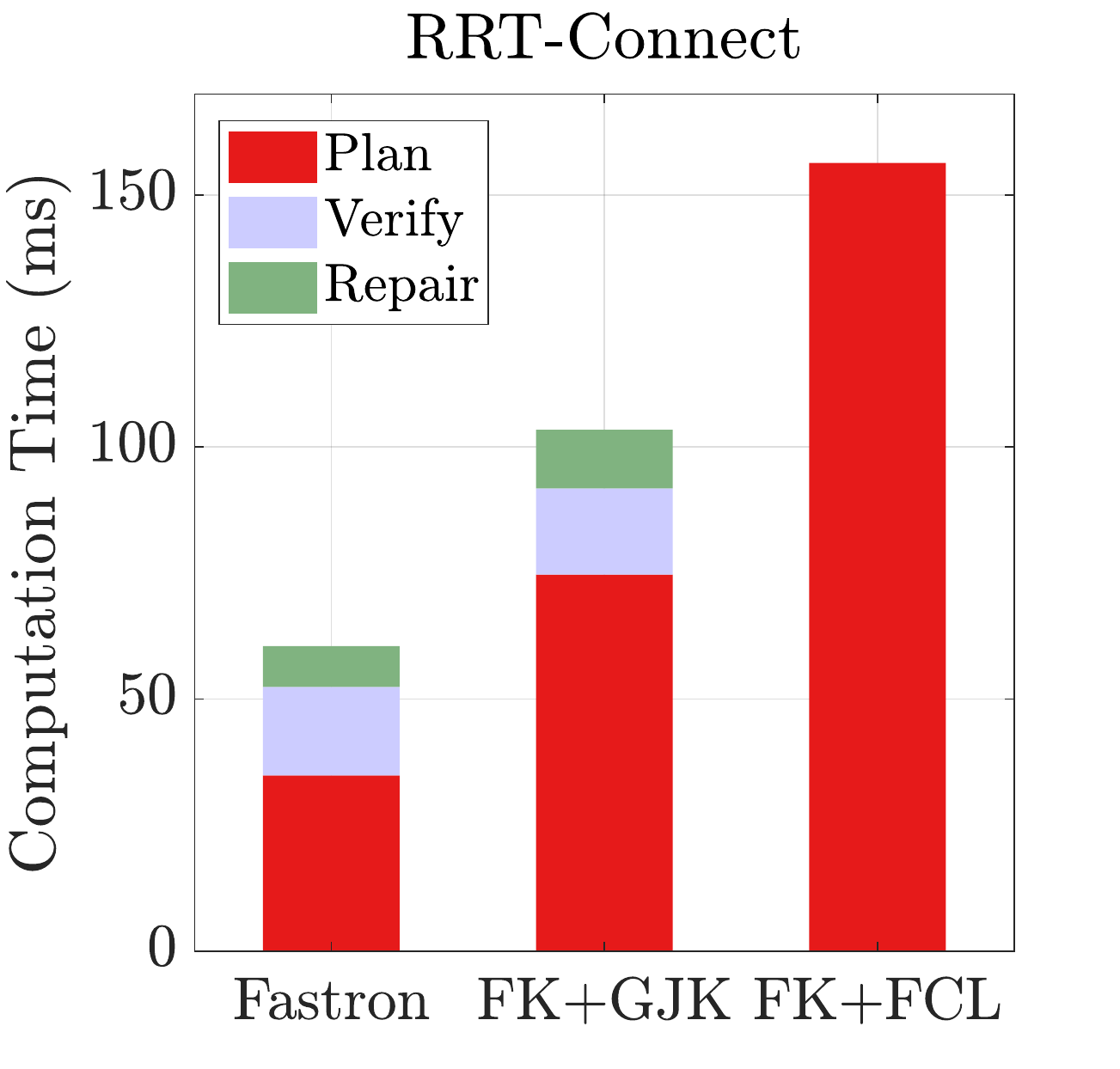}
    \end{subfigure}
    \hfill
	\begin{subfigure}[b]{0.245\linewidth}
      \includegraphics[width=\linewidth,trim={0cm 0cm 0.8cm 0cm},clip]{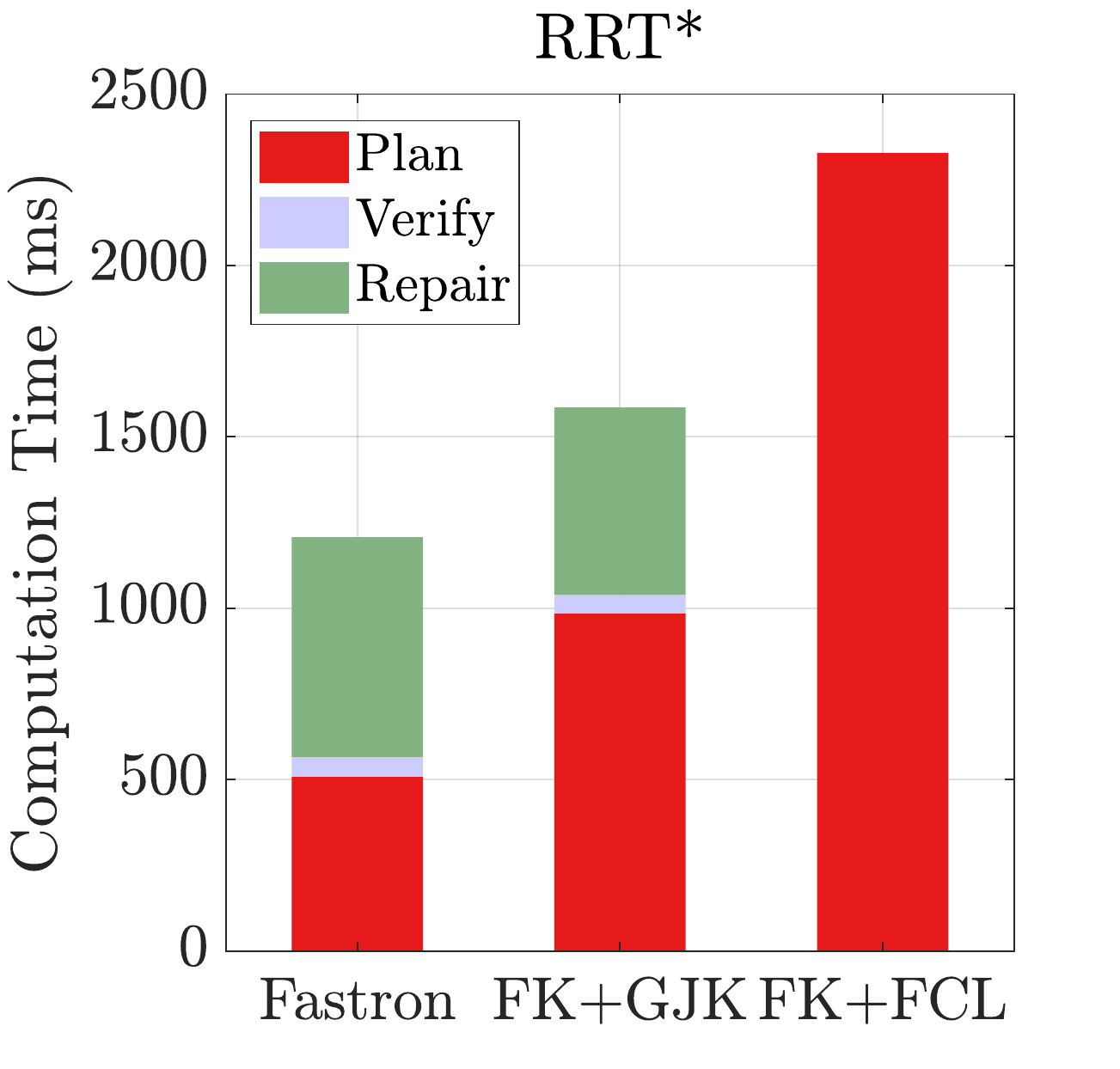}
    \end{subfigure}
    \hfill
    \begin{subfigure}[b]{0.245\linewidth}
      \includegraphics[width=\linewidth,trim={0cm 0cm 0.8cm 0cm},clip]{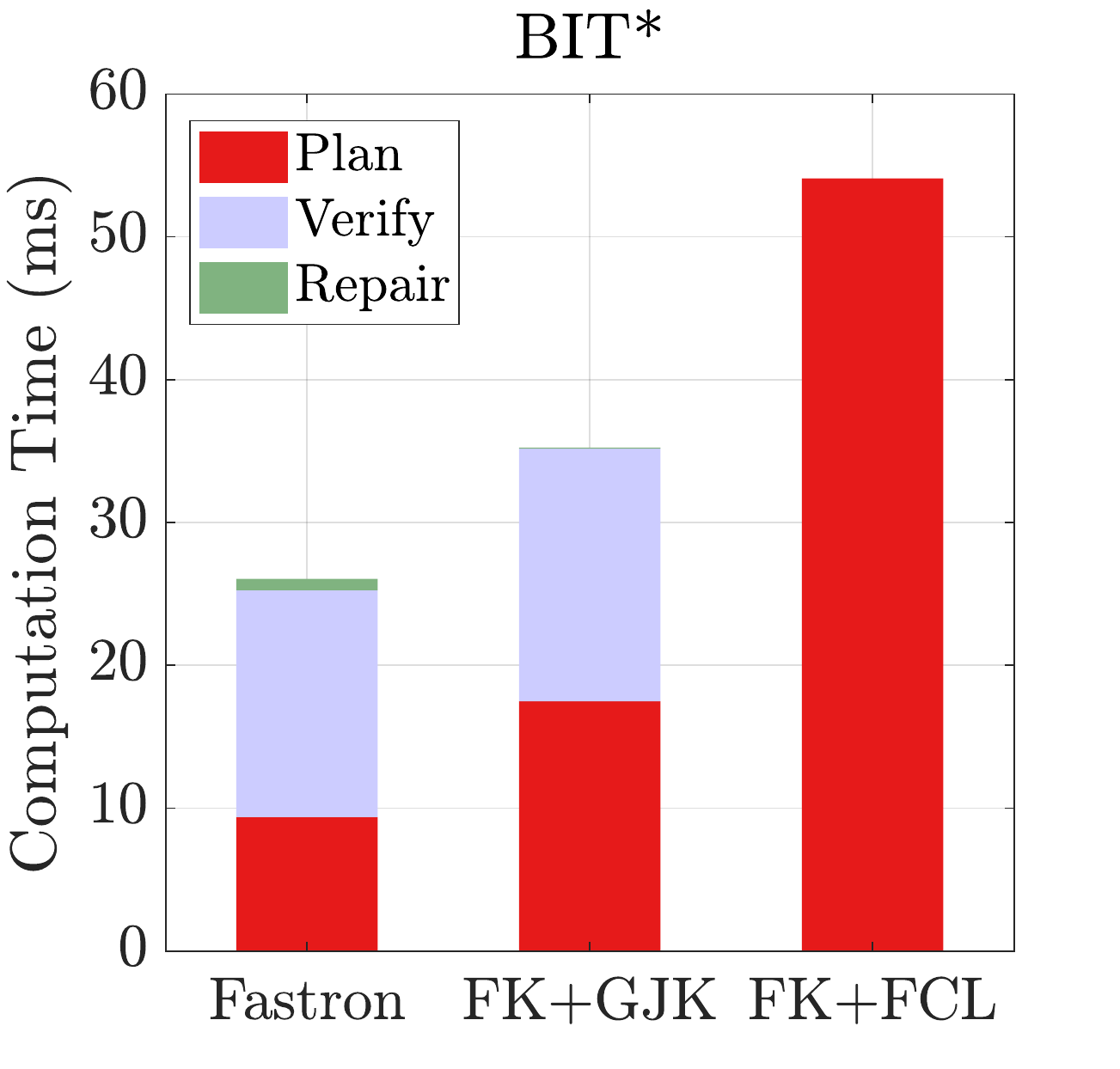}
    \end{subfigure}

   \vspace{20pt}
   
    \begin{subfigure}[b]{0.245\linewidth}
      \includegraphics[width=\linewidth,trim={0cm 0cm 0.8cm 0cm},clip]{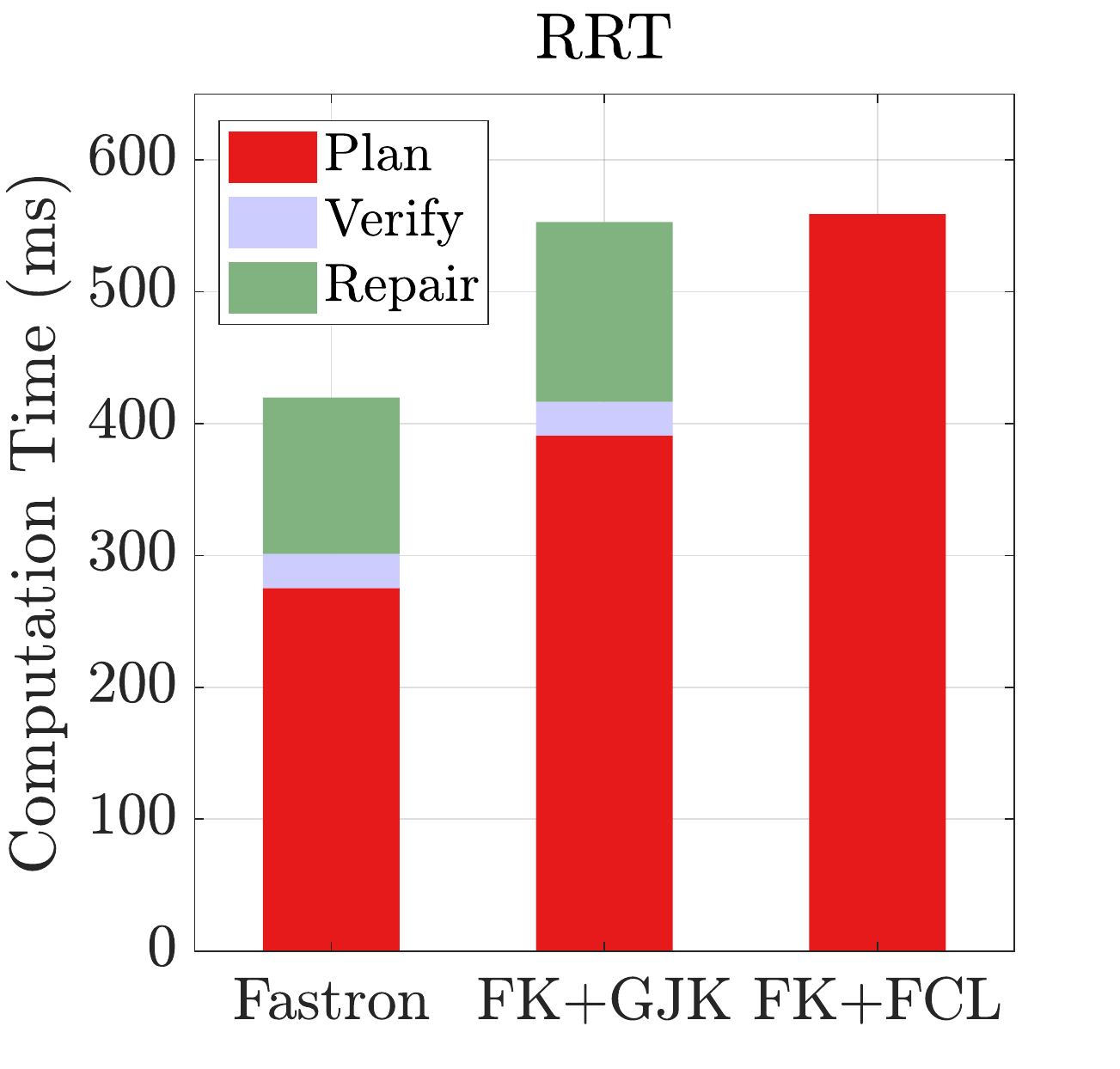}
    \end{subfigure}
    \hfill
    \begin{subfigure}[b]{0.245\linewidth}
      \includegraphics[width=\linewidth,trim={0cm 0cm 0.8cm 0cm},clip]{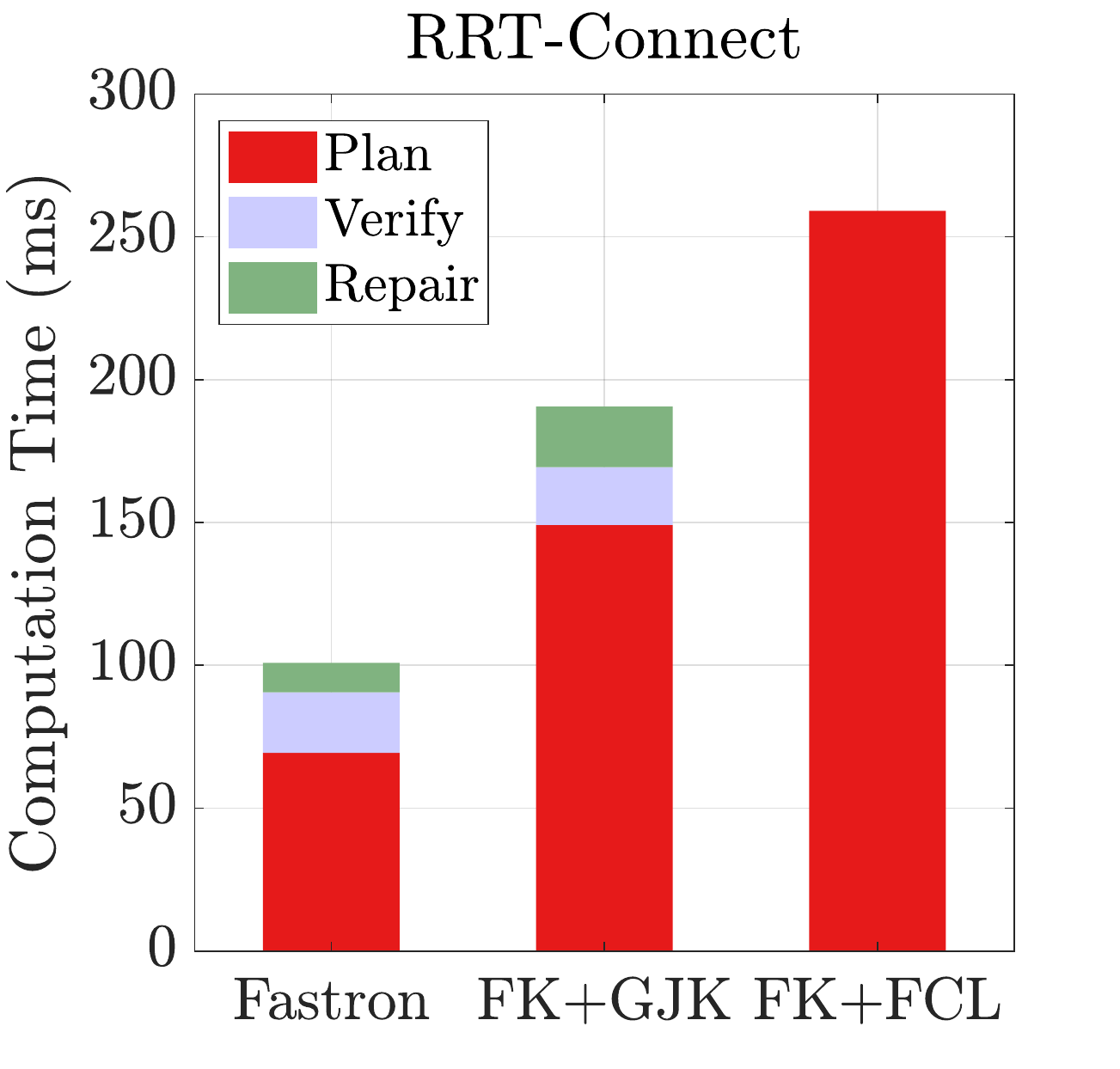}
    \end{subfigure}
    \hfill
	\begin{subfigure}[b]{0.245\linewidth}
      \includegraphics[width=\linewidth,trim={0cm 0cm 0.8cm 0cm},clip]{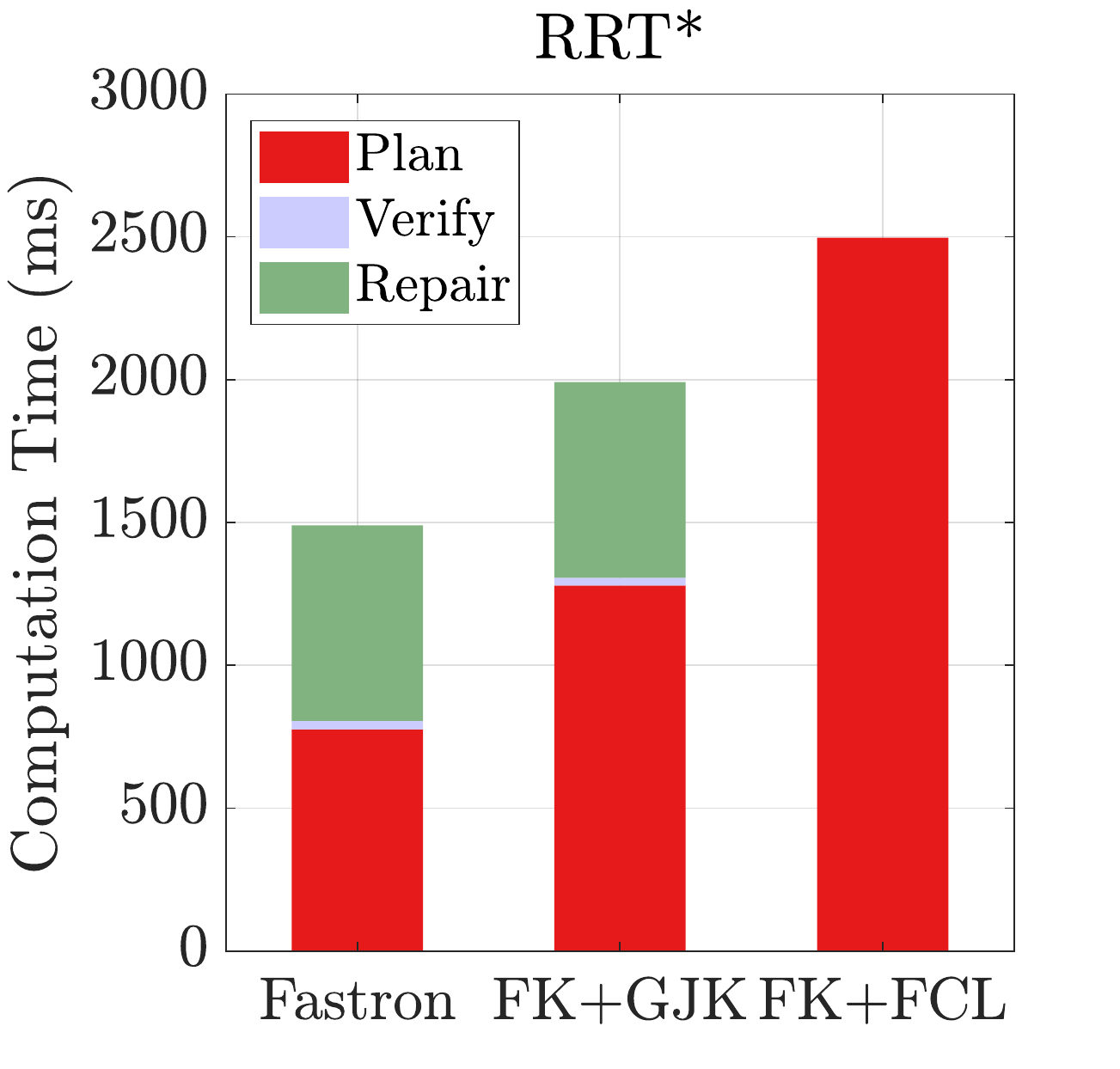}
    \end{subfigure}
    \hfill
    \begin{subfigure}[b]{0.245\linewidth}
      \includegraphics[width=\linewidth,trim={0cm 0cm 0.8cm 0cm},clip]{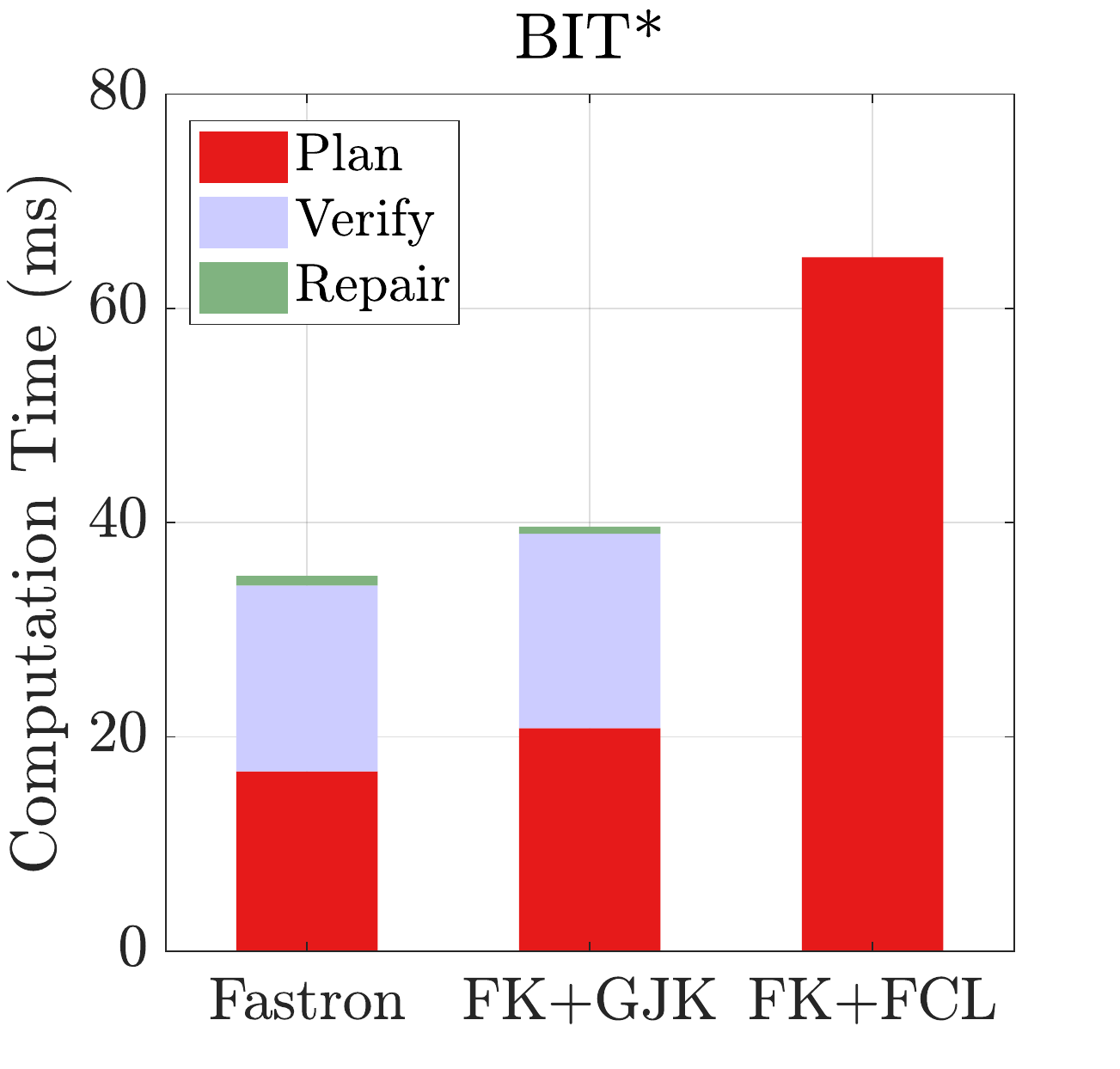}
    \end{subfigure}
    
    \caption{Timings for motion planning for the Baxter robot's right arm with one workspace obstacle using RRT, RRT-Connect, RRT*, and BIT* using Fastron, GJK, and FCL for collision detection. The three rows correspond to planning using only the first 4 DOF, only the first 6 DOF, and all 7 DOF of the arm, respectively. Forward kinematics (FK) are included in the timings for the geometry- and kinematics-based collision detectors.}
    \label{fig:motionPlanTimings}
\end{figure*}

\begin{figure*}[!t]
    \centering
    \begin{subfigure}[b]{0.245\linewidth}
      \includegraphics[width=\linewidth,trim={0cm 0cm 0.8cm 0cm},clip]{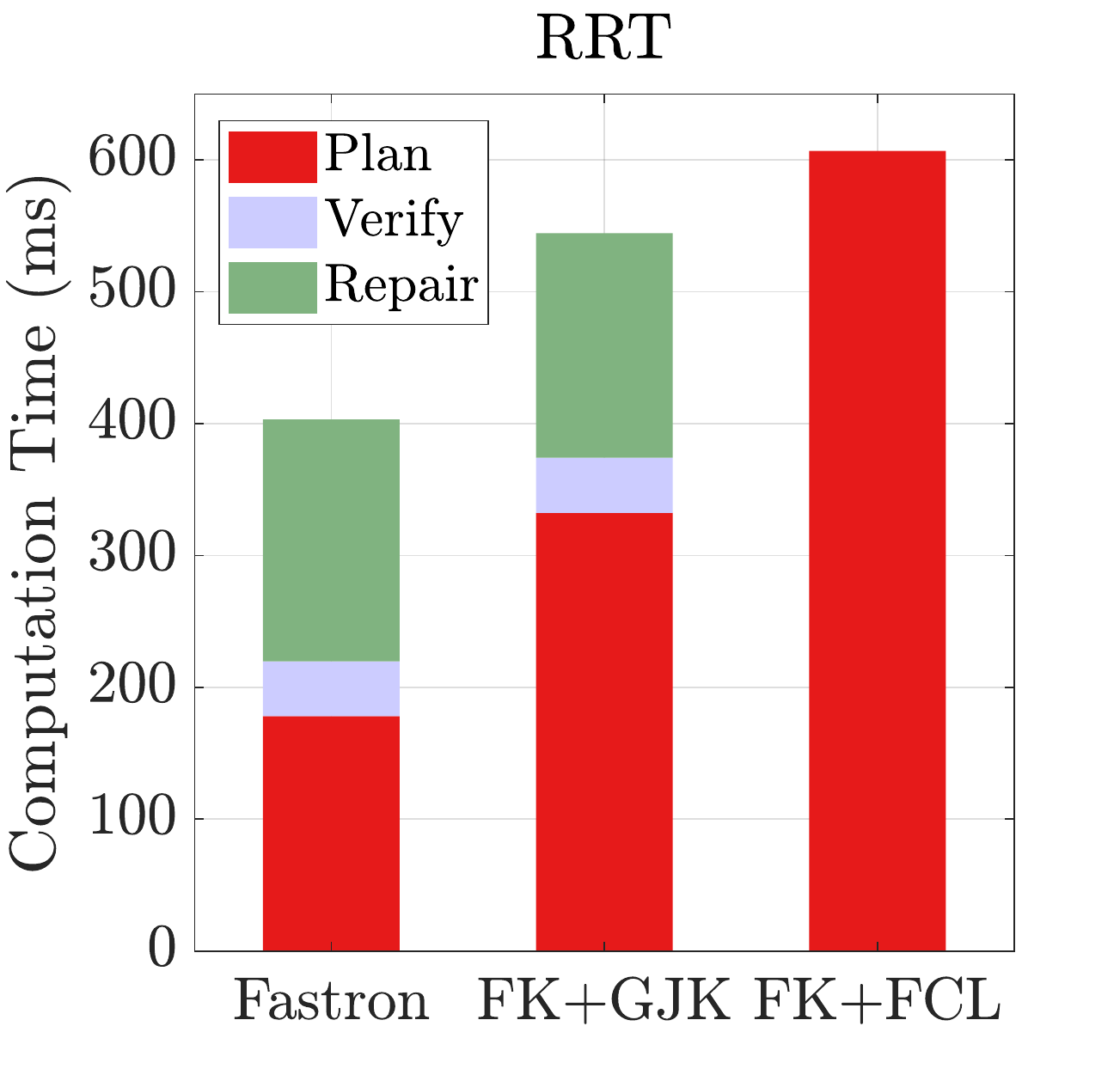}
    \end{subfigure}
    \hfill
    \begin{subfigure}[b]{0.245\linewidth}
      \includegraphics[width=\linewidth,trim={0cm 0cm 0.8cm 0cm},clip]{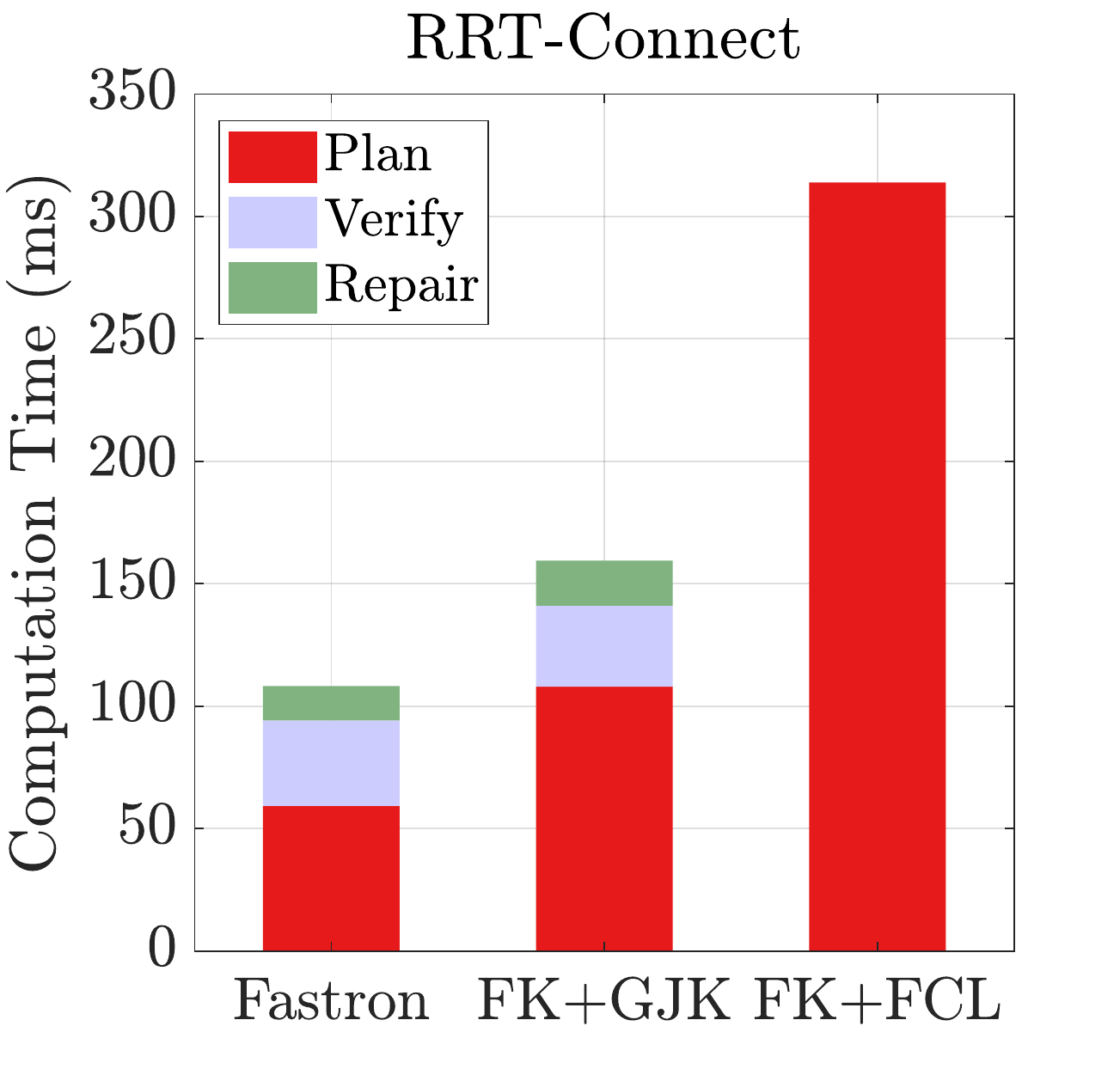}
    \end{subfigure}
    \hfill
	\begin{subfigure}[b]{0.245\linewidth}
      \includegraphics[width=\linewidth,trim={0cm 0cm 0.8cm 0cm},clip]{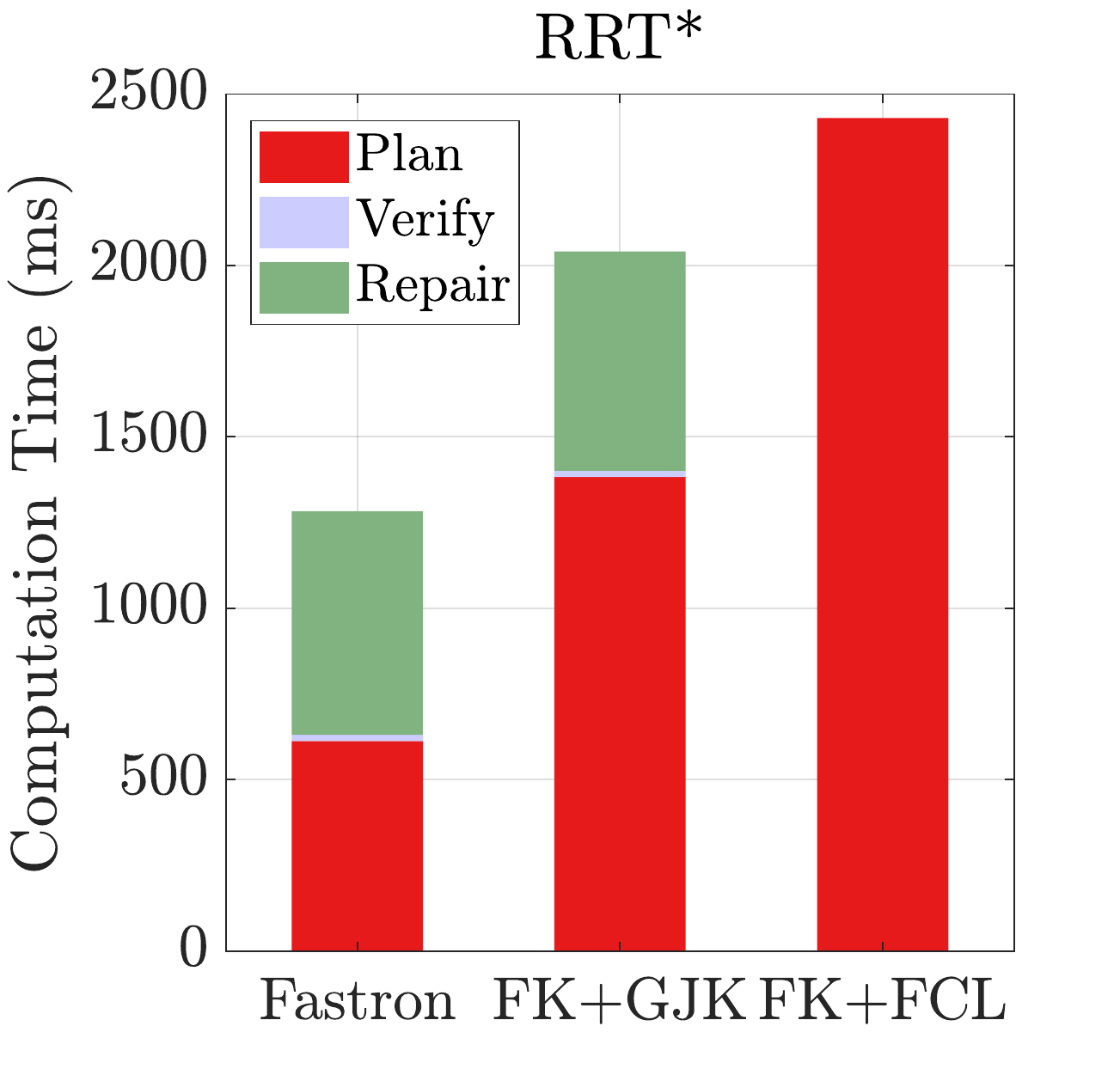}
    \end{subfigure}
    \hfill
    \begin{subfigure}[b]{0.245\linewidth}
      \includegraphics[width=\linewidth,trim={0cm 0cm 0.8cm 0cm},clip]{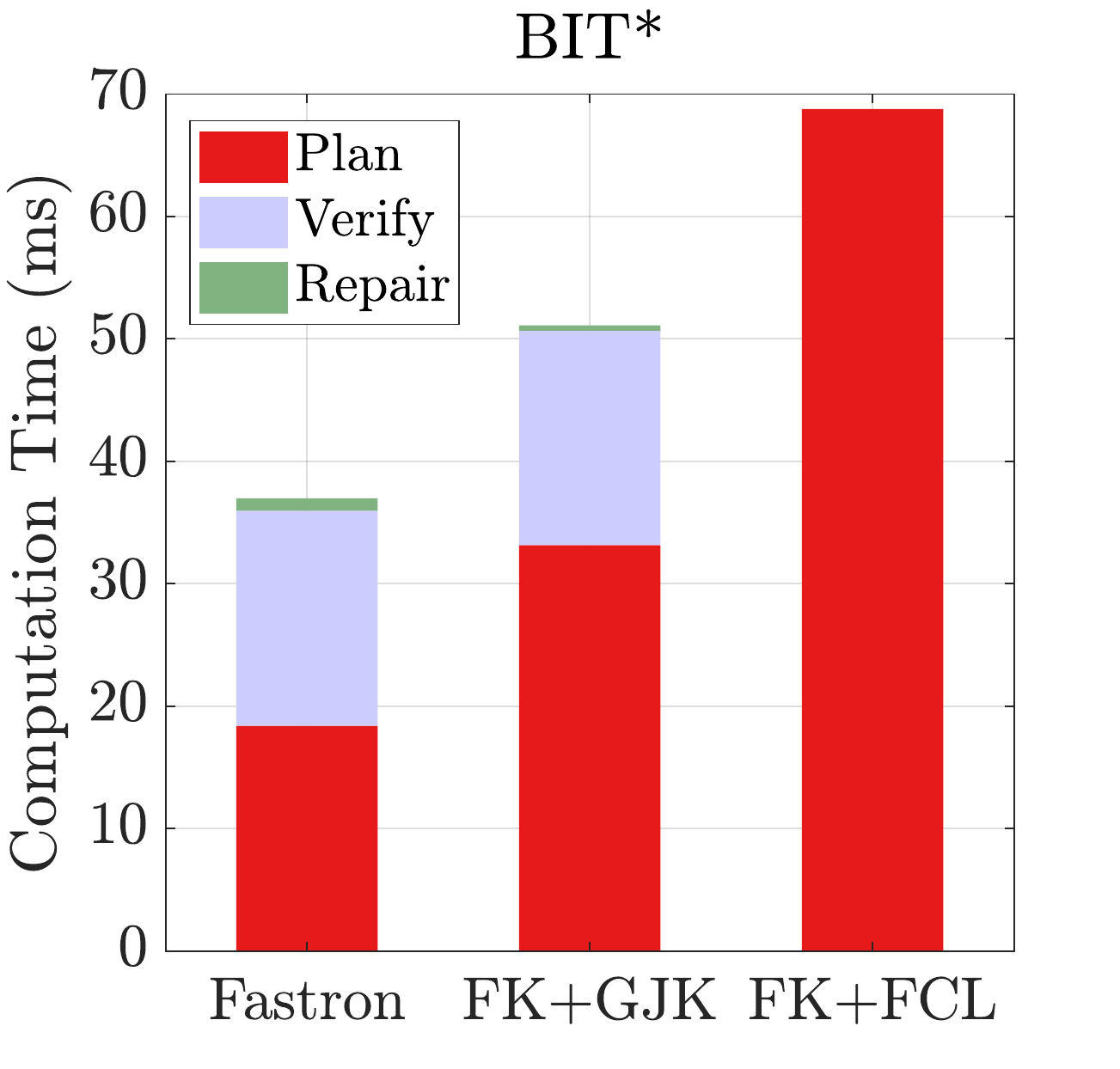}
    \end{subfigure}
    \caption{Timings for motion planning for the Baxter robot's 7 DOF right arm with three workspace obstacles using RRT, RRT-Connect, RRT*, and BIT* using Fastron, GJK, and FCL for collision detection. Forward kinematics (FK) are included in the timings for the geometry- and kinematics-based collision detectors.}
    \label{fig:motionPlanTimingsMultiobs}
\end{figure*}

Fig. \ref{fig:numObsSweep} shows the query times for Fastron, GJK, and FCL with respect to the number of workspace obstacles for up to 100 obstacles. The proportion of $\mathcal{C}_{obs}$ samples (estimated using the average proportion of test samples with the $\mathcal{C}_{obs}$ label) is also included as an explanatory variable. As geometry-based methods, GJK's and FCL's query times both increase as the number of obstacles increase (though the increase in timing for FCL is much less significant), which makes sense as the number of comparisons required increases with more obstacles in the workspace. On the other hand, as a kernel-based method, Fastron's query times increase before \textit{decreasing} with respect to the number of obstacles. For larger numbers of obstacles, Fastron provides collision status results up to an order of magnitude faster than FCL and almost 20 times faster than GJK. The maximum Fastron query time occurs when $\mathcal{C}_{obs}$ occupies approximately $50\%$ of the C-space. This result makes intuitive sense because fewer support points are required when one class is significantly more prevalent than the other, but more support points are required when both classes are roughly equally present.

\subsection{Performance in Motion Planning Application}
\subsubsection{Description of Experiment}
As motion planning is one application that requires frequent collision checks, we apply Fastron to motion planning for one of the Baxter arms to see the effect on the computation time required to generate a feasible plan. We use the same environments methods as used in Section \ref{changingEnvironment}. As proxy collision detection timings were roughly the same for both Fastron (FCL) and Fastron (GJK), we only provide results for Fastron (FCL) in this section.

%\begin{figure}[!ht]
%  \centering
%  \includegraphics[width=\linewidth]{}
%  \caption{A toy example demonstrating the outcome of the verify and repair steps performed after using approximate collision detectors for motion planning.}
%  \label{fig:repairStepExample}
%\end{figure}

The experiments involve using the various collision detection methods in standard motion planners. Each trial involved incrementally moving the obstacles in the robot's reachable workspace and generating a new motion plan (using each collision detection method) from scratch each time the obstacle is in a new position. The start and goal configurations are randomly generated such that the arm must move from one side of the workspace to the other, and are only regenerated after a motion plan has been generated using each collision detection method. The OMPL library \cite{Sucan2012} is used to handle the motion planning, and we fill in the state validity checking routine with one of the three collision checking methods. We select RRT \cite{LaValle2008}, RRT-Connect \cite{Kuffner2000}, RRT* \cite{Karaman2011}, and BIT* \cite{Gammell2015} to demonstrate the performance of each collision detection method. RRT and its bidirectional variant RRT-Connect are probabilistically complete motion planners \cite{Choset2005} that terminate once a path is found. RRT* and BIT* are optimizing planners that usually continue to search C-space for a shorter path after an initial feasible path has been found. As we are interested more in generating feasible plans than optimal plans, we terminate RRT* and BIT* once a feasible plan is found, which means the resulting path may not be close to optimal. We include these results to show how quickly the initial path can be found upon which RRT* and BIT* may improve.

Since Fastron and GJK are approximations to true collision detection, we anticipate that the plans that are generated with these approximate methods may actually include some $\mathcal{C}_{obs}$ waypoints. Thus, for Fastron- and GJK-based motion planning, we include verification and repair steps to correct any plan that includes invalid waypoints. In the verification step, we check each waypoint of the motion plan using FCL. In the repair step, invalid segments are first excised out of the motion plan, and then the same motion planner (with FCL as the collision detector) is called again to bridge the excised part of the motion plan. The combination of the verify and repair steps ensures that the final plan is truly collision-free, assuming a motion plan is found. %Fig. \ref{fig:repairStepExample} provides a toy example that illustrates the idea behind the verify and repair steps; the verify step identifies invalid waypoints and the repair step generates the repair path around the invalid region.

\subsubsection{Results for Planning with 4, 6, and 7 DOF}
Fig. \ref{fig:motionPlanTimings} shows the timing results for the various motion planners and different number of actuated DOF. It is clear that Fastron yields the fastest motion plans (up to 4 times faster for the 7 DOF case), and GJK provides the second fastest. Even with the verify and repair steps, Fastron provides the fastest solutions. The verification and repair steps take approximately the same time for both Fastron- and GJK-based motion planning in all cases, so the speed improvement of Fastron over GJK is due to its initial plan. Using GJK in place of FCL did not seem to significantly improve the timing for RRT when including the verify and repair steps, but the timings did improve for the other motion planning methods.

Since RRT* typically took a long time to generate feasible plan using any of the motion planning methods, the verification step took a negligible amount of time compared to the planning and repairing time for both Fastron- and GJK-based planning. On the other hand, planning and repairing is fast when using BIT*, so verification appears to take a larger proportion of time for Fastron- and GJK-based planning.

The motion planning times with three workspace obstacles are shown in Fig. \ref{fig:motionPlanTimingsMultiobs}. Once again, even with the verify and repair steps, using Fastron for collision checking yields the fastest collision-free motion plans. The speed improvement of Fastron is due to its initial planning time (up to 6 times faster). 

As the total amount of planning time is lowest for Fastron for all tested motion planning algorithms, we anticipate that using Fastron will improve planning times for other sampling-based motion planners as well and may provide a fast initial path for optimizing planners.

\section{Further Applications}
\begin{figure*}[t]
\centering
\begin{subfigure}[t]{0.49\linewidth}
	{\includegraphics[width=\linewidth,trim={2.9cm 1cm 2.8cm 3.5cm},clip]{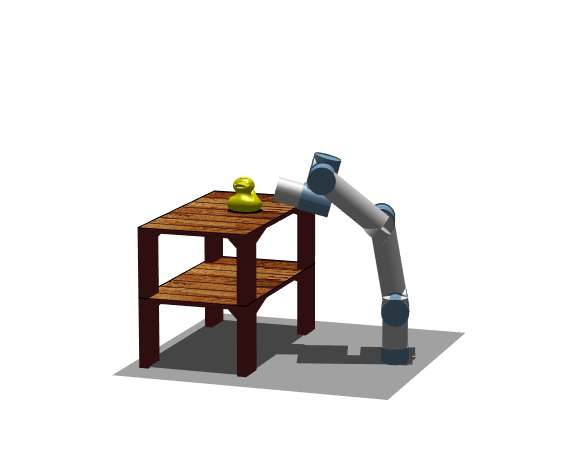}}
 \caption{}
  \label{fig:ur3Example}
  \end{subfigure}
    \hfill
\begin{subfigure}[t]{0.49\linewidth}
	{\includegraphics[width=\linewidth,trim={0.8cm 2cm 0.8cm 1.8cm},clip]{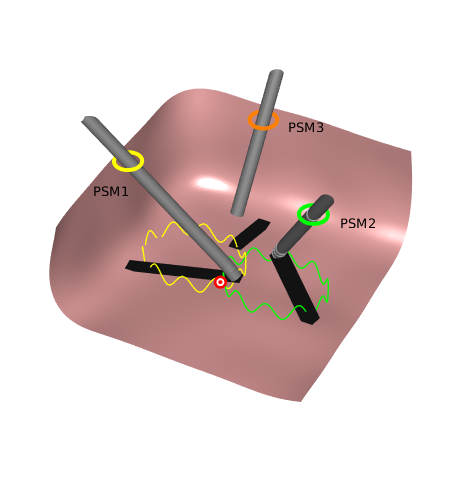}}
  \caption{}
  \label{fig:daVinciExample}
\end{subfigure}
\caption{(a) Pick and place task using a UR3 in simulation. RRTs using Fastron for collision detection are used for motion planning. (b) An autonomous assistant (PSM3) reaching a target among human-controlled manipulators (PSM1 and PSM2) in da Vinci\textregistered\ simulation. RRTs using Fastron for collision detection are used for motion planning.}
\end{figure*}

\subsection{Pick and Place}
We apply the Fastron algorithm for a pick and place task using a Universal Robots UR3 robot and a shelf. This robotic system is designed for tabletop or workbench usage, possibly in the presence of humans, thereby causing a changing enviroment. In our this case, we have the end effector reach from a lower shelf to a higher shelf (without hitting the shelf), as shown in Fig. \ref{fig:ur3Example}. We do not simulate the manipulation part of the task and thus no gripper is shown.

We generate motion plans for the 6 DOF robot using an RRT. In a MATLAB environment, solving an RRT took on average 334.7 ms when using GJK for collision detections. When using Fastron for collision checking, solving an RRT took on average 119.7 ms, which includes verification and repairing where needed. These timings show that Fastron provides almost a threefold speed improvement for this application of motion planning.

\subsection{Autonomous Surgical Assistant}
We apply the Fastron algorithm in the automation of an autonomous surgical assistant in the context of robot-assisted surgery using a da Vinci\textregistered\ Surgical System. For this simulation, we assume that two of the three patient-side manipulators (PSMs) are controlled by a human surgeon, while the third PSM must autonomously perform a task without colliding with the surgeon's movements. Example tasks that this type of assistant may perform autonomously or under supervision of a human include suction, irrigation \cite{Yuh2013}, cautery \cite{Kuntz2004}, and ultrasound transducer placement \cite{Mohareri2014}.

For our scenario, we let PSM1 and PSM2 execute volatile motions that sometimes impede PSM3 from reaching a target configuration. Fig. \ref{fig:daVinciExample} shows an example where PSM1 and PSM2 follow the circular paths while PSM3 attempts to reach the target between the paths. All PSMs are constrained by a remote center of motion, represented as a ring constraint (yellow, green, and orange). PSM3 is disallowed from colliding with the surgeon-controlled arms and any surrounding tissues. In a real system (such as a first-generation da Vinci\textregistered\ Surgical System with da Vinci Research Kit boxes \cite{Chen2013}), joint positions of all PSMs will be known and locations of surrounding tissues may be known via preoperative or intraoperative imaging. When excluding wristed motions, PSM3's controllable DOF include its pitch, yaw, and insertion, meaning we are working with 3-dimensional Fastron models. Excluding the wristed motion is permissible when controlling PSM3 because only three DOF are required to reach the target. As Fastron is based on snapshots of workspace, we use a padded version of PSM3 to improve its responsiveness to PSM1 and PSM2's motions.

While numerous motion planning methods may be chosen to dictate PSM3's motions, we are able to repeatedly solve an RRT from scratch at a 56 Hz rate (ignoring cases where PSM3 is already at its target and no planning is required) when using Fastron for collision detection in a MATLAB environment. In comparison, RRTs can be solved at a 10 Hz rate when using GJK for collision detection, demonstrating the Fastron enables a more responsive autonomous assistant.
%
%\subsection{Artificial Potential Fields in C-space}
%We apply Fastron to the blue 3 DOF manipulators shown in Figs. \ref{fig:twoArmExample} and \ref{fig:dodgeObsExample} which are try to avoid the motions of the red arm and moving boxes, respectively. Obstacle avoidance through artificial potential fields is a technique that attempts to move a robot away from obstacles and toward a goal by defining repulsive and attractive ``forces'' based on the idea of potential energy \cite{Khatib1985}. We apply this technique in C-space by defining repulsive forces that move the current robot configuration away from in-collision support points and an attractive force moving the robot toward a configuration that reaches the blue target shown in Figs.  \ref{fig:twoArmExample} and \ref{fig:dodgeObsExample}. As no collisions occur in this example, it shows a possible use case of Fastron for potential field-based obstacle avoidance.

\section{Conclusion}
In this paper, we theoretically and empirically validate the Fastron algorithm for C-space modeling and proxy collision detection and demonstrate the algorithm's capabilities in accelerating sampling-based motion planning in static and continuously changing environments. Fastron allows collision detection to be performed faster than kinematics-based collision detectors, providing collision status results up to an \textit{order of magnitude} faster than state-of-the-art collision detection libraries, such as the Flexible Collision Library \cite{Pan2012}, the default collision detector used in the MoveIt! motion planning framework, for a 7 DOF Baxter arm with multiple moving workspace obstacles.

Querying our faster proxy collision detection model allows faster execution of collision detection-intensive applications such as motion planning, generating initial motion plans via bidirectional RRTs (implemented with OMPL) 6 times faster than with FCL for a 7 DOF Baxter arm in continuously changing environments.

We integrate the Fastron algorithm in motion planning for various simulated scenarios, such as a pick and place task in which collision-free motion plans were generated 3 times faster with Fastron than with GJK \cite{Gilbert1988}, a collision detector for convex shapes, in a MATLAB environment. We also apply Fastron in motion planning for an autonomous surgical assistant, achieving motion planning via RRTs among volatile human-controlled manipulators at a 56 Hz rate in a MATLAB environment.

We prove analytically that Fastron will always provide a model with positive margin for all training points in a finite number of iterations. We show that Fastron's training speed allows the algorithm to generate a C-space model over 100 times faster than a competing C-space modeling approach for a 4 DOF robot simulated in MATLAB.

%As with most machine learning techniques, limitations of Fastron occur when increasing dimensionality in the form of decreasing accuracy and increasing update and query times. We demonstrate in our motion planning applications how issues with accuracy may be overcome through verification and repairing motion plans.

All results are achieved with only CPU-based calculations. Model update and query times may be improved upon through parallelization or GPU operations, which we leave to future work.

\begin{appendix}
Consider the maximum margin optimization problem:
\begin{align}
\vec{\alpha}^* = &\argmin_{\vec{\alpha}} \frac{1}{2} \left\|\sum_i\vec{\alpha}_i\phi(\mathcal{X}_i)\right\|^2\quad \notag \\
&\qquad \text{subject to}\ \vec{y}_i\sum_j\vec{\alpha}_jk(\mathcal{X}_i,\mathcal{X}_j)\geq 1\ \forall i\\
= &\argmin_{\vec{\alpha}} \frac{1}{2} \vec{\alpha}^\mathsf{T}\vec{K\alpha}\quad \notag \\
&\qquad \text{subject to}\ \mathds{1} - \vec{YK\alpha}\leq \vec{0}
% &=\min_\alpha \frac{1}{2} \alpha^TK\alpha\ \text{s.t.}\ g_i(\alpha) = 1-y_i\sum_jk(x_j,x_i)\alpha_j\leq 0\ \forall i
\end{align}

% We can write the inequality constraints in vector form as:
% \begin{align}
% \vec{g} = \mathds{1} - \vec{YK\alpha},\ \vec{Y} = diag(\vec{y}),\ \vec{g}_i\geq 0\ \forall i
% \end{align}

The Lagrangian is thus defined as:
\begin{align}
%& L(\vec{\alpha},\vec{\lambda}) = \frac{1}{2} \vec{\alpha}^\mathsf{T}\vec{K\alpha} + \vec{\lambda}^\mathsf{T}\vec{g}\ &\text{subject to}\ &\vec{\lambda}_i\geq 0 \\
& L(\vec{\alpha},\vec{\lambda}) = \frac{1}{2} \vec{\alpha}^\mathsf{T}\vec{K\alpha} + \vec{\lambda}^\mathsf{T}\left(\mathds{1} - \vec{YK\alpha}\right)\ \notag \\
&\qquad \text{subject to}\ \vec{\lambda}_i\geq 0\ \forall i
\end{align}

Consider the dual problem:
\begin{align}
% &\max_{\vec\mu}q(\vec\mu) = \max_{\vec\mu}\min_{\vec{\alpha}} L(\vec{\alpha}, \vec\mu) \\
\vec{\lambda}^* &= \argmax_{\vec\lambda\geq \vec 0}\min_{\vec{\alpha}} L(\vec{\alpha}, \vec\lambda)
\end{align}

With no constraints on $\vec{\alpha}$, the minimum of $L(\vec{\alpha},\vec{\lambda})$ given $\vec{\lambda}$ may be found through its gradient:
\begin{align}
& \frac{\partial L(\vec{\alpha}, \vec\lambda)}{\partial \vec \alpha} = \vec{K\alpha} - \vec{KY\lambda}\\
& \Rightarrow \vec{\alpha}^* = \vec{Y\lambda} \\
& \Rightarrow \min_{\vec{\alpha}} L(\vec{\alpha}, \vec\lambda) = -\frac{1}{2}\vec{\lambda}^\mathsf{T}\vec{YKY}\vec\lambda + \mathds{1}^\mathsf{T}\vec{\lambda}
\end{align}

Thus, the problem can be rewritten as:
\begin{align}
\lambda^* &= \argmin_{\vec{\lambda}\geq \vec{0}}\frac{1}{2}\vec{\lambda}^\mathsf{T}\vec{YKY}\vec\lambda - \mathds{1}^\mathsf{T}\vec{\lambda} \\
\Rightarrow \vec{\alpha}^* &= \argmin_{\vec{Y\alpha}\geq \vec{0}}\frac{1}{2}\vec{\alpha}^\mathsf{T}\vec{K}\vec\alpha - \mathds{1}^\mathsf{T}\vec{Y\alpha} \\
&= \argmin_{\vec{Y\alpha}\geq \vec{0}}\frac{1}{2}\vec{\alpha}^\mathsf{T}\vec{K}\vec\alpha - \vec{y}^\mathsf{T}\vec{\alpha}
\end{align}
\end{appendix}

\bibliographystyle{IEEEtran}
\bibliography{references}
\end{document}